\def\eqref#1{equation~\ref{#1}}
\def\1{\bm{1}}
\def\rvx{{\mathbf{x}}}
\DeclareMathAlphabet{\mathsfit}{\encodingdefault}{\sfdefault}{m}{sl}
\SetMathAlphabet{\mathsfit}{bold}{\encodingdefault}{\sfdefault}{bx}{n}
\def\gX{{\mathcal{X}}}
\DeclareMathOperator*{\argmin}{arg\,min}
\newtheorem{theorem}{Theorem}
\newtheorem{corollary}[theorem]{Corollary}
\newcommand{\cmark}{\ding{51}}%
\newcommand{\xmark}{\ding{55}}%
\title{Training Unbiased Diffusion Models \\ From Biased Dataset}
\author{Yeongmin Kim$^1$\thanks{Correspondence to Yeongmin Kim
$\left<\text{alsdudrla10@kaist.ac.kr}\right>$,$^1$KAIST, $^2$Summary.AI } , Byeonghu Na$^1$,  Minsang Park$^1$, JoonHo Jang$^1$, Dongjun Kim$^{1}$,\\
\textbf{Wanmo Kang$^1$, Il-Chul Moon$^{1,2}$}}
\begin{document}

\maketitle

\begin{abstract}
With significant advancements in diffusion models, addressing the potential risks of dataset bias becomes increasingly important. Since generated outputs directly suffer from dataset bias, mitigating latent bias becomes a key factor in improving sample quality and proportion. This paper proposes time-dependent importance reweighting to mitigate the bias for the diffusion models. We demonstrate that the time-dependent density ratio becomes more precise than previous approaches, thereby minimizing error propagation in generative learning. While directly applying it to score-matching is intractable, we discover that using the time-dependent density ratio both for reweighting and score correction can lead to a tractable form of the objective function to regenerate the unbiased data density. Furthermore, we theoretically establish a connection with traditional score-matching, and we demonstrate its convergence to an unbiased distribution. The experimental evidence supports the usefulness of the proposed method, which outperforms baselines including time-independent importance reweighting on CIFAR-10, CIFAR-100, FFHQ, and CelebA with various bias settings. Our code is available at \url{https://github.com/alsdudrla10/TIW-DSM}.

%
%
\end{abstract}

\section{Introduction}

Recent developments on diffusion models~\citep{song2020score, ho2020denoising} make it possible to generate high-fidelity images~\citep{dhariwal2021diffusion, pmlr-v202-kim23i}, and dominate generative learning frameworks. The diffusion models deliver promising sample quality in various applications, i.e. text-to-image generation ~\citep{rombach2022high, nichol2022glide}, image-to-image translation ~\citep{meng2021sdedit, zhou2024denoising}, and counterfactual generation~\citep{kim2022unsupervised, pmlr-v202-wang23ah}. As diffusion models become increasingly prevalent, addressing the potential risks on its $\textit{dataset bias}$ becomes more crucial, which had been less studied in the generative model community.

The dataset bias is pervasive in real world datasets, which ultimately affects the behavior of machine learning systems~\citep{tommasi2017deeper}. 
As shown in \Cref{1subfig:bias}, there exists a bias in the sensitive attribute in the CelebA~\citep{liu2015faceattributes} benchmark dataset. In generative modeling, the statistics of generated samples are directly influenced or even exacerbated by dataset bias~\citep{hall2022systematic,frankel2020fair}. The underlying bias factor is often left unannotated~\citep{torralba2011unbiased}, so it is a challenge to mitigate the bias in an unsupervised manner. Importance reweighting is one of the standard training techniques for de-biasing in generative models. \cite{choi2020fair} propose pioneering work in generative modeling by utilizing a pre-trained density ratio between biased and unbiased distributions. However, the estimation of density ratio is notably imprecise~\citep{rhodes2020telescoping}, leading to error propagation in training generative models.

We introduce a method called Time-dependent Importance reWeighting (TIW), designed for diffusion models. This method estimates the time-dependent density ratio between the perturbed biased distribution and the perturbed unbiased distribution using a time-dependent discriminator. We investigate the perturbation provides benefits for accurate estimation of the density ratio.  We introduce that the time-dependent density ratio can serve as a weighting mechanism, as well as a score correction. By utilizing these dual roles by density ratios, simultaneously; we render the objective function tractable and establish a theoretical equivalence with existing score-matching objectives from unbiased distributions.

We test our method on the CIFAR-10, CIFAR-100~\citep{Krizhevsky09learningmultiple}, FFHQ~\citep{karras2019style}, and CelebA datasets. We observed our method outperforms the time-independent importance reweighting and naive baselines in various bias settings.


\begin{figure}[t!]
     \centering
     \begin{subfigure}[b]{0.49\textwidth}
         \centering
         \includegraphics[width=\textwidth]{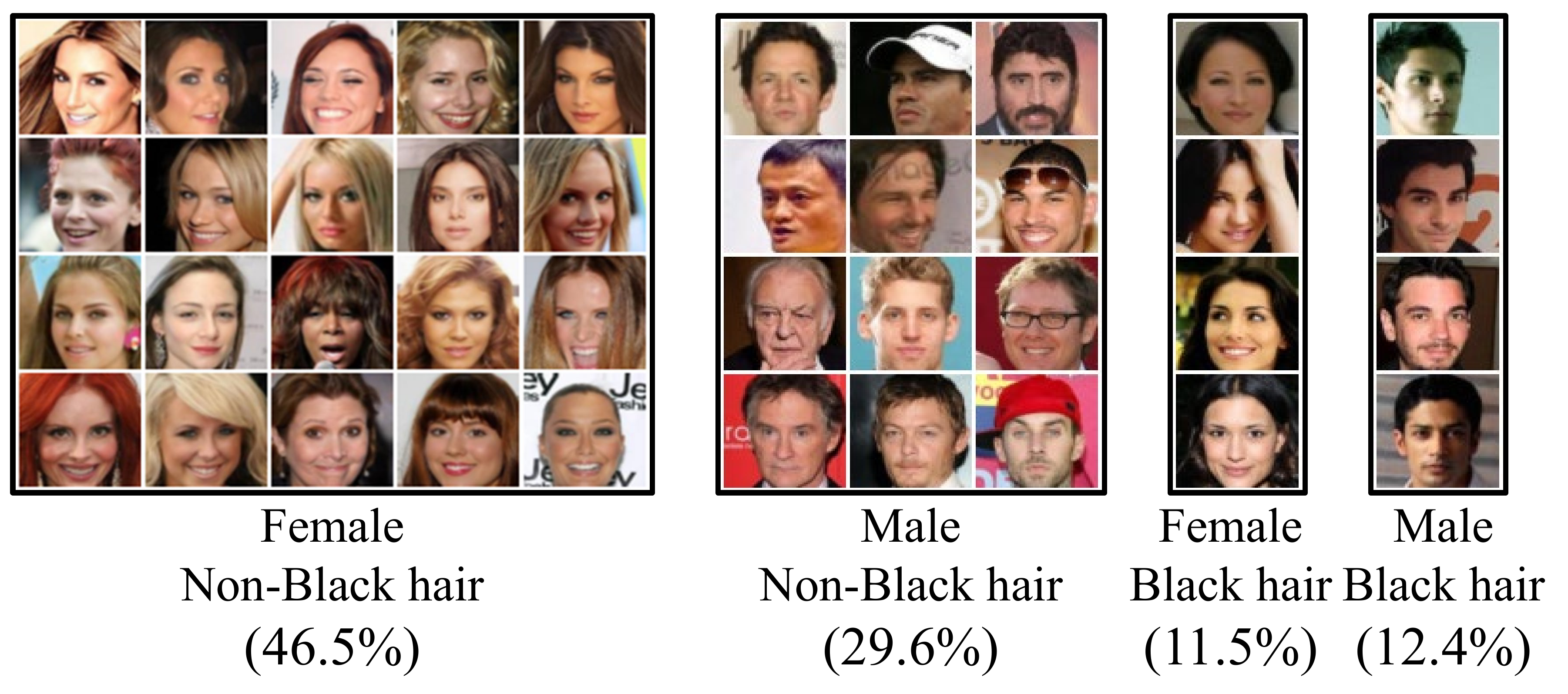}
         \caption{CelebA benchmark dataset}
         \label{1subfig:bias}
     \end{subfigure}
     \hfill
     \begin{subfigure}[b]{0.45\textwidth}
         \centering
         \includegraphics[width=\textwidth]{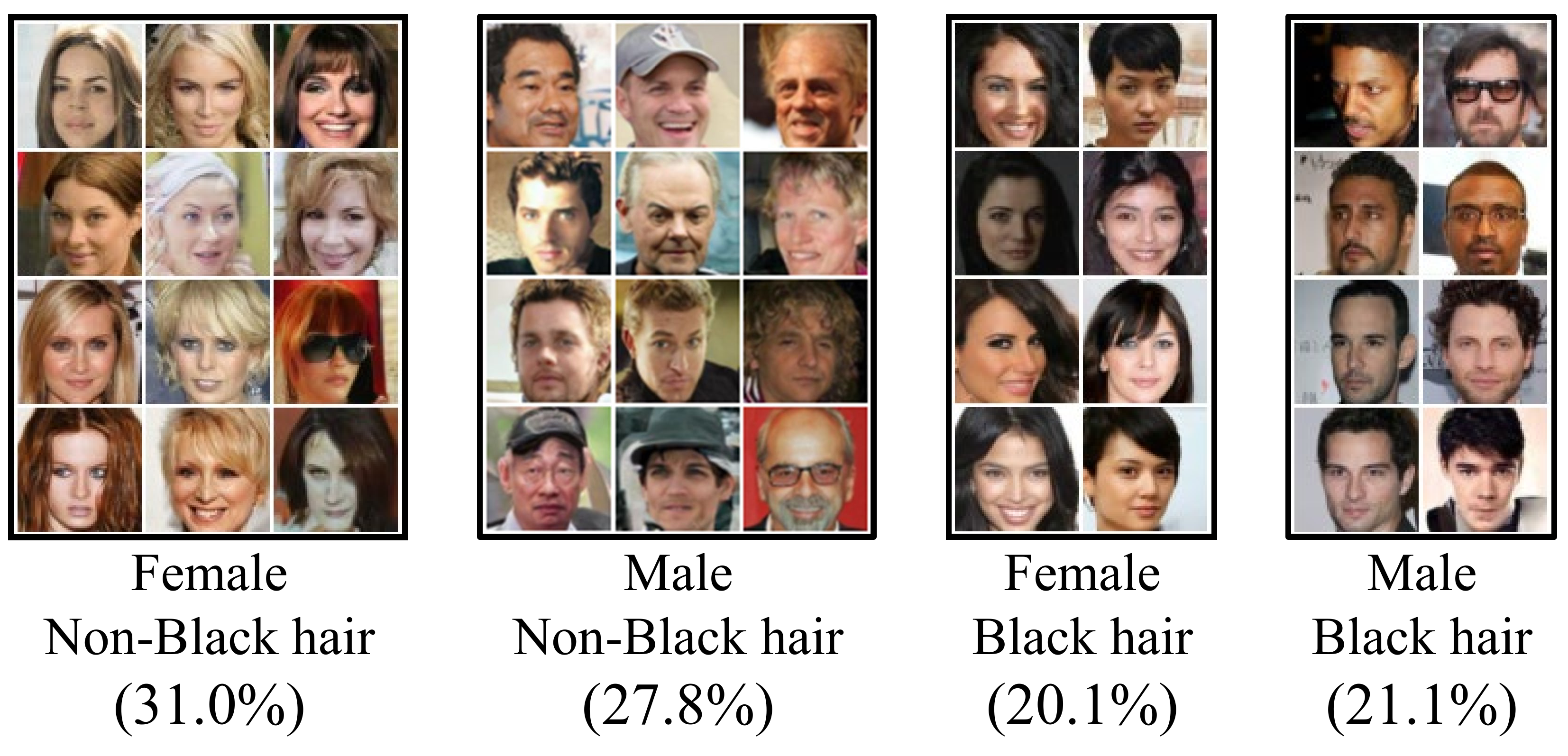}
         \caption{Generated samples from proposed method}
         \label{2subfig:ours}
     \end{subfigure}
     \hfill
        \caption{ The samples that reflect the proportion of four latent subgroups. The proposed method mitigates the latent bias statistics as shown in (b).}
        \label{fig:1}
        \vspace{-4mm}
\end{figure}


\section{Background}
\subsection{Problem Setup}
The goal of generative modeling is to estimate the underlying true data distribution $p_{\text{data}}: \gX \rightarrow \mathbb{R}_{\geq0}$, so this distribution enables likelihood evaluations and sample generations. In this process, we often consider an observed sample dataset, $\mathcal{D}_{\text{obs}}=\left\{\rvx^{(1)},...,\rvx^{(n)}\right\}$ with i.i.d. sampling of $\rvx^{(i)}$, from $p_{\text{data}}$ to be unbiased with respect to the underlying latent factors, but this is not true if the sampling procedure is biased. $\mathcal{D}_{\text{obs}}$ could be biased due to social, geographical, and physical factors resulting in deviations from the intended purposes. 
Subsequently, the parameter $\boldsymbol{\theta}$ of the modeled distribution $p_{\boldsymbol{\theta}}: \gX \rightarrow \mathbb{R}_{\geq0}$ also becomes biased, which won't converge to $p_{\text{data}}$ through learning on $\boldsymbol{\theta}$ with $\mathcal{D}_{\text{obs}}$. 


Building upon prior research~\citep{choi2020fair}, we assume that the accessible data $\mathcal{D}_{\text{obs}}$ consists of two sets: $\mathcal{D}_{\text{obs}} =  \mathcal{D}_{\text{bias}} \cup \mathcal{D}_{\text{ref}} $. The elements in $\mathcal{D}_{\text{bias}}$ are i.i.d. samples from an unknown biased distribution $p_{\text{bias}}:\gX \rightarrow \mathbb{R}_{\geq0}$. Note that $p_{\text{bias}}$ deviates from $p_{\text{data}}$ because of its unknown sampling bias. Each element of $\mathcal{D}_{\text{ref}}$ is i.i.d. sampled from $p_{\text{data}}$, but $\left| \mathcal{D}_{\text{ref}} \right|$ is relatively smaller than $|\mathcal{D}_{\text{bias}}|$. We also follow a weak supervision setting, which does not provide explicit bias in $p_{\text{bias}}$; but we assume that the origin of data instances is known to be either $\mathcal{D}_{\text{ref}}$ or $\mathcal{D}_{\text{bias}}$.   


\subsection{Diffusion model and score matching}
This paper focuses on diffusion models to parameterize model distribution $p_{\boldsymbol{\theta}}$. The diffusion model is well explained by Stochastic Differential Equations (SDEs)~\citep{song2020score, anderson1982reverse}. For a data random variable $\rvx_0 \sim p_{\text{data}}$, the forward process in \cref{eq:1} perturbs it into a noise random variable $\rvx_T$. The reverse process in \cref{eq:2} transforms noise random variable $\rvx_T$ to $\rvx_0$.
\begin{align}
& \mathrm{d}\rvx_t = \mathbf{f}(\rvx_t, t)\mathrm{d}t + g(t)\mathrm{d}\mathbf{w}_t,  \label{eq:1} \\ 
& \mathrm{d}\rvx_t = [\mathbf{f}(\rvx_t,t)-g^2(t)\nabla\log p_{\text{data}}^t(\rvx_t)]\mathrm{d}\bar{t}+g(t)\mathrm{d}\mathbf{\bar{w}_t}, \label{eq:2}
\end{align}
where $\mathbf{w}_t$ denotes a standard Wiener process, $\mathbf{f}(\cdot,t):\mathbb{R}^{d} \rightarrow \mathbb{R}^{d}$ is a drift term, and $g(\cdot):\mathbb{R} \rightarrow \mathbb{R}$ is a diffusion term, $\mathbf{\bar{w}_t}$ denotes the Wiener process when time flows backward, and $p^t_{\text{data}}(\rvx_t)$ is the probability density function of $\rvx_t$.
To construct the reverse process, the time-dependent score function is approximated through a neural network $\mathbf{s}_{\boldsymbol{\theta}}(\rvx_t,t) \approx \nabla \log p^t_{\text{data}}(\rvx_t)$. The score-matching objective is derived from the Fisher divergence~\citep{song2019generative} as described in \cref{eq:sm}.
\begin{align}
& \mathcal{L}_{\text{SM}}(\boldsymbol{\theta}; p_{\text{data}}) :=\frac{1}{2}\int_0^T \mathbb{E}_{p_{\text{data}}^{t}(\rvx_t)}[\lambda(t)||\mathbf{s}_{\boldsymbol{\theta}}(\rvx_t,t)-\nabla \log p_{\text{data}}^t(\rvx_t)||_2^2] \mathrm{d}t,\label{eq:sm} \\
& \mathcal{L}_{\text{DSM}}(\boldsymbol{\theta}; p_{\text{data}}) :=\frac{1}{2}\int_0^T \mathbb{E}_{p_{\text{data}}(\rvx_0)}\mathbb{E}_{p(\rvx_t|\rvx_0)}[\lambda(t)||\mathbf{s}_{\boldsymbol{\theta}}(\rvx_t,t)-\nabla \log p(\rvx_t|\rvx_0)||_2^2] \mathrm{d}t, \label{eq:dsm}
\end{align}
where $\lambda (\cdot):[0,T] \rightarrow \mathbb{R}_{+}$ is a temporal weighting function. However, $\mathcal{L}_{\text{SM}}$ is intractable because computing $\nabla \log p_{\text{data}}^t(\rvx_t)$ from a sample $\rvx_t$ is impossible. To make score-matching tractable, $\mathcal{L}_{\text{DSM}}$ is commonly used as an objective function. $\mathcal{L}_{\text{DSM}}$ only needs to calculate $\nabla \log p(\rvx_t|\rvx_0)$, which comes from the forward process. Note that $\mathcal{L}_{\text{DSM}}$ is equivalent to $\mathcal{L}_{\text{SM}}$ up to a constant with respect to $\boldsymbol{\theta}$~\citep{vincent2011connection, song2019generative}.

\subsection{Density ratio estimation}
\label{subsec:2.3}
The density ratio estimation (DRE) through discriminative training (also known as noise contrastive estimation)~\citep{gutmann2010noise, sugiyama2012density} is a statistical technique that provides the likelihood ratio between two probability distributions. This estimation assumes that we can access samples from two distributions $p_{\text{data}}$ and $p_{\text{bias}}$. Afterwards, we set pseudo labels $y=1$ on samples from $p_{\text{data}}$, and $y=0$ on samples from $p_{\text{bias}}$. The discriminator $d_{\boldsymbol{\phi}}:\gX \rightarrow [0,1]$, which predicts such pseudo labels, can approximate the probability of label given $\rvx_0$ through $p(y=1|\rvx_0)\approx d_{\boldsymbol{\phi}}(\rvx_0)$. The optimal discriminator $\boldsymbol{\phi}^*=\argmin_{\phi}\big[\mathbb{E}_{p_{\text{data}}(\rvx_0)}[-\log d_{\boldsymbol{\phi}}(\rvx_0)] + \mathbb{E}_{p_{\text{bias}}(\rvx_0)}[-\log(1- d_{\boldsymbol{\phi}}(\rvx_0)) ] \big]$ represents the density ratio from the following relation in \cref{eq:dre}.  We define $w_{\boldsymbol{\phi}^*}(\rvx_0)$ as the true density ratio. 
\begin{align}
\label{eq:dre}
w_{\boldsymbol{\phi}^*}(\rvx_0):=\frac{p_{\text{data}}(\rvx_0)}{p_{\text{bias}}(\rvx_0)} = \frac{p(\rvx_0|y=1)}{p(\rvx_0|y=0)} = \frac{p(y=0)p(y=1|\rvx_0)}{p(y=1)p(y=0|\rvx_0)} = \frac{d_{\boldsymbol{\phi}^*}(\rvx_0)}{1-d_{\boldsymbol{\phi}^*}(\rvx_0)}
\end{align}
\subsection{Importance reweighting for unbiased generative learning}
\label{subsec:2.4}
\cite{choi2020fair} propose the importance reweighting to mitigate dataset bias. They originally conducted an experiment on GANs~\citep{goodfellow2014generative, brock2018large}, and there is no previous work on diffusion models with the same purpose. 

Hence, the first approach would be utilizing the important reweighting for GANs in the diffusion models. In detail, the previous work pre-trains the density ratio $\frac{p_{\text{data}}(\rvx_0)}{p_{\text{bias}}(\rvx_0)}\approx w_{\boldsymbol{\phi}}(\rvx_0)$ as described in \Cref{subsec:2.3}. The density ratio assigns a higher weight to the sample that appears to be from $p_{\text{data}}$ as described in \cref{eq:ir0}. The optimally estimated density ratio makes it possible to compute \cref{eq:ir}. This can lead the $p_{\boldsymbol{\theta}}$ to converge to the true data distribution by utilizing the biased dataset. We call this method time-independent importance reweighting, and the derived objective in \cref{eq:ir} as importance reweighted denoising score-matching (IW-DSM).
\begin{align}
\mathcal{L}_{\text{DSM}}(\boldsymbol{\theta}; p_{\text{data}})& = \frac{1}{2}\int_0^T \mathbb{E}_{p_{\text{bias}} (\rvx_0)}\bigg [\frac{p_{\text{data}}(\rvx_0)}{p_{\text{bias}}(\rvx_0)}\ell_{\text{dsm}}(\boldsymbol{\theta},\rvx_0)\bigg ]\mathrm{d}t\label{eq:ir0}\\
& = \frac{1}{2}\int_0^T \mathbb{E}_{p_{\text{bias}} (\rvx_0)}\bigg [w_{\boldsymbol{\phi}^*}(\rvx_0)\ell_{\text{dsm}}(\boldsymbol{\theta},\rvx_0)\bigg ] \mathrm{d}t,
\label{eq:ir}
\end{align}
where $\ell_{\text{dsm}}(\boldsymbol{\theta},\rvx_0):=\mathbb{E}_{p(\rvx_t|\rvx_0)}[\lambda(t)||\mathbf{s}_{\boldsymbol{\theta}}(\rvx_t,t)-\nabla \log p(\rvx_t|\rvx_0)||_2^2]$.

\section{Method}
In this section, we present our approach for training an unbiased diffusion model with a weak supervision setting. \Cref{subsec3.1} explains the motivation behind time-dependent importance reweighting. \Cref{subsec3.2} explains the method in detail, which involves using a time-dependent density ratio for both weighting and score correction. Furthermore, we explore the relationship between our proposed objective and the previous score-matching objective.

\subsection{Why time-dependent importance reweighting?}
\label{subsec3.1}

\begin{figure}[t!]
    \hfill
     \centering
     \begin{subfigure}[b]{0.16\textwidth}
         \centering
         \includegraphics[width=\textwidth, height=0.9in]{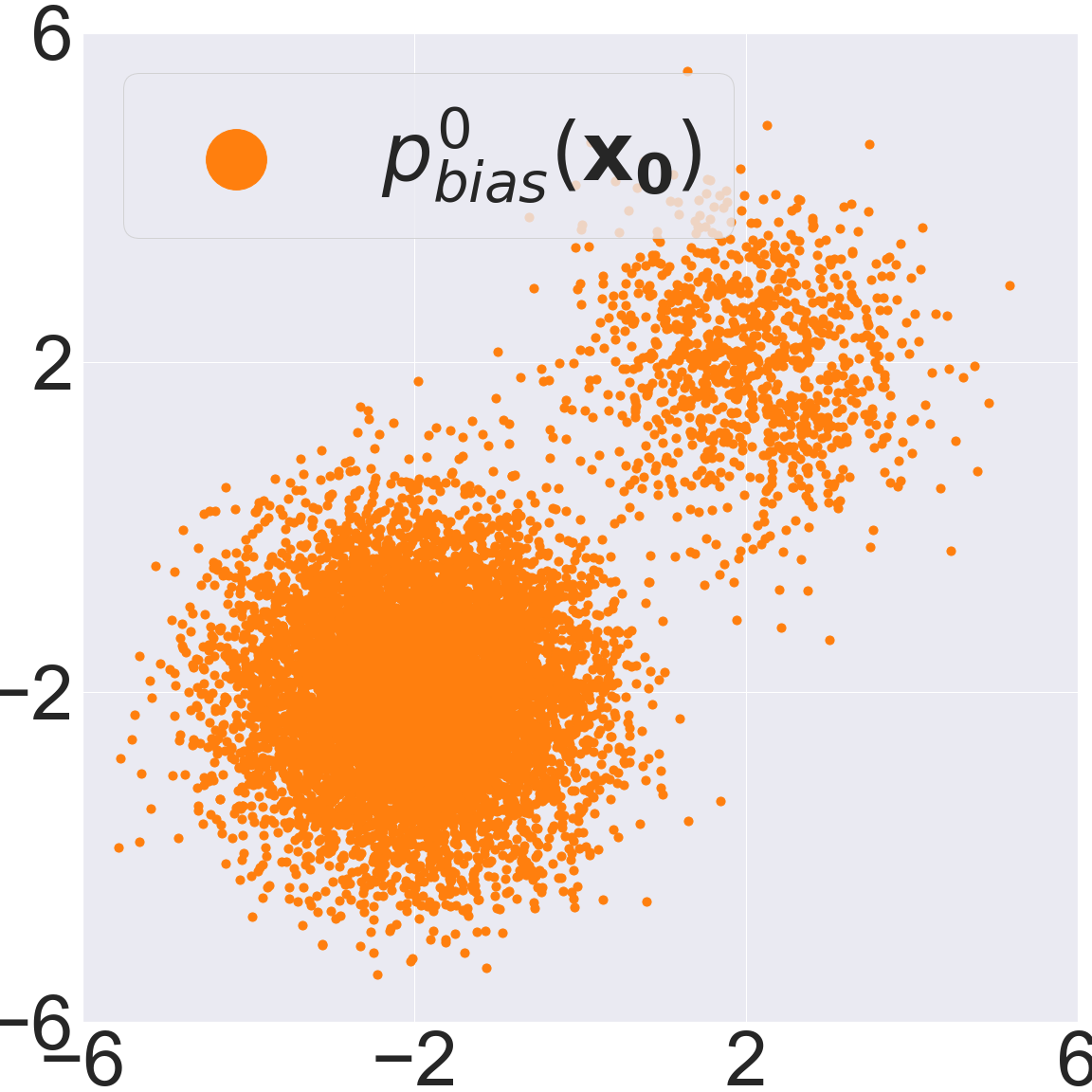}
         \caption{$\mathcal{D}_{\text{bias}}$}
         \label{subfig:2_0}
     \end{subfigure}
     \hfill
     \begin{subfigure}[b]{0.16\textwidth}
         \centering
         \includegraphics[width=\textwidth, height=0.9in]{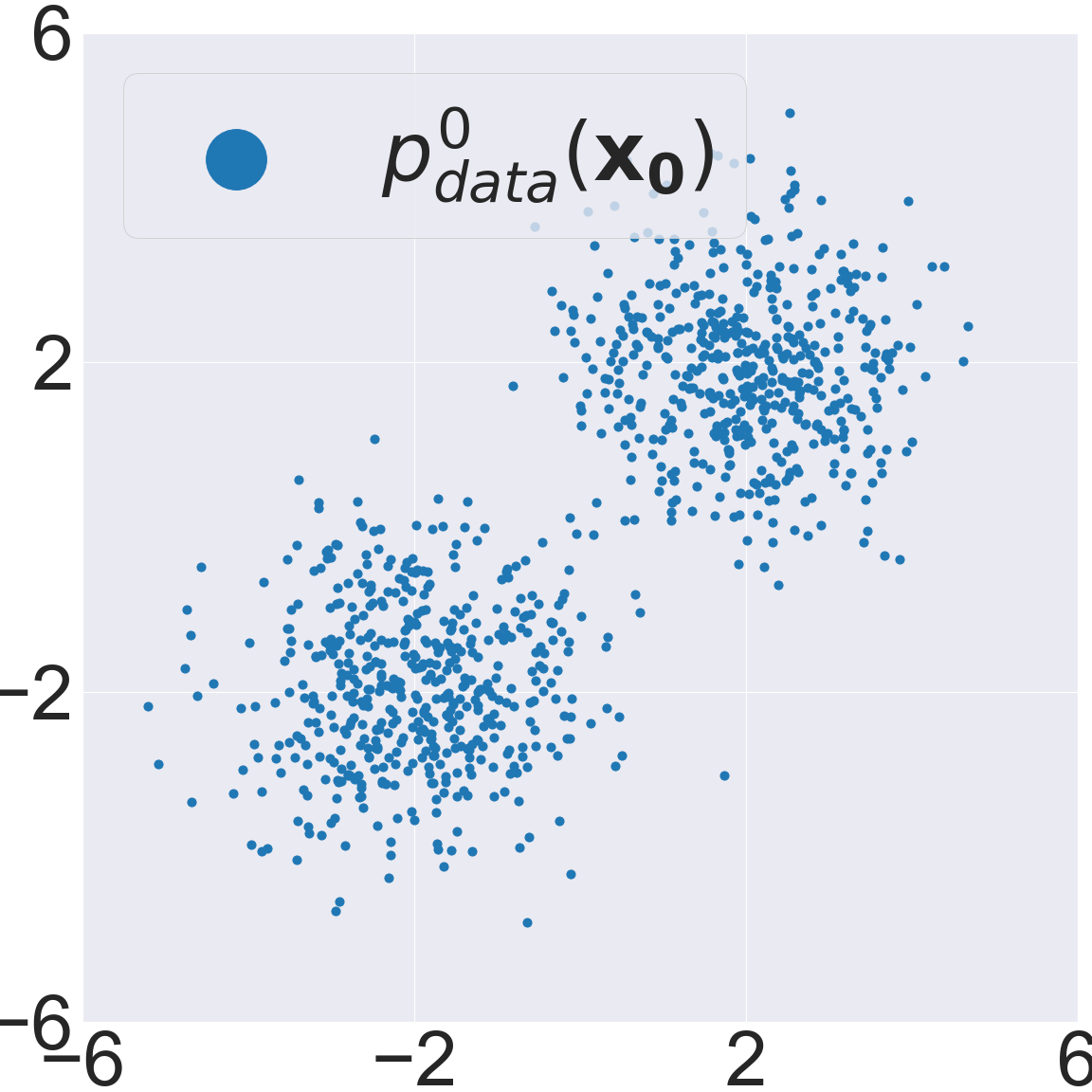}
         \caption{$\mathcal{D}_{\text{ref}}$}
         \label{subfig:2_1}
     \end{subfigure}
     \hfill
     \begin{subfigure}[b]{0.215\textwidth}
         \centering
         \includegraphics[width=\textwidth, height=0.9in]{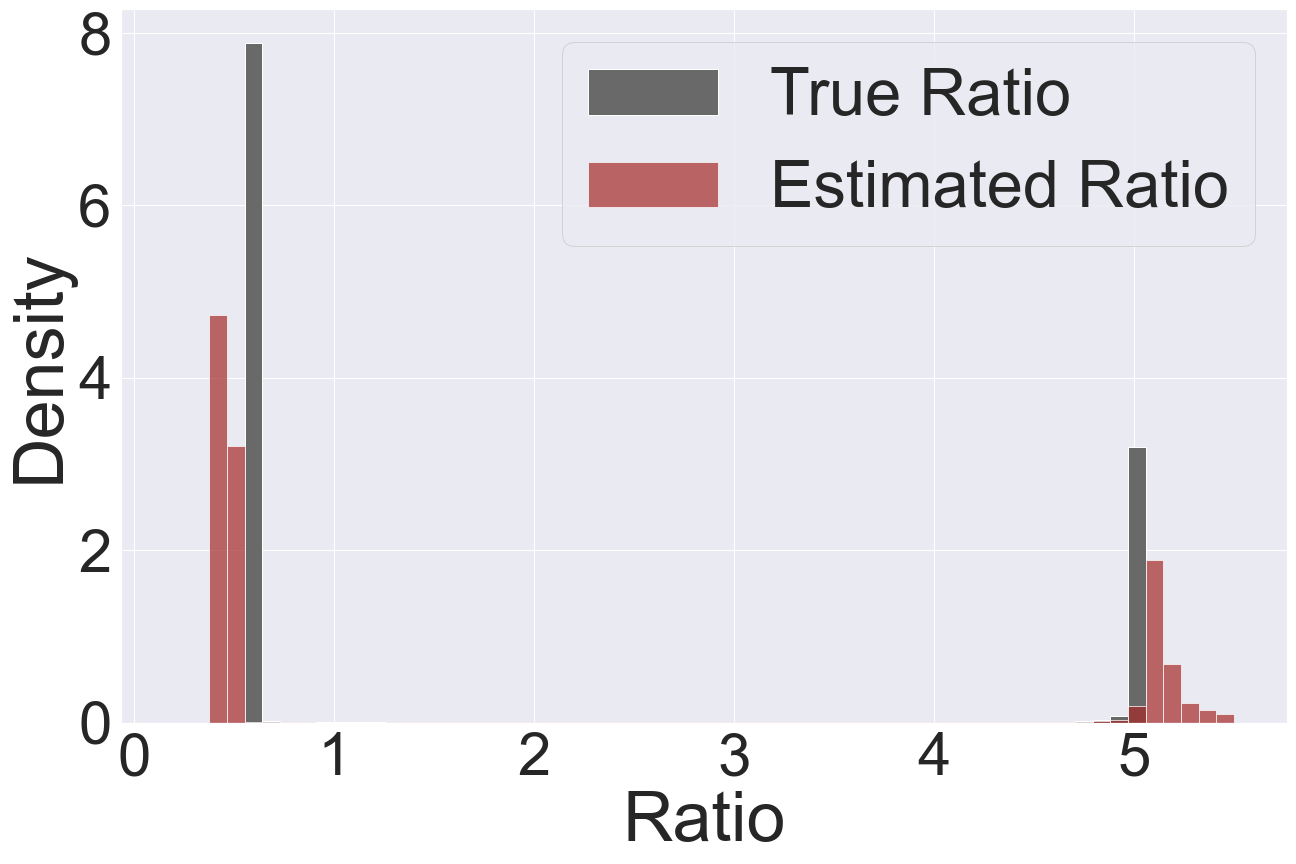}
         \caption{$t=0$, MSE=7.9}
         \label{subfig:2_2}
     \end{subfigure}
      \hfill
     \begin{subfigure}[b]{0.215\textwidth}
         \centering
         \includegraphics[width=\textwidth, height=0.9in]{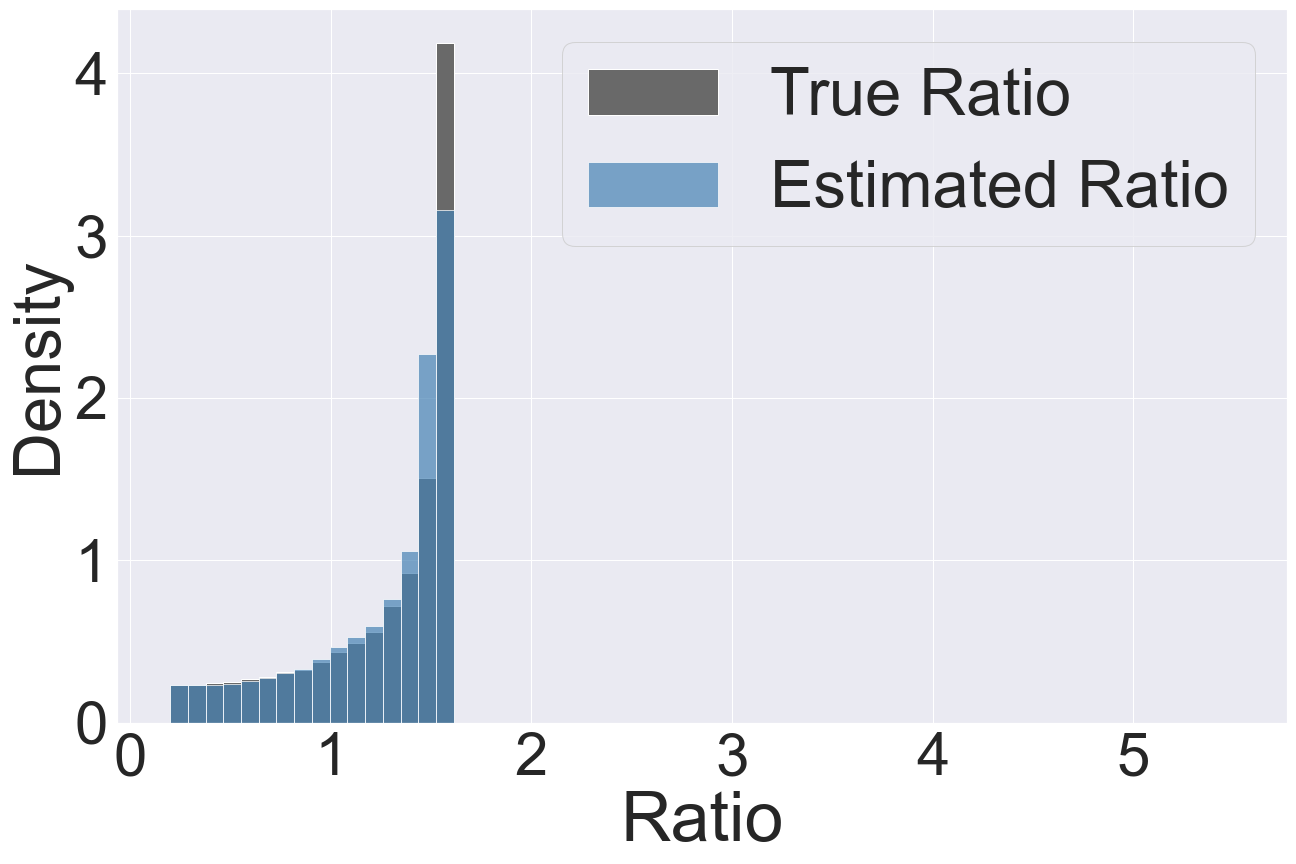}
         \caption{$t=0.4$, MSE=3.8}
         \label{subfig:2_3}
     \end{subfigure}
     \hfill
     \begin{subfigure}[b]{0.215\textwidth}
         \centering
         \includegraphics[width=\textwidth, height=0.9in]{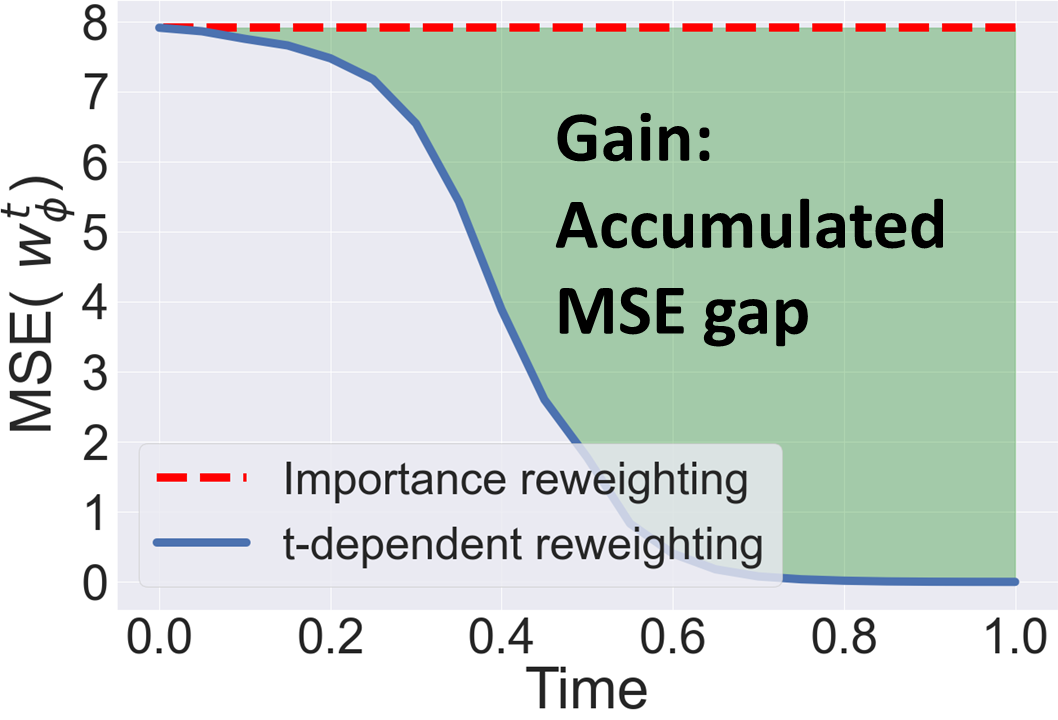}
         \caption{MSE varying $t$}
         \label{subfig:2_4}
     \end{subfigure}
     \hfill
        \caption{Accuracy of density ratio estimation between $p_{\text{bias}}$ and $p_{\text{data}}$ under diffusion process. (a-b) Samples from two distributions. (c-d) Density ratio statistics on the ground truth and the model, at each diffusion time. (e) Density ratio estimation error according to $t$. The density ratio error becomes significantly decreases as $t$ becomes larger.}
        \label{fig:2}
        \vspace{-5mm}
\end{figure}


Density ratio estimation (DRE) provides significant benefits for probabilistic machine learning~\citep{song2021train, aneja2020ncp, xiao2022adaptive, goodfellow2014generative}. However, DRE suffers from estimation errors due to the $\textit{density-chasm}$ problem. \cite{rhodes2020telescoping} state that the ratio estimation error increases when 1) the distance between two distributions is far, and 2) the number of samples from two distributions is small. The pre-trained density ratio from \Cref{subsec:2.4}, $w_{\boldsymbol{\phi}}$, also suffers from this issue because 1) we handle real-world datasets that are in high dimensions, and 2) the number of reference data $|\mathcal{D}_{\text{ref}}|$ would be small. To address this problem, we investigate a method that involves using a time-dependent density ratio between perturbed distributions $p^{t}_{\text{bias}}(\rvx_t)$ and $p^{t}_{\text{data}}(\rvx_t)$. This has benefits: 1) The perturbation from the forward diffusion process makes the two distributions closer as $t$ becomes larger; and 2) the perturbation reduces Monte Carlo error in a sampling of each distribution. These two advantages of forward diffusion can contribute significantly to the accuracy of density ratio estimation. 


The time-dependent density ratio $w_{\boldsymbol{\phi}^*}^t(\rvx_t):=\frac{p^t_{\text{data}}(\rvx_t)}{p^t_{\text{bias}}(\rvx_t)}$ is represented by a time-dependent discriminator. We now parametrize the time-dependent discriminator $d_{\boldsymbol{\phi}}:\gX \times [0,T] \rightarrow [0,1]$ which separates the samples from $p^t_{\text{data}}(\rvx_t)$ and the samples from $p^t_{\text{bias}}(\rvx_t)$. The time-dependent discriminator is optimized by minimizing temporally weighted binary cross-entropy (T-BCE) objective as described in \cref{eq:td}, where $\lambda'(t)$ denotes a temporal weighting function. We represent the time-dependent density ratio as $w_{\boldsymbol{\phi}^*}^t(\rvx_t)=\frac{d_{\boldsymbol{\phi}^*}(\rvx_t,t)}{1-d_{\boldsymbol{\phi}^*}(\rvx_t,t)}$.
\begin{align}
\label{eq:td}
\mathcal{L}_{\text{T-BCE}}(\boldsymbol{\phi}; p_{\text{data}},p_{\text{bias}}):=\int_0^T\lambda'(t)\big[\mathbb{E}_{p^t_{\text{data}}(\rvx_t)}[-\log d_{\boldsymbol{\phi}}(\rvx_t, t)] + \mathbb{E}_{p^t_{\text{bias}}(\rvx_t)}[-\log(1- d_{\boldsymbol{\phi}}(\rvx_t,t ))] \big]\mathrm{d}t
\end{align}
\Cref{fig:2} shows the accuracy of density ratio estimation over the diffusion time interval $t\in[0, T]$, where $T=1$. We set the 2-D distributions as follows: 
$p_{\text{bias}}^0(\rvx_0):=\frac{9}{10}\mathcal{N}(\rvx_0;(-2,-2)^T,\mathbf{I})+\frac{1}{10}\mathcal{N}(\rvx_0;(2,2)^T,\mathbf{I})$ and $p_{\text{data}}^0(\rvx_0):=\frac{1}{2}\mathcal{N}(\rvx_0;(-2,-2)^T,\mathbf{I})+\frac{1}{2}\mathcal{N}(\rvx_0;(2,2)^T,\mathbf{I})$. We sampled a finite number of samples from each distribution as illustrated in \Cref{subfig:2_0,subfig:2_1}. We perturb these two distributions to $p_{\text{bias}}^t(\rvx_t)$ and $p_{\text{data}}^t(\rvx_t)$ following the Variance Preserving (VP) SDE~\citep{ho2020denoising,song2020score}. \Cref{subfig:2_2,subfig:2_3} illustrate the histograms of the ground truth density ratio: $ w^t_{\boldsymbol{\phi}^*}(\rvx_t)$, and the estimated density ratio: $ w_{\boldsymbol{\phi}}^t(\rvx_t)$, with $\rvx_t$ drawn from $\frac{1}{2}(p_{\text{bias}}^t+p_{\text{data}}^t)$. At $t=0$, the true ratio is determined by the choice of the mode. The discriminator tends to be overconfident in favor of either $p_{\text{bias}}$ or $p_{\text{data}}$, exhibiting a skew toward either side (\Cref{subfig:2_2}). This phenomenon is mitigated as the diffusion time increases (\Cref{subfig:2_3}).  The mean squared error (MSE) is calculated through $\mathbb{E}_{\frac{1}{2}(p_{\text{bias}}^t+p_{\text{data}}^t)}[|| w_{\boldsymbol{\phi}^*}^t(\rvx_t) - w_{\boldsymbol{\phi}}^t(\rvx_t)||_2^2]$ for each time step. \Cref{subfig:2_4} illustrates that the density ratio estimation error decreases rapidly as $t$ increases.

Applying the time-independent importance reweighting, as described in \cite{choi2020fair}, utilizes the density ratio only at $t=0$ for loss computation, and this ratio becomes constant to $t$ in the score-matching. The previously discussed density-chasm creates the weight estimation error, illustrated as a red line in Figure \ref{subfig:2_4}; and this error propagates through the diffusion model training. Considering the time integrating nature of score-matching objectives, the integrated estimation error of time-dependent density ratio $\int_{0}^{1} \mathbb{E}_{ \frac{1}{2}(p_{\text{bias}}^t+p_{\text{data}}^t)}[||w_{\boldsymbol{\phi}^*}^t(\rvx_t)-w_{\boldsymbol{\phi}}^t(\rvx_t)||_2^2] \mathrm{d}t$ is only 39.1\%, compared to $\int_{0}^{1} \mathbb{E}_{ \frac{1}{2}(p_{\text{bias}}+p_{\text{data}})}[||w_{\boldsymbol{\phi}^*}^0(\rvx_0)-w_{\boldsymbol{\phi}}^0(\rvx_0)||_2^2] \mathrm{d}t$. We additionally discuss the benefits of time-dependent discriminator training in \Cref{subsec:ADJ}. The natural way to reduce this DRE error is to employ time-dependent importance reweighting based on the time-dependent density ratio, as this paper suggests for the first time in the line of work on diffusion models.

\begin{figure}[t]
    \hfill
     \begin{subfigure}[b]{0.23\textwidth}
         \centering
         \includegraphics[width=\textwidth, height=1.2in]{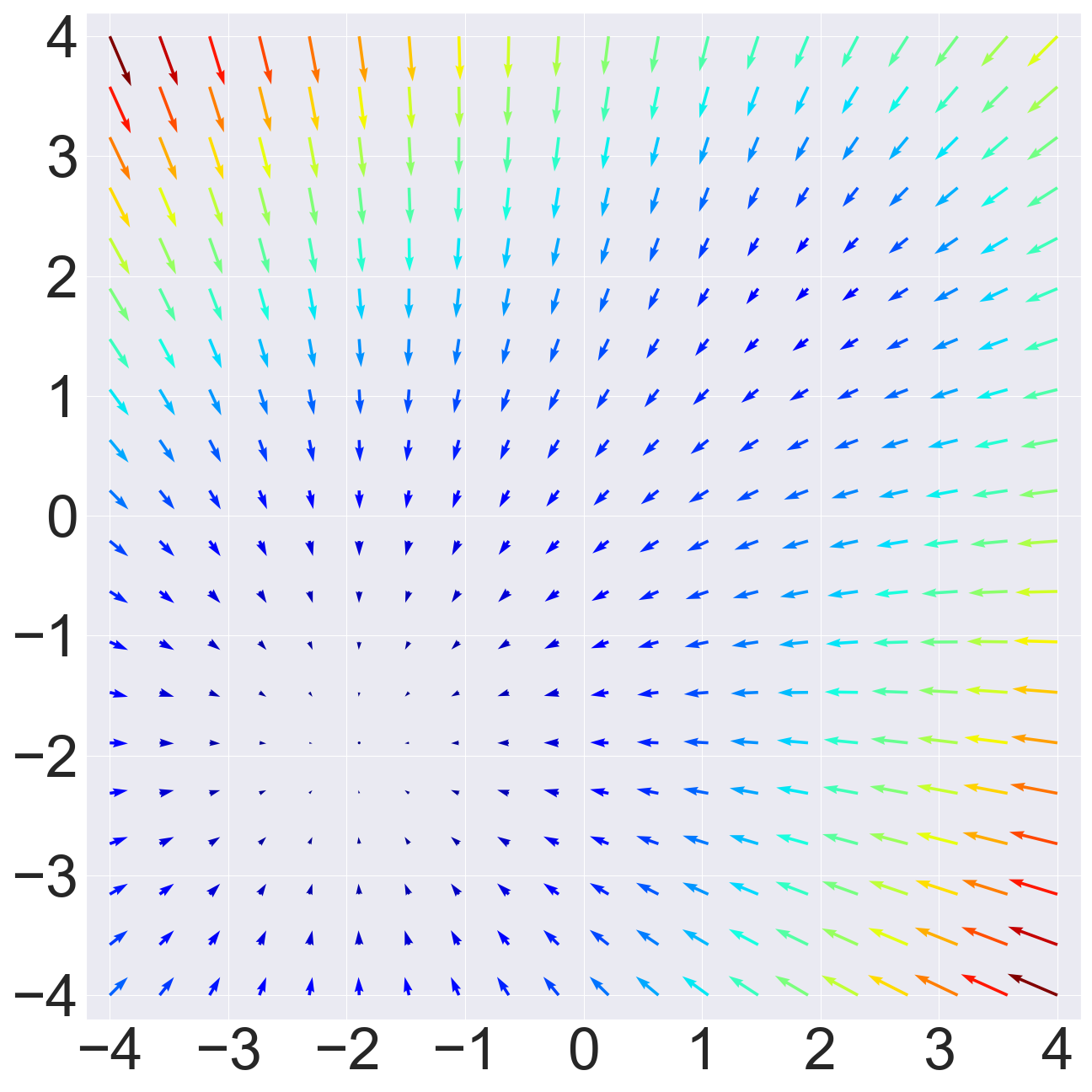}
         \caption{$\nabla \log p_{\text{bias}}^0(\rvx_0)$}
         \label{subfig:3_1}
     \end{subfigure}
      \hfill
     \begin{subfigure}[b]{0.23\textwidth}
         \centering
         \includegraphics[width=\textwidth, height=1.2in]{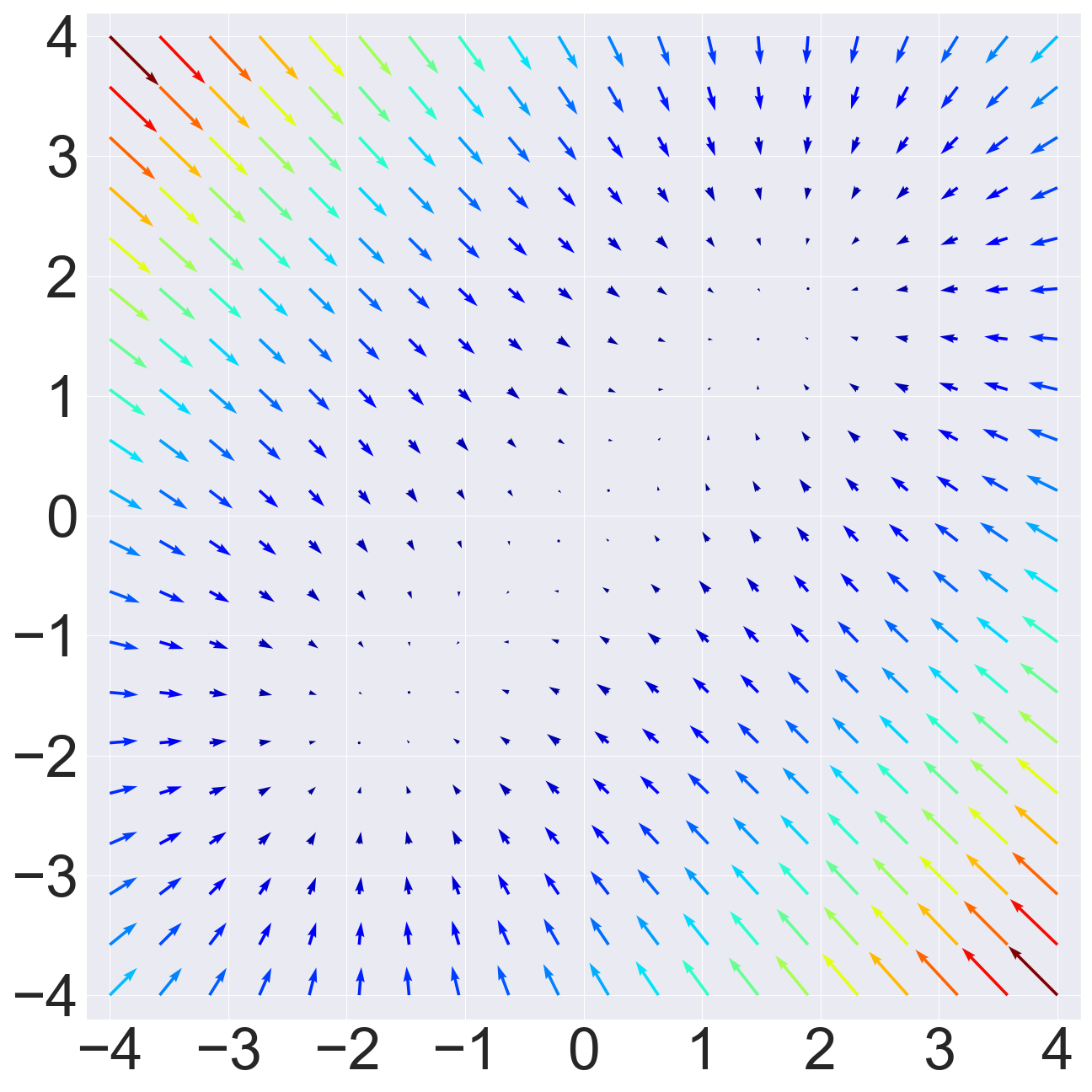}
         \caption{$\nabla \log p_{\text{data}}^0(\rvx_0) $}
         \label{subfig:3_2}
     \end{subfigure}
     \hfill
     \begin{subfigure}[b]{0.23\textwidth}
         \centering
         \includegraphics[width=\textwidth, height=1.2in]{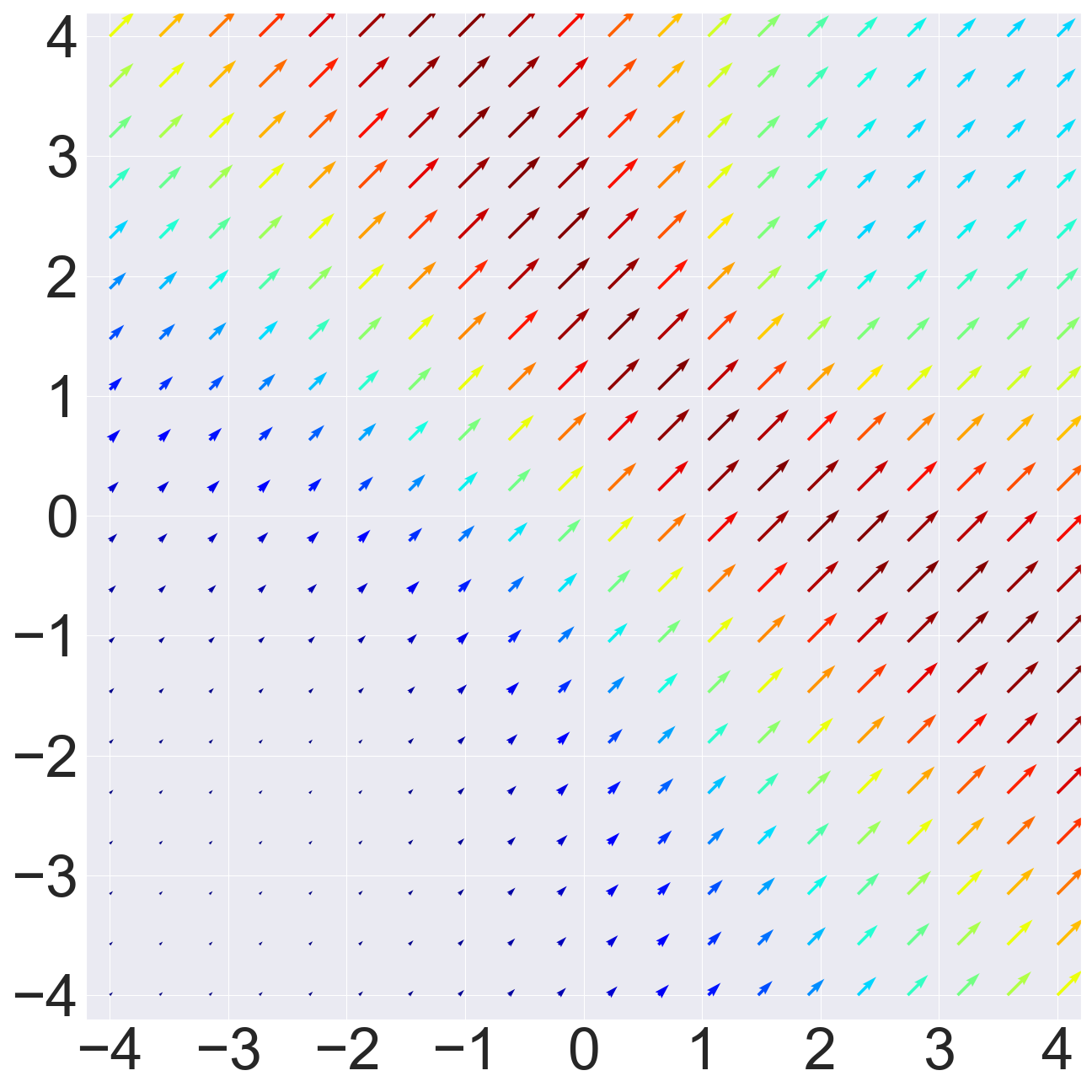}
         \caption{ $\nabla \log w_{\phi}^0(\rvx_0)$}
         \label{subfig:3_3}
     \end{subfigure}
          \centering
     \begin{subfigure}[b]{0.26\textwidth}
         \centering
         \includegraphics[width=\textwidth, height=1.2in]{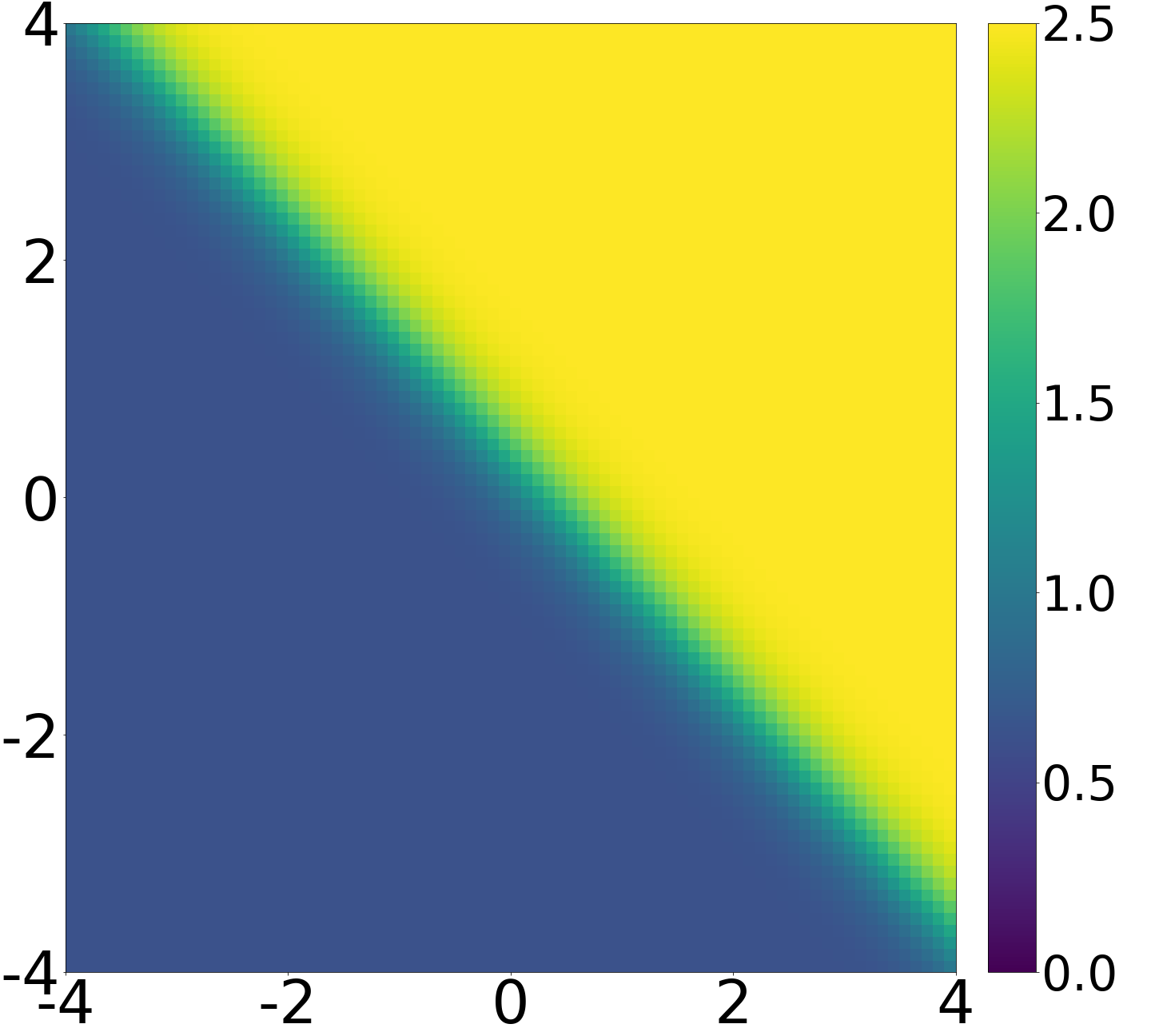}  \caption{$w_{\phi}^0(\rvx_0)$}
         \label{subfig:3_4}
     \end{subfigure}
     \hfill
     \hfill
        \caption{(a-b) The score plots on $p^0_{\text{bias}}$ and $p^0_{\text{data}}$ defined in \Cref{fig:2}. (c) The score plot on score correction term. (d) The reweighting value. The time-dependent density ratio simultaneously mitigates the bias through (c) and (d).}
        \label{fig:3}
        \vspace{-6mm}
\end{figure}

\subsection{Score matching with time-dependent importance reweighting}
\label{subsec3.2}

The objective $\mathcal{L}_{\text{DSM}}$ utilizes the samples from the joint space of $p(\rvx_0,\rvx_t)$, so applying time-dependent importance reweighting is not straightforward. We start with $\mathcal{L}_{\text{SM}}$, which entails expectations on marginal distribution. We apply time-dependent importance reweighting through \cref{eq:tir}.
\begin{equation}
\mathcal{L}_{\text{SM}}(\boldsymbol{\theta}; p_{\text{data}}) =\frac{1}{2}\int_0^T \mathbb{E}_{p_{\text{bias}}^{t}(\rvx_t)}\bigg[w^t_{\boldsymbol{\phi}^*}(\rvx_t)\ell_{\text{sm}}(\boldsymbol{\theta},\rvx_t)\bigg] \mathrm{d}t,
\label{eq:tir}
\end{equation}
where $\ell_{\text{sm}}(\boldsymbol{\theta},\rvx_t) := \lambda(t)||\mathbf{s}_{\boldsymbol{\theta}}(\rvx_t,t)-\nabla \log p_{\text{data}}^t(\rvx_t)||_2^2$, and $w^t_{\boldsymbol{\phi}^*}(\rvx_t)=\frac{p_{\text{data}}^t(\rvx_t)}{p_{\text{bias}}^t(\rvx_t)}$. 

Meanwhile, this objective is still intractable because we cannot evaluate $\nabla \log p_{\text{data}}^t(\rvx_t)$ from a sample $\rvx_t$. Also, there is mismatching between the sampling distribution $p_{\text{bias}}^t(\rvx_t)$ and the density function of target score $\nabla \log p_{\text{data}}^t(\rvx_t)$. This difference interferes with the straightforward conversion to a denoising score-matching approach.

To tackle this issue, we propose an objective function named time-dependent importance reweighted denoising score-matching (TIW-DSM). There exists a new score correction term, $\nabla \log w_{\boldsymbol{\phi}^*}^t(\rvx_t):=\nabla \log \frac{p_{\text{data}}^t(\rvx_t)}{p_{\text{bias}}^t(\rvx_t)}$, as a regularization in the L2 loss on score-matching.
\begin{align}
\label{eq:obj}
    &\mathcal{L}_{\text{TIW-DSM}}(\boldsymbol{\theta};p_{\text{bias}},w_{\boldsymbol{\phi}^*}^t(\cdot)) \\ \nonumber
    & := \frac{1}{2}\int_0^T \mathbb{E}_{p_{\text{bias}}(\rvx_0)}\mathbb{E}_{p(\rvx_t|\rvx_0)}\bigg[\lambda(t)w_{\boldsymbol{\phi}^*}^t(\rvx_t)\big[||s_{\boldsymbol{\theta}}(\rvx_t,t)-\nabla \log p(\rvx_t|\rvx_0)- \nabla \log w_{\boldsymbol{\phi}^*}^t(\rvx_t)||_2^2 )\big]\bigg]   \mathrm{d}t \nonumber
\end{align}

Here, we briefly explore the meaning of the newly suggested regularization term through \cref{eq:sc}.
\begin{align}
\label{eq:sc}
\nabla \log w_{\boldsymbol{\phi}^*}^t(\rvx_t) = \nabla \log p^t_{\text{data}}(\rvx_t) - \nabla \log p_{\text{bias}}^t(\rvx_t)
\end{align}
$\nabla \log w_{\boldsymbol{\phi}^*}^t(\rvx_t)$ forces the model scores to move away from $\nabla \log p_{\text{bias}}^t(\rvx_t)$ and head towards $\nabla \log p_{\text{data}}^t(\rvx_t)$. \Cref{fig:3} interprets this score correction scheme on the 2-D distributions as described in \Cref{subfig:2_0,subfig:2_1}. \Cref{subfig:3_1} shows that $\nabla \log p_{\text{bias}}^t(\rvx_t)$ incorporates a substantial portion of the mode in the lower left. The correction term in \Cref{subfig:3_3} exerts a force away from the biased mode, allowing the model to target the $\nabla \log p^t_{\text{data}}(\rvx_t)$ as shown in \Cref{subfig:3_2}. \Cref{subfig:3_4} illustrates the reweighting values, which assigns small values to the points from the biased mode, and imposes larger weights on the points from another mode. The time-dependent density ratio simultaneously mitigates the bias through score correction and reweighting. 

Moving beyond the conceptual explanations, the following theorem guarantees the mathematical validity of the proposed objective function.

\begin{restatable}{theorem}{thma}  
$\mathcal{L}_{\text{TIW-DSM}}(\boldsymbol{\theta};p_{\text{bias}}, w_{\boldsymbol{\phi}^*}^t(\cdot))$ = $\mathcal{L}_{\text{SM}}(\boldsymbol{\theta};p_{\text{data}})+C$, where $C$ is a constant w.r.t.  $\boldsymbol{\theta}$.
\label{thm:1}
\end{restatable}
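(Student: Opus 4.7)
The plan is to massage $\mathcal{L}_{\text{TIW-DSM}}$ by first changing the measure so that the outer variable is distributed according to $p_{\text{data}}^t$, then expanding the squared norm, then applying the standard Vincent-style identity to eliminate the conditional score $\nabla \log p(\rvx_t \mid \rvx_0)$, and finally using the decomposition $\nabla \log w^t_{\boldsymbol{\phi}^*} = \nabla \log p_{\text{data}}^t - \nabla \log p_{\text{bias}}^t$ to collapse the remaining cross terms against the $\nabla \log p_{\text{bias}}^t$ contributions.

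Concretely, the first step is the measure change. Since $p_{\text{bias}}(\rvx_0)\, p(\rvx_t \mid \rvx_0) = p_{\text{bias}}^t(\rvx_t)\, p_{\text{bias}}(\rvx_0 \mid \rvx_t)$ and $w^t_{\boldsymbol{\phi}^*}(\rvx_t) = p_{\text{data}}^t(\rvx_t) / p_{\text{bias}}^t(\rvx_t)$, the weighted joint satisfies
\[
p_{\text{bias}}(\rvx_0)\, p(\rvx_t \mid \rvx_0)\, w^t_{\boldsymbol{\phi}^*}(\rvx_t) \;=\; p_{\text{data}}^t(\rvx_t)\, p_{\text{bias}}(\rvx_0 \mid \rvx_t).
\]
This rewrites $\mathcal{L}_{\text{TIW-DSM}}$ as a nested expectation with outer $\rvx_t \sim p_{\text{data}}^t$ and inner $\rvx_0 \sim p_{\text{bias}}(\cdot \mid \rvx_t)$.

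Next I would write $A(\rvx_t) := \mathbf{s}_{\boldsymbol{\theta}}(\rvx_t,t) - \nabla \log w^t_{\boldsymbol{\phi}^*}(\rvx_t)$ and $B(\rvx_t,\rvx_0) := \nabla \log p(\rvx_t \mid \rvx_0)$, and expand
\[
\|A - B\|_2^2 \;=\; \|A\|_2^2 - 2\langle A, B\rangle + \|B\|_2^2.
\]
The $\|B\|_2^2$ term is independent of $\boldsymbol{\theta}$ and folds into $C$. For the cross term, the key identity to invoke is
\[
\mathbb{E}_{p_{\text{bias}}(\rvx_0 \mid \rvx_t)}\!\big[\nabla \log p(\rvx_t \mid \rvx_0)\big] \;=\; \nabla \log p_{\text{bias}}^t(\rvx_t),
\]
which follows by differentiating $\log p_{\text{bias}}^t(\rvx_t) = \log\!\int p(\rvx_t \mid \rvx_0)\, p_{\text{bias}}(\rvx_0)\, \mathrm{d}\rvx_0$ and rearranging (this is the same identity that underlies the equivalence of $\mathcal{L}_{\text{SM}}$ and $\mathcal{L}_{\text{DSM}}$ in \citet{vincent2011connection}, applied to $p_{\text{bias}}$ instead of $p_{\text{data}}$).

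Finally, I would substitute $\nabla \log w^t_{\boldsymbol{\phi}^*} = \nabla \log p_{\text{data}}^t - \nabla \log p_{\text{bias}}^t$, so that with $u := \mathbf{s}_{\boldsymbol{\theta}}(\rvx_t,t) - \nabla \log p_{\text{data}}^t(\rvx_t)$ and $v := \nabla \log p_{\text{bias}}^t(\rvx_t)$ we have $A = u + v$, and a direct algebraic simplification gives $\|A\|_2^2 - 2\langle A, v\rangle = \|u\|_2^2 - \|v\|_2^2$. Taking $\tfrac12 \lambda(t)\, \mathbb{E}_{p_{\text{data}}^t}$ and integrating in $t$, the $\|u\|_2^2$ piece is exactly $\mathcal{L}_{\text{SM}}(\boldsymbol{\theta}; p_{\text{data}})$, while the $\|v\|_2^2$ piece and the leftover $\|B\|_2^2$ piece are $\boldsymbol{\theta}$-independent and combine to form the constant $C$.

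The only real obstacle is bookkeeping: one must be careful that the Vincent identity is applied with respect to $p_{\text{bias}}$ (because the inner conditional is $p_{\text{bias}}(\rvx_0 \mid \rvx_t)$) even though the outer marginal has been converted to $p_{\text{data}}^t$; mismatching these two roles would break the cancellation. Standard integrability/interchange conditions (finiteness of the involved expectations and enough smoothness to differentiate under the integral in the Vincent identity) are assumed throughout, as is standard in score-matching derivations.
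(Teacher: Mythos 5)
Your proposal is correct and follows essentially the same route as the paper's proof, just run in the opposite direction: the paper starts from $\mathcal{L}_{\text{SM}}(\boldsymbol{\theta};p_{\text{data}})$, changes measure to $p_{\text{bias}}^t$ weighted by $w^t_{\boldsymbol{\phi}^*}$, splits $\nabla\log p_{\text{data}}^t=\nabla\log p_{\text{bias}}^t+\nabla\log w^t_{\boldsymbol{\phi}^*}$, and applies the log-derivative (Vincent) identity with respect to $p_{\text{bias}}$ before completing the square, which are exactly your ingredients. Your conditional-expectation phrasing of the key identity, $\mathbb{E}_{p_{\text{bias}}(\rvx_0\mid\rvx_t)}[\nabla\log p(\rvx_t\mid\rvx_0)]=\nabla\log p_{\text{bias}}^t(\rvx_t)$, is the same lemma the paper derives as an iterated integral, and your caution about applying it with respect to $p_{\text{bias}}$ rather than $p_{\text{data}}$ is exactly the point where the bookkeeping matters.
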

See \Cref{subsec:A1} for the proof. We declare that the proposed objective function is equivalent to the classical score-matching objective with $p_{\text{data}}$. Despite the equivalence, implementing only $\mathcal{L}_{\text{DSM}}$ with $\mathcal{D}_{\text{ref}}$ for our problem is not a viable option due to the limited amount of $\mathcal{D}_{\text{ref}}$ from $p_{\text{data}}$. $\mathcal{L}_{\text{DSM}}$ will suffer from Monte Carlo approximation error from limited data (See \Cref{sec:C} for more details). In contrast, our objective allows for the use of biased data $\mathcal{D}_{\text{bias}}$, which has many more data points. Furthermore, the following corollary guarantees the optimality of the proposed objective.

\begin{corollary}
    Let $\boldsymbol{\theta}^*_{\text{TIW-DSM}}=\argmin_{\boldsymbol{\theta}}\mathcal{L}_{\text{TIW-DSM}}(\boldsymbol{\theta};p_{\text{bias}},w_{\boldsymbol{\phi}^*}^t(\cdot))$ be the optimal parameter. Then $s_{\boldsymbol{\theta}^*_{\text{TIW-DSM}}}(\rvx_t, t)=\nabla \log p_{\text{data}}^t(\rvx_t)$ for all $\rvx_t$, $t$.
\end{corollary}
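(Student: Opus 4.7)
The plan is to derive the corollary as an immediate consequence of Theorem 1 together with the standard optimality argument for score-matching. First, I would invoke Theorem 1, which states $\mathcal{L}_{\text{TIW-DSM}}(\boldsymbol{\theta}; p_{\text{bias}}, w_{\boldsymbol{\phi}^*}^t(\cdot)) = \mathcal{L}_{\text{SM}}(\boldsymbol{\theta}; p_{\text{data}}) + C$ with $C$ independent of $\boldsymbol{\theta}$. Since additive constants do not affect $\argmin$, the two objectives share the same minimizer set, and it suffices to characterize $\argmin_{\boldsymbol{\theta}} \mathcal{L}_{\text{SM}}(\boldsymbol{\theta}; p_{\text{data}})$.

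Next, I would exploit the pointwise non-negativity of the SM integrand. Since $\lambda(t) > 0$ by definition and $\|s_{\boldsymbol{\theta}}(\rvx_t, t) - \nabla \log p_{\text{data}}^t(\rvx_t)\|_2^2 \ge 0$ everywhere, we have $\mathcal{L}_{\text{SM}}(\boldsymbol{\theta}; p_{\text{data}}) \ge 0$, with equality if and only if $s_{\boldsymbol{\theta}}(\rvx_t, t) = \nabla \log p_{\text{data}}^t(\rvx_t)$ holds for $p_{\text{data}}^t$-a.e.\ $\rvx_t$ and Lebesgue-a.e.\ $t \in [0,T]$. Under the standard non-parametric capacity assumption that the score-network family $\{s_{\boldsymbol{\theta}}\}_{\boldsymbol{\theta}}$ can realize the true score, this lower bound is attained at $\boldsymbol{\theta}^*_{\text{TIW-DSM}}$, yielding the desired equality. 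For a VP-SDE (or any Gaussian-kernel) forward process, $p_{\text{data}}^t > 0$ on all of $\mathbb{R}^d$ for each $t \in (0, T]$, so the almost-everywhere conclusion extends to every $\rvx_t$ under mild continuity of both sides in $\rvx_t$.

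The main obstacle here is really only the non-parametric capacity assumption: without it, the conclusion would only be approximate rather than exact. This assumption is a standard and innocuous convention in the score-matching literature, so the corollary follows essentially as a one-line deduction from Theorem 1 once it is made explicit.
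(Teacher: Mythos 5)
Your proposal is correct and takes essentially the same route as the paper, which states the corollary without a separate proof and treats it as an immediate consequence of Theorem 1 ("the equivalence of the objective functions ensures the proper optimality"). Your version is in fact slightly more careful than the paper's, since you make explicit the non-parametric capacity assumption and the passage from almost-everywhere to everywhere equality, both of which the paper leaves implicit.
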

While we utilize biased datasets, the equivalence of the objective functions ensures the proper optimality. We also incorporate utilizing of $\mathcal{D}_{\text{ref}}$ for practical implementation (See \Cref{subsec:A2}). In summary, we can converge our model distribution to the underlying true unbiased data distribution by utilizing all observed data.

\section{Experiments}
This section empirically validates that the proposed method effectively operates on real-world biased datasets. We outline the experiment setups below.

\textbf{Datasets}~
We consider CIFAR-10, CIFAR-100, FFHQ, and CelebA datasets, which are commonly used for generative learning. Note that we access the latent bias factor only for the data construction and evaluations. To construct $\mathcal{D}_{\text{bias}}$, we consider class as a latent bias factor in CIFAR-10 and CIFAR-100. For human face datasets, we consider gender as a latent bias factor for FFHQ, and both gender and hair color for CelebA. To construct $\mathcal{D}_{\text{ref}}$, we randomly sample a subset from the entire unbiased dataset. We experiment with various numbers of $|\mathcal{D}_{\text{ref}}|$ on each dataset. See \Cref{subsec:D1} for more detailed explanations of the dataset.

\textbf{Metric}~
Our goal is to make the model distribution converge to an unbiased distribution. To measure this, we use the Fr$\acute{\text{e}}$chet Inception Distance (FID)~\citep{heusel2017gans}, which measures the distance between the distributions. We calculate the FID between 1) 50k samples from the model distribution and 2) all the samples from the entire unbiased dataset.

\textbf{Baselines}~
We establish three baselines for our main comparison. DSM(ref) and DSM(obs) denote the naive training of diffusion model with $\mathcal{D}_{\text{ref}}$ and $\mathcal{D}_{\text{obs}}$, respectively. IW-DSM denotes a method using time-independent importance reweighting in \cref{eq:ir}, and TIW-DSM denotes our method in \cref{eq:obj}. Note that both IW-DSM and TIW-DSM also incorporate the use of $\mathcal{D}_{\text{ref}}$ for our experiment (See \Cref{subsec:A2} for more details), and we always use the same experimental setting across the baselines by only varying objective functions (See \Cref{subsec:D2} for the detailed training configurations).


\subsection{Latent Bias on the class}
\label{subsec:4.1}
\begin{table}[h!]
    \centering
    \caption{Experimental results on CIFAR-10 and CIFAR-100 datasets with various reference size. The reference size indicates $\frac{|\mathcal{D}_{\text{ref}}|}{|\mathcal{D}_{\text{bias}}|}$. All the reported values are the FID ($\downarrow$) between the generated samples from each method and all the samples from the entire unbiased dataset.}
    \adjustbox{max width=\textwidth}{%
    \begin{tabular}{c|lc cccc p{0.01\textwidth}  cccc p{0.01\textwidth}|}
        \toprule
             \multirow{1}{*}{\rotatebox[origin=c]{90}{Data}}&Bias set&& \multicolumn{4}{c}{CIFAR-10 (LT)}  && \multicolumn{4}{c}{CIFAR-100 (LT)}                       \\
             
            & Reference size && 5\%&10\%& 25\%&50\%      && 5\%&10\%&25\%&50\% \\
            \cmidrule{0-11}
        \multirow{4}{*}{\rotatebox[origin=c]{90}{Method}} & DSM(ref)  && 16.47&11.56&10.77&5.19&&21.27& 17.17& 15.84& 8.57\\
                                                          &  DSM(obs)&& 12.99& 10.75 & 8.45& 7.35     && 15.20&11.06& 8.36&6.17\\
        \\[-0.66em]
                                                          & IW-DSM && 15.79& 11.45 & 8.19 & 4.28  && 20.44& 15.87 & 12.81& 8.40\\
                                                          & TIW-DSM && \bf{11.51}& \bf{8.08}& \bf{5.59}& \bf{4.06} && \bf{14.46}& \bf{10.02}&\bf{7.98}&\bf{5.89}\\
        \bottomrule
    \end{tabular}
    }
\label{tab:main1}
 \vspace{-3mm}
\end{table}

\begin{figure}[h]
    
    \hfill
     \centering
     \begin{subfigure}[b]{0.175\textwidth}
         \centering
         \includegraphics[width=\textwidth, height=1.0in]{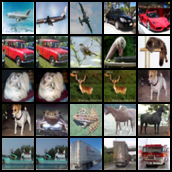}
         \caption{DSM(ref)\\ \;(16.47 / 0.16)}
         \label{}
     \end{subfigure}
     \hfill
     \begin{subfigure}[b]{0.175\textwidth}
         \centering
         \includegraphics[width=\textwidth, height=1.0in]{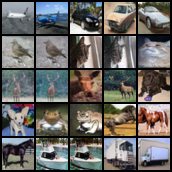}
         \caption{IW-DSM\\ \;(15.79 / 0.18)}
         \label{}
     \end{subfigure}
     \hfill
     \begin{subfigure}[b]{0.175\textwidth}
         \centering
         \includegraphics[width=\textwidth, height=1.0in]{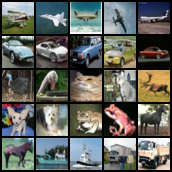}
         \caption{DSM(obs)\\ \;(12.99 / 0.42)}
         \label{}
     \end{subfigure}
     \hfill
     \begin{subfigure}[b]{0.175\textwidth}
         \centering
         \includegraphics[width=\textwidth, height=1.0in]{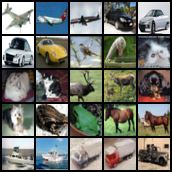}
         \caption{TIW-DSM\\ \;(11.51 / 0.40)}
         \label{}
     \end{subfigure}
     \hfill
     \begin{subfigure}[b]{0.27\textwidth}
         \centering
         \includegraphics[width=\textwidth, height=1.0in]{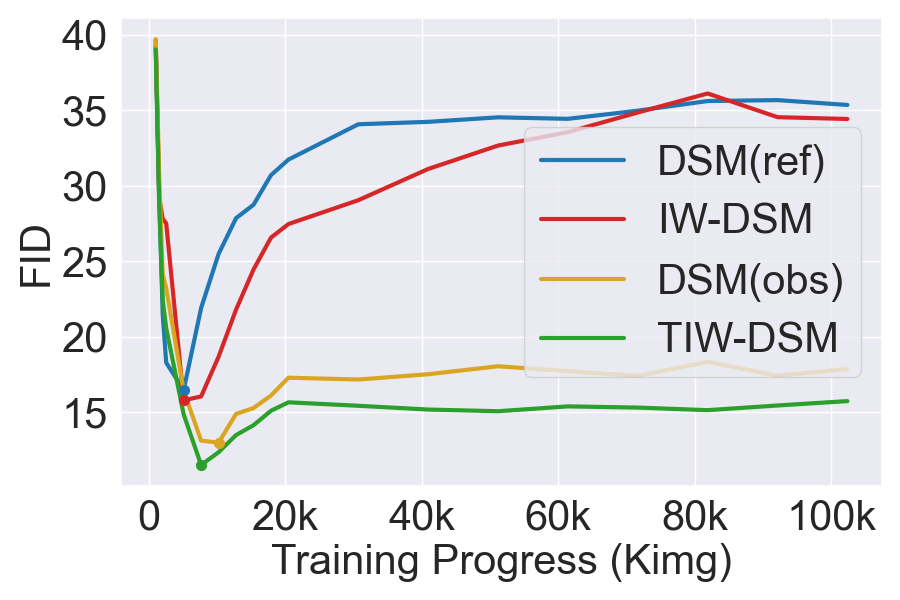}
         \caption{Training Curve}
         \label{fig:training_e}
     \end{subfigure}
     \hfill
        \caption{Analysis on CIFAR-10 (LT / 5\%) experiments. (a-d) Samples that reflect the diversity and latent statistics with (FID / Recall). (e) Training curves for each method.}
        \label{fig:training}
\end{figure}
We construct $\mathcal{D}_{\text{bias}}$ following the Long Tail (LT) dataset~\citep{cao2019learning} for CIFAR-10 and CIFAR-100. 
\Cref{tab:main1} shows the results with various reference sizes.  First of all, the performance gets better as the reference size gets larger for all methods. Secondly, when comparing DSM(ref) and DSM(obs), we find that the naive use of $\mathcal{D}_{\text{bias}}$ yields better results when the reference size is too small, or the strength of bias is weak (case of CIFAR-100). However, DSM(obs) exhibits poor performance when the reference size becomes larger in the CIFAR-10 dataset. Since DSM(obs) does not guarantee to converge on the unbiased distribution, the performance is also not guaranteed under such an extreme bias setting. Third, IW-DSM consistently exhibits slightly better performance compared to DSM(ref). IW-DSM utilizes $\mathcal{D}_{\text{ref}}$ as well as $\mathcal{D}_{\text{bias}}$ with the weighted value. However, we observed that the reweighting value for $\mathcal{D}_{\text{bias}}$ is too small (will be discussed in \cref{subsec:4.4}), which makes the effect of the $\mathcal{D}_{\text{bias}}$ marginal. In many cases, the performance of IW-DSM is even worse than the naive use of $\mathcal{D}_{\text{bias}}$. Finally, the proposed method TIW-DSM outperforms all the baseline models in every case we tested by a large margin. The comparison of IW-DSM and TIW-DSM directly indicates the effect of time-dependent importance reweighting. IW-DSM and TIW-DSM optimize two equivalent objective functions up to a constant under optimal density ratio functions (See \Cref{subsec:A4} for explanation), so the performance gain is purely from the accurate estimation of the time-dependent density ratio. 

\Cref{fig:training} shows the samples in (a)-(d) and the convergence curve on each method in (e). DSM(ref) and IW-DSM illustrate extremely low sample diversity, which results in many samples being identical. DSM(obs) displayed a variety of samples, but it is heavily biased. Out of 10 latent classes, 2 latent classes accounted for 40\% of the total proportion that is being calculated by a pre-trained classifier. TIW-DSM shows the diverse samples with unbiased proportions. We provide a quantitative measure of bias intensity in \Cref{fig:alpha_al}. Additionally, \Cref{fig:training_e} shows that DSM(ref) and IW-DSM suffer from overfitting, which often occurs when training with limited data (See \Cref{sec:C} for explanation). This could be evidence that IW-DSM cannot fully utilize the information from $\mathcal{D}_{\text{bias}}$.

\subsection{Latent bias on sensitive attributes}
\begin{table}[h]
    \begin{minipage}[b]{0.65\textwidth}
    \centering
    \caption{Experimental results on FFHQ with various bias settings \& reference size. The reference size indicates $\frac{|\mathcal{D}_{\text{ref}}|}{|\mathcal{D}_{\text{bias}}|}$. All the reported values are the FID ($\downarrow$).}
     \adjustbox{max width=\textwidth}{%
    \begin{tabular}{c|lc cc p{0.01\textwidth}  cc p{0.01\textwidth} }
        \toprule
            \multirow{1}{*}{\rotatebox[origin=c]{90}{Data}}&Bias set && \multicolumn{2}{c}{FFHQ (80\%)}  && \multicolumn{2}{c}{FFHQ (90\%)} & \\
            & Reference size && 1.25\%&12.5\% && 1.25\%&12.5\% &\\
            \cmidrule{0-7}
        \multirow{4}{*}{\rotatebox[origin=c]{90}{Method}} & DSM(ref)  && 12.69&6.22&&12.69& 6.22&\\
                                                          & DSM(obs)&& 7.29& 4.88 && 8.59&5.75&\\
        \\[-0.66em]
                                                          & IW-DSM && 11.30& 5.50   && 11.68& 5.60 &\\
                                                          & TIW-DSM && \bf{7.10}& \bf{4.49} && \bf{8.06}& \bf{4.83} &\\
        \bottomrule
    \end{tabular}
    \label{tab:FFHQ}
    }
    \end{minipage}
    \hfill
    \begin{minipage}[t]{0.34\textwidth}
    \setlength{\abovecaptionskip}{1pt}
    \centering
    \includegraphics[width=0.95\textwidth]{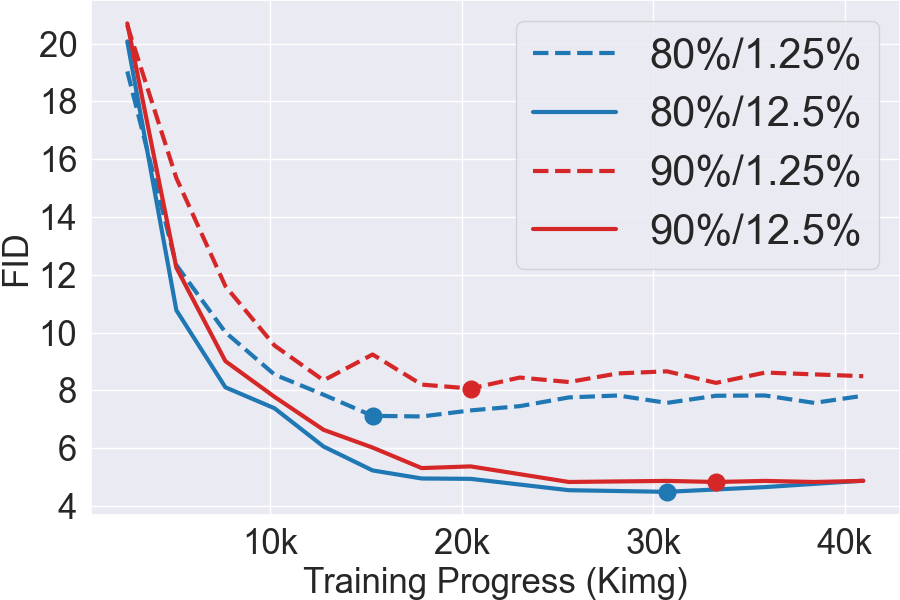}
    \captionof{figure}{The convergence of TIW-DSM on various bias level \& reference size.}
    \label{fig:4}
    \end{minipage}
    \vspace{-2mm}
\end{table}

\begin{figure}[h!]
    \hfill
     \centering
     \begin{subfigure}[b]{0.475\textwidth}
         \centering
         \includegraphics[width=\textwidth, height=0.75in]{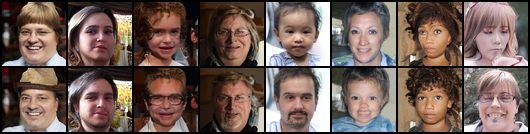}
         \caption{FFHQ (Gender 90\% / 1.25\%)}
         \label{subfig:5_1}
     \end{subfigure}
     \hfill
     \begin{subfigure}[b]{0.475\textwidth}
         \centering
         \includegraphics[width=\textwidth, height=0.75in]{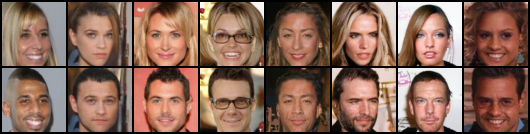}
         \caption{CelebA (Benchmark / 5\%)}
         \label{subfig:5_2}
     \end{subfigure}
     \hfill
             \caption{Majority to minority conversion through our objective. The first row illustrates the samples from DSM(obs), and the second row illustrates the samples from TIW-DSM under the same random seeds. (a) indicates the female to male conversion. (b) indicates the (female \& non-black hair) to (male\& black hair) conversion.}
        \label{fig:5}
        \vspace{-2mm}
\end{figure}

We construct $\mathcal{D}_{\text{bias}}$ by making the portion of females as 80\% and 90\% in FFHQ experiments. \Cref{tab:FFHQ} demonstrates the performance on each bias setting and various reference sizes. TIW-DSM shows superior results similar to the results from \Cref{tab:main1}. This experiment includes a scenario with an extremely small reference set size, which is 1.25\%. TIW-DSM still works well on very limited reference sizes. 
While TIW-DSM aims to estimate the unbiased data distribution regardless of the intensity of bias in $\mathcal{D}_{\text{bias}}$, a lower bias intensity led to better adherence to the unbiased data distribution. \Cref{fig:4} provides the stable training curves for various experiment settings.


\begin{wraptable}{r}{0.55\textwidth}
\vspace{-4mm}
\caption{Mitigating the bias exists in the CelebA benchmark dataset with 5\% reference size.} 
\vspace{-0mm}
    \begin{tabular}{l|c|cccc }
        \toprule
                                                            \multirow{2}{*}{Method} & \multirow{2}{*}{FID} & \multicolumn{4}{c}{Latent Statistics (\%)}  \\
                                                             &  &$z_{\text{F},\text{NB}}$& $z_{\text{M},\text{NB}}$ &$z_{\text{F},\text{B}}$ & $z_{\text{M},\text{B}}$ \\
                                                            \midrule
                                                           DSM(ref)  & 2.82 &28.0&29.8&19.3&22.9\\
                                                           DSM(obs) &3.55 &42.8&30.0&13.0&14.2\\
        &&&&\\[-0.66em]
                                                           IW-DSM & 2.43  &34.6&29.7&17.1&18.6 \\
                                                           TIW-DSM & \bf{2.40}&31.0&27.8&20.1&21.1 \\
        \bottomrule
    \end{tabular}
    \vspace{-4mm}
    \label{tab:celeba}
\end{wraptable}
We also tackle the bias that actually exists in the common benchmark. We observe CelebA benchmark has suffered from bias with respect to gender and hair color. If we consider four subgroups: female without black hair $(z_{\text{F},\text{NB}})$, male without black hair $(z_{\text{M},\text{NB}})$, female with black hair $(z_{\text{F},\text{B}})$, and male with black hair $(z_{\text{M},\text{B}})$, each group has the following proportion: $ p(z_{\text{F},\text{NB}})=46.5\%,p(z_{\text{M},\text{NB}})=29.6\%,p(z_{\text{F},\text{B}})=11.5\%,p(z_{\text{M},\text{B}})=12.4\%$. We construct the $|\mathcal{D}_{\text{ref}}|$ as a 5\% of CelebA datasets, which random samples from the unbiased dataset. \Cref{tab:celeba} shows the experiment results for the CelebA dataset. To examine the effectiveness of weak supervision itself, we train DSM(obs) without using the information of $\mathcal{D}_{\text{ref}}$ in this experiment, which shows poor results from bias. TIW-DSM also outperforms the other baselines in terms of FID, implying that it is the best approach to address real-world bias under weak supervision. Additionally, we examine the latent statistics on generated samples using a pre-trained classifier (See \Cref{subsec:D3} for details).  \Cref{2subfig:ours} shows the generated samples from the proposed method that reflects such proportions. \Cref{fig:5} explicitly shows the reason why the bias was mitigated. Some of the samples looked in the majority latent group from DSM(obs) transformed into a minority group from TIW-DSM. These changes helped to adjust toward the equal portion in each subgroup.

\subsection{Ablation Studies}
\label{subsec:4.3}
\paragraph{Loss component}
\begin{wraptable}{r}{6.8cm}
\vspace{-4mm}
    \centering
    \caption{Component ablation on the proposed method. \textbf{W} indicates the time-dependent reweighting term, \textbf{C} indicates the score correction term. All reported values are FID ($\downarrow$). }
    \adjustbox{max width=\textwidth}{%
    \begin{tabular}{cc cccc p{0.01\textwidth}}
        \toprule         
             \multicolumn{2}{c}{Component} & \multicolumn{4}{c}{Reference size}\\
             \cmidrule{0-5}
             \textbf{W} & \textbf{C} &  5\%&10\%& 25\%&50\% \\
             \cmidrule{0-5}
              \xmark&\xmark& 12.99&10.57&8.45&7.35\\
              \cmark&\xmark&13.27& 10.80 & 8.26 & 7.28\\
              \xmark&\cmark& 11.62& 8.15 & \bf{5.43} & 4.14\\
              \cmark&\cmark& \bf{11.51}& \bf{8.08}& 5.59 & \bf{4.06}\\
        \bottomrule
    \end{tabular}
        \vspace{-4mm}
    }
\label{tab:lc_ab}
\end{wraptable}
The proposed loss function utilizes the time-dependent density ratio for two purposes, which is the reweighting (\textbf{W}) and the score correction (\textbf{C}). We conduct ablation studies in \Cref{tab:lc_ab} to assess the effectiveness of each role. Note that if we do not use both, the objective becomes the same as DSM(obs). Using only reweighting without score correction does not guarantee that the model distribution will converge to an unbiased data distribution, so the performance does not improve. While using only score correction establishes a missing link to the traditional score-matching objective, it ensures that the model converges to an unbiased data distribution (See \Cref{subsec:A5} for mathematical explanation), which showed quite good results. The use of these two components simultaneously performs best in most cases.

\paragraph{Density ratio scaling} 
\begin{figure}[h]
    \hfill
     \centering
     \begin{subfigure}[b]{0.48\textwidth}
         \centering
         \includegraphics[width=\textwidth, height=1.4in]{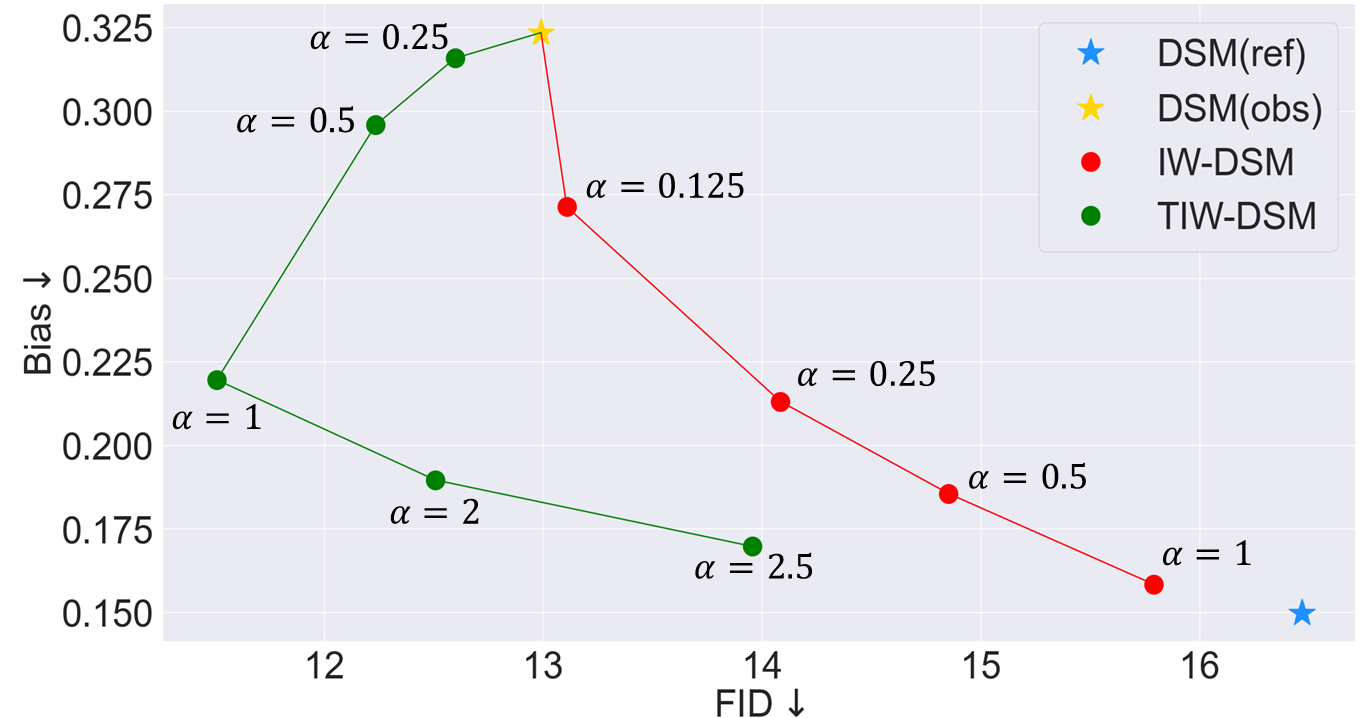}
         \caption{CIFAR-10 (LT / 5\%)}
         \label{subfig:6_1}
     \end{subfigure}
     \begin{subfigure}[b]{0.48\textwidth}
         \centering
         \includegraphics[width=\textwidth, height=1.4in]{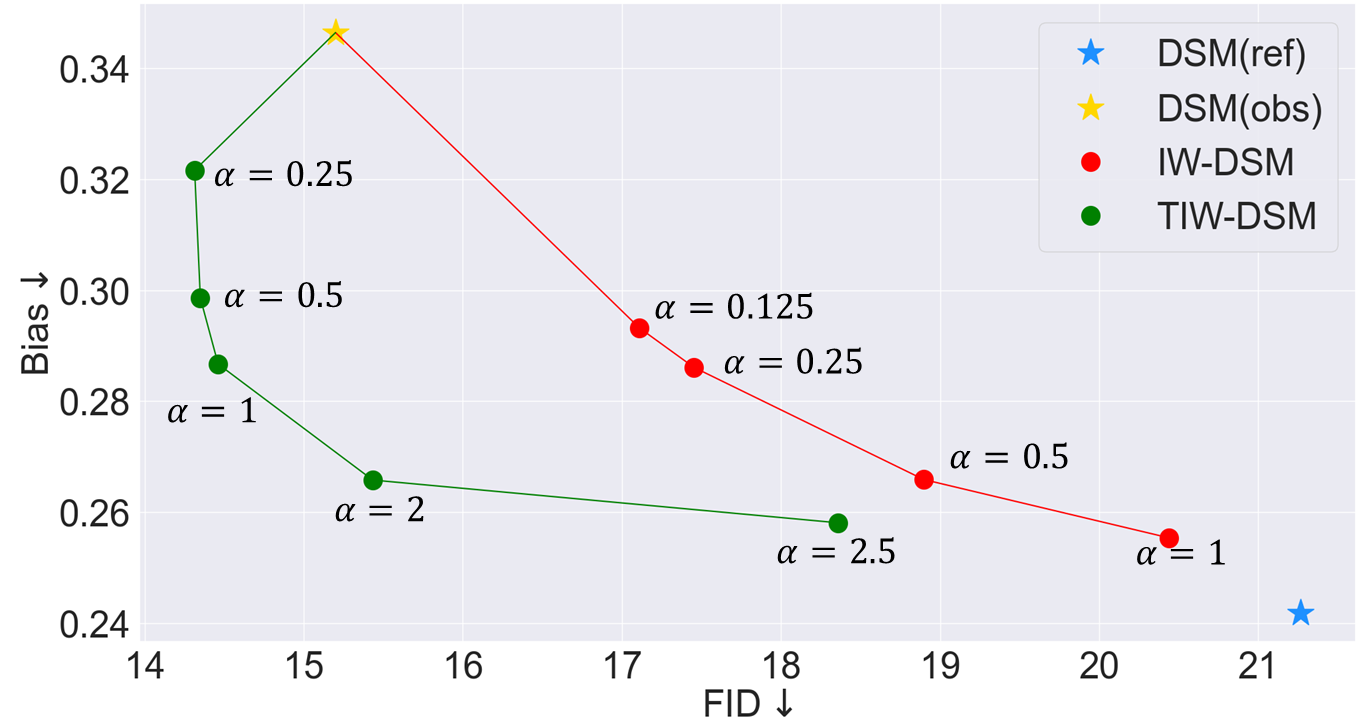}
         \caption{CIFAR-100 (LT / 5\%)}
         \label{subfig:6_2}
     \end{subfigure}
     \hfill
        \caption{Bias - FID tradeoffs on the methods. We sweep $\alpha\in \left\{0.25,0.5,1,2,2.5\right\}$ for TIW-DSM, and $\alpha\in \left\{0.125,0.5,0.25,1\right\}$ for IW-DSM. }
        \label{fig:alpha_al}
    \vspace{-4mm}
\end{figure}
The density ratio or confidence of the classifier can be scaled through a hyperparameter after training~\citep{dhariwal2021diffusion}. We generalize our objective utilizing the $\alpha$-scaled density ratio: $\mathcal{L}_{\text{TIW-DSM}}(\boldsymbol{\theta};p_{\text{bias}}, w_{\boldsymbol{\phi}^*}^t(\cdot)^{\alpha})$. Note that $\alpha=1$ indicates the original objective function and $\alpha=0$ becomes equivalent to DSM(obs), which is explained in \Cref{subsec:A6}. We consider the experiments on CIFAR-10 and CIFAR-100 with a 5\% reference set size. We also conduct $\alpha$ scaling on the IW-DSM baseline. For quantitative analyses, we also measure the strength of bias through $\textit{Bias}:= \Sigma_{z} ||\mathbb{E}_{\rvx\sim \mathcal{D}_{\text{ref}}}[p(z|\rvx)]-\mathbb{E}_{\rvx\sim p_{\boldsymbol{\theta}}}[p(z|\rvx)]||_2$ (See \Cref{subsec:D3} for more detail about this metric). \Cref{fig:alpha_al} illustrates that DSM(ref) shows a poor FID because it only trains on a small amount of $\mathcal{D}_{\text{ref}}$ while being free from the bias. DSM(obs) achieves better FID from a larger amount of data but suffers from bias. IW-DSM almost linearly trade-off these two metrics by adjusting $\alpha$. TIW-DSM showed improvements in both metrics within the alpha range of 0 to 1. Furthermore, TIW-DSM outperforms IW-DSM significantly in terms of FID at the same bias strength.

\subsection{Density ratio analysis}
\label{subsec:4.4}
\begin{figure}[h]
    \hfill
     \begin{subfigure}[b]{0.266\textwidth}
         \centering
         \includegraphics[width=\linewidth]{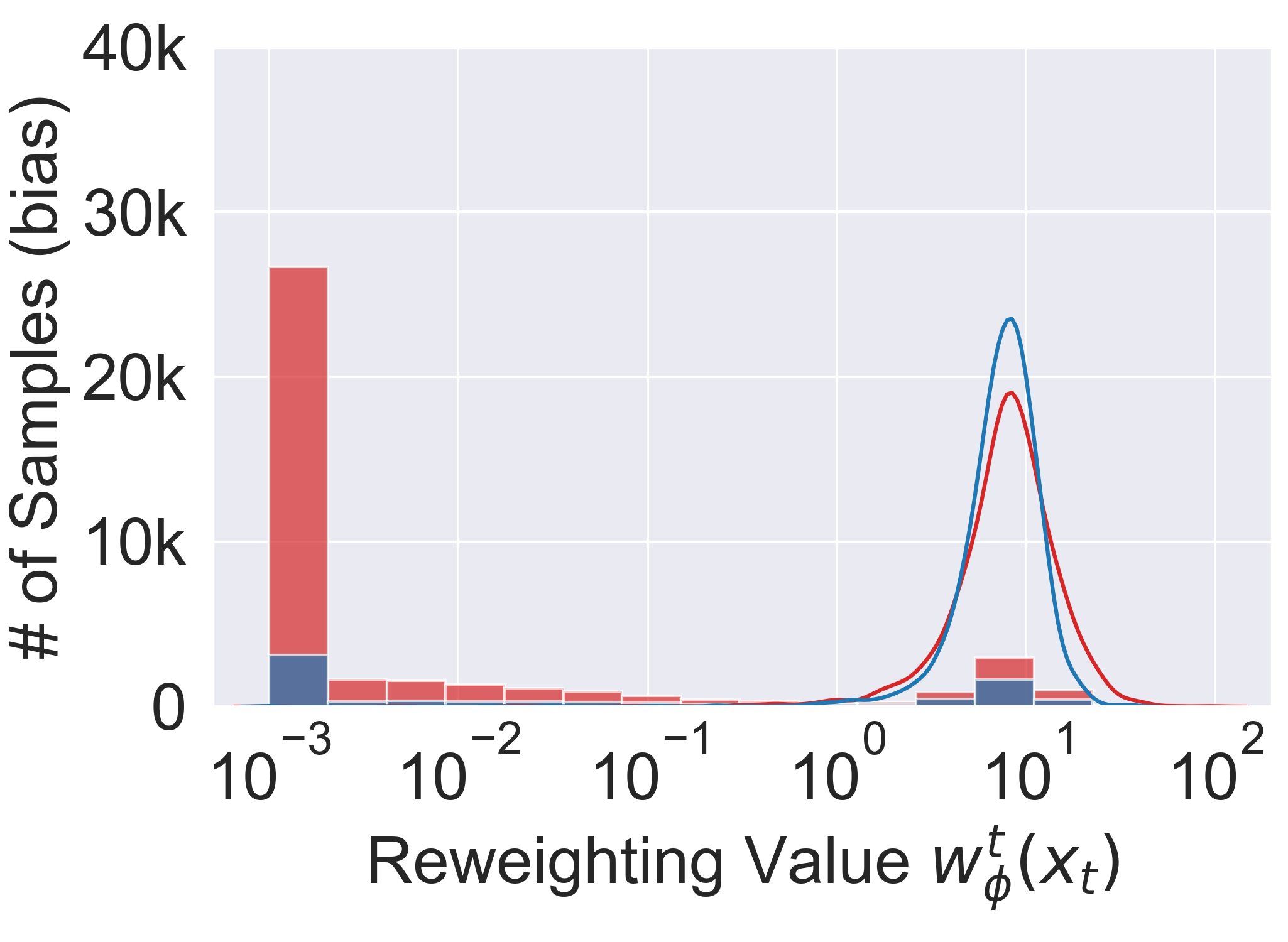}
         \caption{$\sigma(t)=0$}
         \label{subfig:11}
     \end{subfigure}
      \hfill
     \begin{subfigure}[b]{0.227\textwidth}
         \centering
         \includegraphics[width=\linewidth]{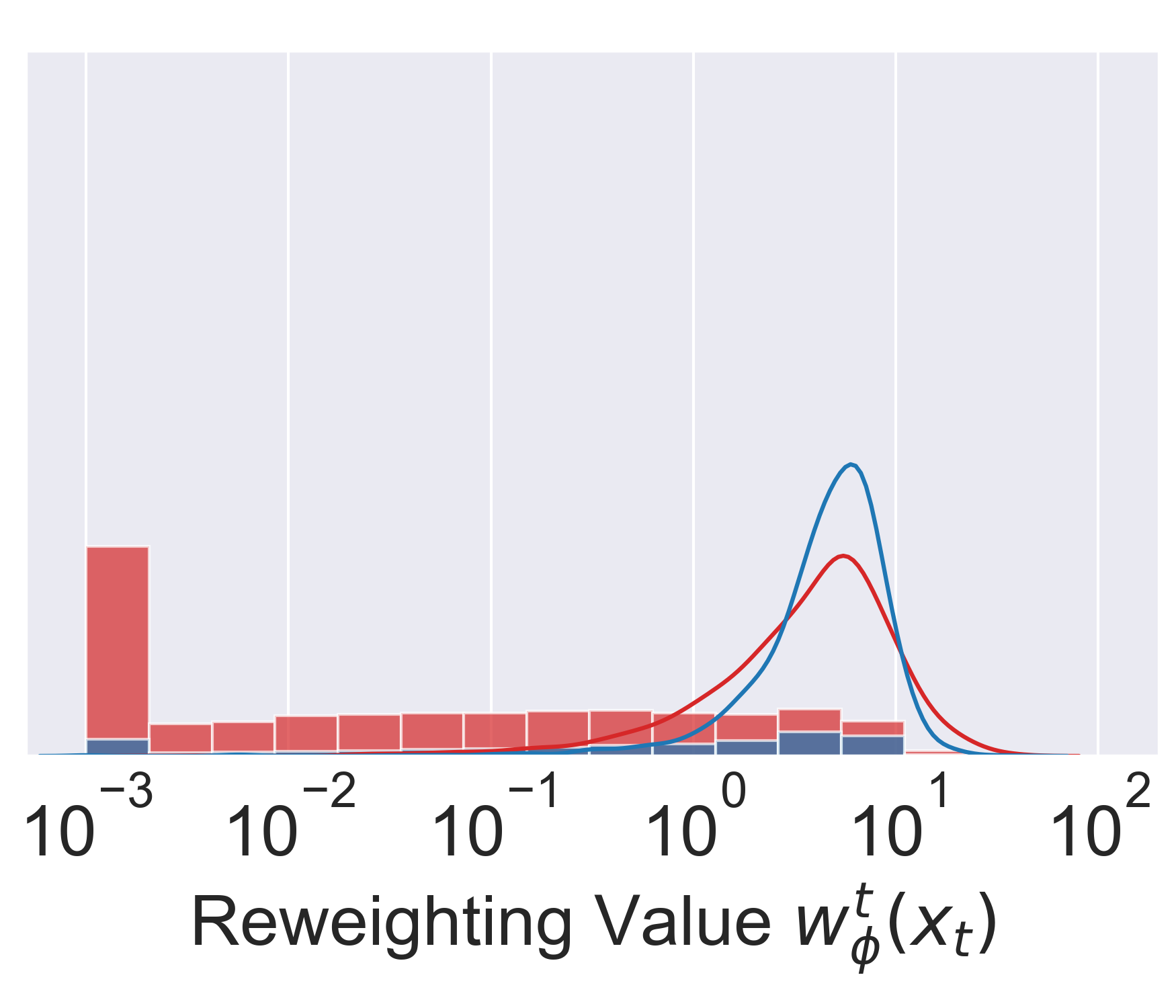}
         \caption{$\sigma(t)=0.34$}
         \label{subfig:22}
     \end{subfigure}
     \hfill
     \begin{subfigure}[b]{0.227\textwidth}
         \centering
         \includegraphics[width=\linewidth]{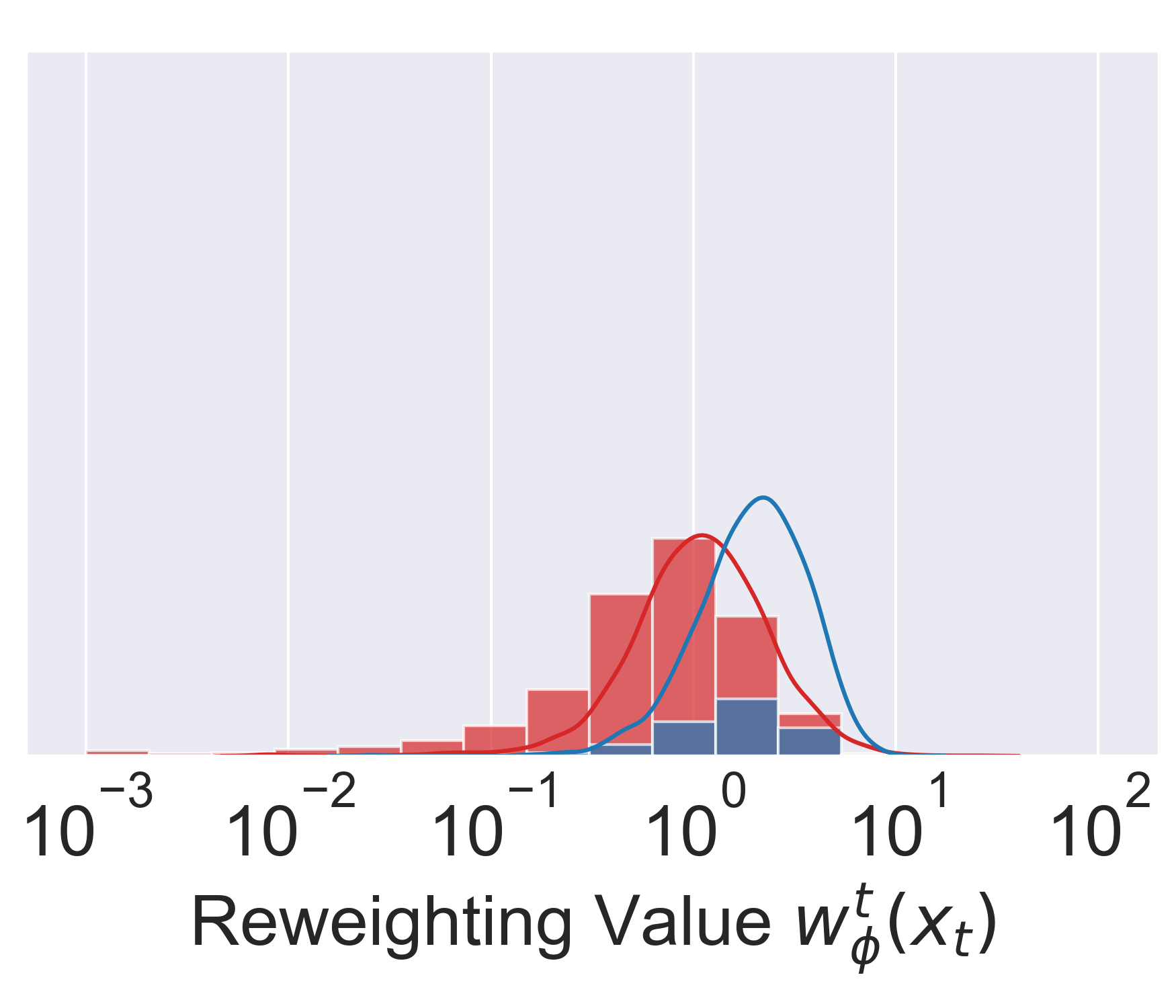}
         \caption{$\sigma(t)=0.72$ }
         \label{subfig:33}
     \end{subfigure}
          \centering
     \begin{subfigure}[b]{0.253\textwidth}
         \centering
         \includegraphics[width=\linewidth]{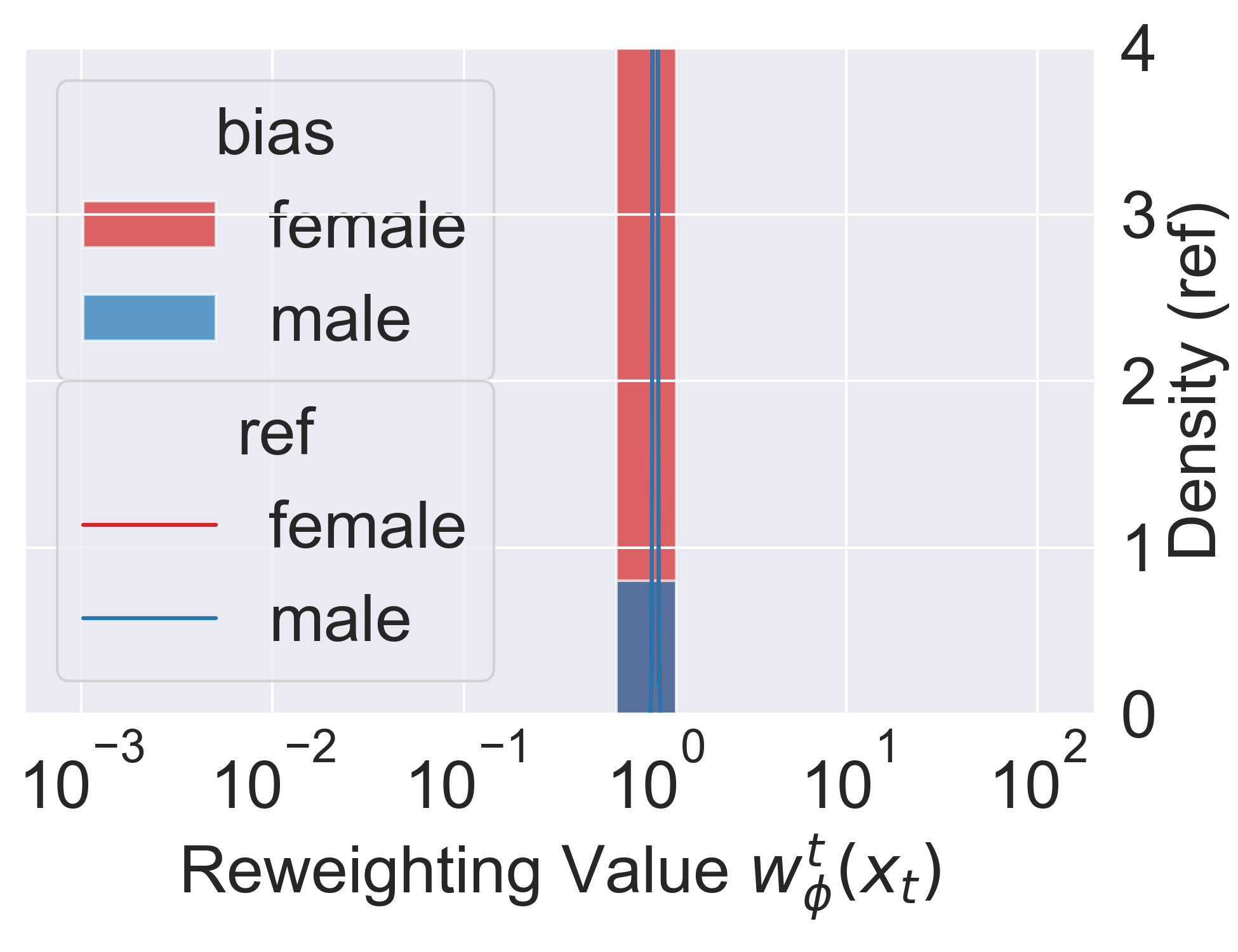} 
         \caption{$\sigma(t)=80$ }
         \label{subfig:44}
     \end{subfigure}
     \hfill
     \hfill
        \caption{Reweighting value analysis on $\mathcal{D}_{\text{bias}}$ and $\mathcal{D}_{\text{ref}}$ of FFHQ (Gender 80\% / 12.5\%) according to diffusion time $\sigma(t)$. (a) Most of the reweighting value on $\mathcal{D}_{\text{bias}}$ is extremely small. (d) Most of the reweighting value is 1 on both $\mathcal{D}_{\text{bias}}$, and $\mathcal{D}_{\text{ref}}$. (b-c) smooth interpolation between (a) and (c). }
        \label{fig:7}
        \vspace{-2mm}
\end{figure}
\begin{wrapfigure}{r}{5cm}
    \hfill
    \vspace{-2mm}
    \begin{subfigure}[b]{0.35\textwidth}
         \includegraphics[width=\textwidth, height=2.0in]{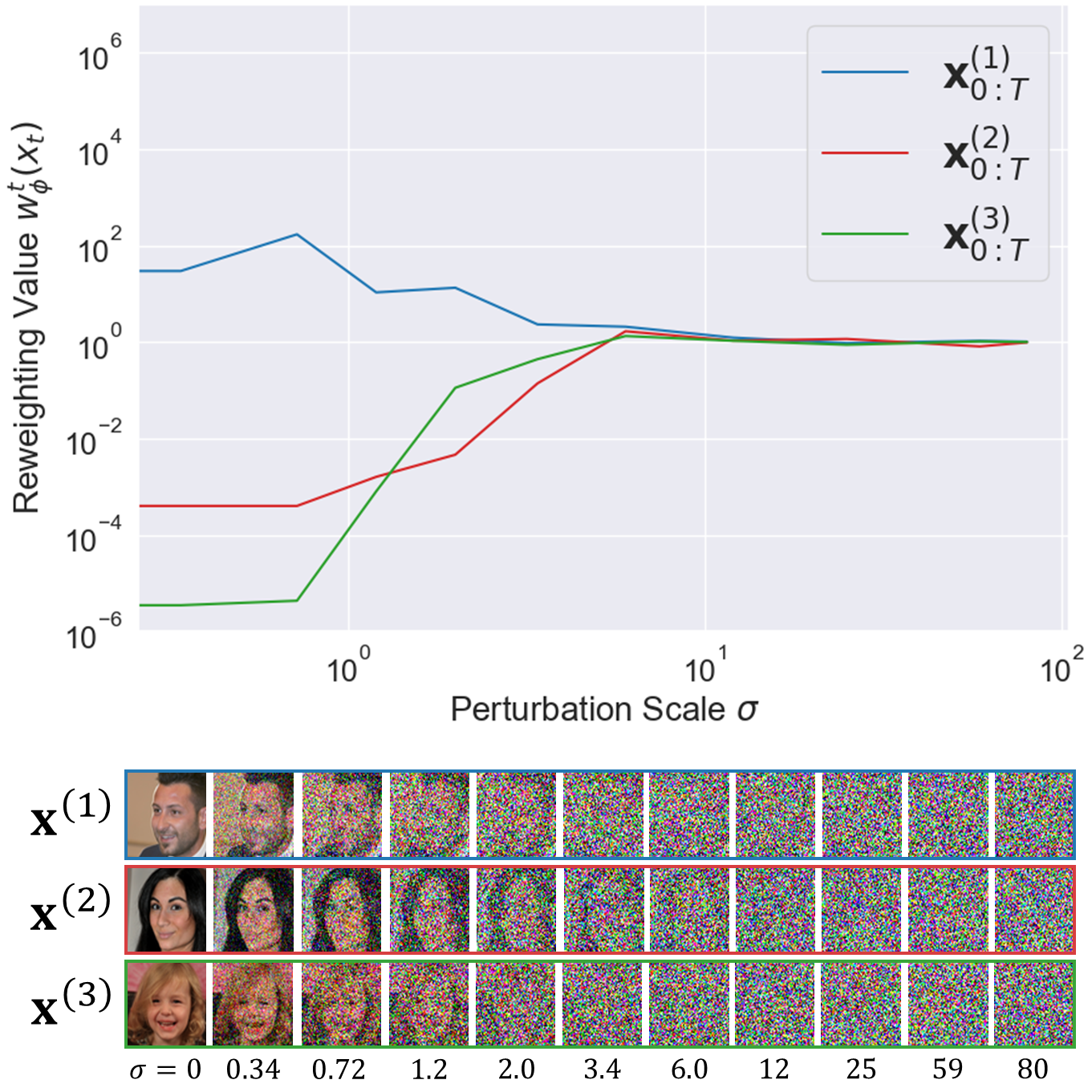}
    \end{subfigure}
         \caption{The density ratio changes according to the diffusion time.}
         \label{fig:8}
         \vspace{-4mm}
\end{wrapfigure}

This section investigates the importance reweighting value according to the diffusion time in our experiment. \Cref{fig:7} illustrates the histrogram of reweighting values on $\mathcal{D}_{\text{bias}}$ in FFHQ (Gender 80\% / 12.5\%). When the diffusion time $\sigma(t)=0$, the trained discriminator predicts overconfidently, resulting in more than 75\% of $\mathcal{D}_{\text{bias}}$ being assigned weights less than 0.01. Since IW-DSM only uses the weight value on $\sigma(t)=0$, it does not utilize most of the information from $\mathcal{D}_{\text{bias}}$. This is the reason why the performance of IW-DSM is only marginally better than DSM(ref). While the perturbation undergoes, the reweighting value grows rapidly, which TIW-DSM leads to utilizing more information from $\mathcal{D}_{\text{bias}}$. Note that the minority latent group (or, the males in this setting) tends to get a higher value of reweighting value than the major group (female group) in each diffusion time step, which is the reason for bias mitigation. \Cref{fig:8} shows point-wise examples that change the importance weights in $\mathcal{D}_{\text{bias}}$. $\rvx^{(2)}$ and $\rvx^{(3)}$ have extremely low reweighting values at $\sigma(t)=0$, but these weights increase as time progresses, providing valuable information for TIW-DSM training.



\section{Conculsion}
In this paper, we address the problem of dataset bias for the diffusion models. We highlight the previous time-independent importance reweighting undergoes error propagation from density ratio estimation, and the proposed time-dependent importance reweighting alleviates such problems. We derive the proposed weighting objective to become tractable by utilizing the time-dependent density ratio for reweighting as well as score correction. The proposed objective is connected to the traditional score-matching objective with unbiased distribution, which guarantees convergence to an unbiased distribution. Our experimental results on various kinds of datasets, weak supervision settings, and bias settings validate the proposed method's notable benefits.

\newpage

\subsubsection*{Acknowledgments}
This research was supported by AI Technology Development for Commonsense Extraction, Reasoning, and Inference from Heterogeneous Data(IITP)
funded by the Ministry of Science and ICT(2022-0-00077).

\bibliography{main}
\bibliographystyle{iclr2024_conference}

\appendix
\newpage
\tableofcontents
\newpage
\section{Proofs and mathematical explanations}

\subsection{Proof of Theorem \ref{thm:1}} 
\label{subsec:A1}
\thma*
\begin{proof}
First, the score-matching objective $\mathcal{L}_{\text{SM}}(\boldsymbol{\theta}; p_{\text{data}})$ can be derived as follows.
\begin{align}
\label{eq:dr0}
\mathcal{L}_{\text{SM}}(\boldsymbol{\theta}; p_{\text{data}}) =&\frac{1}{2}\int_0^T \mathbb{E}_{p_{\text{bias}}^{t}(\rvx_t)}\bigg[w^t_{\boldsymbol{\phi}^*}(\rvx_t)\ell_{\text{sm}}(\boldsymbol{\theta};\rvx_t)\bigg] \mathrm{d}t \\
=& \frac{1}{2}\int_0^T \mathbb{E}_{p_{\text{bias}}^{t}(\rvx_t)}\bigg[w^t_{\boldsymbol{\phi}^*}(\rvx_t)\lambda(t)||\mathbf{s}_{\boldsymbol{\theta}}(\rvx_t,t)-\nabla \log p_{\text{data}}(\rvx_t)||_2^2\bigg] \mathrm{d}t\\
= &\frac{1}{2}\int_0^T \mathbb{E}_{p_{\text{bias}}^{t}(\rvx_t)}\bigg[w^t_{\boldsymbol{\phi}^*}(\rvx_t)\lambda(t)\big[||\mathbf{s}_{\boldsymbol{\theta}}(\rvx_t,t)||_2^2-2\nabla \log p_{\text{data}}(\rvx_t)^T\mathbf{s}_{\boldsymbol{\theta}}(\rvx_t,t) \nonumber \\
& \qquad \qquad\qquad\qquad\qquad\qquad\qquad\qquad\qquad\quad +||\nabla \log p_{\text{data}}(\rvx_t)||_2^2\big] \bigg] \mathrm{d}t \label{eq:14}
\end{align}
We further derive the inner product term in the above equation using \cref{eq:sc}.
\begin{align}
& \mathbb{E}_{p_{\text{bias}}^{t}(\rvx_t)}\bigg[\nabla \log p_{\text{data}}(\rvx_t)^T \mathbf{s}_{\boldsymbol{\theta}}(\rvx_t,t) \bigg]\\
& = \int p_{\text{bias}}^t(\rvx_t)\bigg[\big[\nabla \log p_{\text{bias}}(\rvx_t)+\nabla \log w_{\boldsymbol{\phi}^*}(\rvx_t)\big]^T \mathbf{s}_{\boldsymbol{\theta}}(\rvx_t,t) \bigg] \mathrm{d}\rvx_t\\
& = \int p_{\text{bias}}^t(\rvx_t)\bigg[\nabla \log p_{\text{bias}}(\rvx_t)^T \mathbf{s}_{\boldsymbol{\theta}}(\rvx_t,t) \bigg]  \mathrm{d}\rvx_t + \int p_{\text{bias}}^t(\rvx_t)\bigg[\nabla \log w_{\boldsymbol{\phi}^*}(\rvx_t)^T \mathbf{s}_{\boldsymbol{\theta}}(\rvx_t,t) \bigg] \mathrm{d}\rvx_t  \label{eq:17}
\end{align}
We obtain the derivation for the first term of \cref{eq:17} using the log derivative trick.
\begin{align}
 \int p_{\text{bias}}^t(\rvx_t)\nabla \log p_{\text{bias}}(\rvx_t)^T &\mathbf{s}_{\boldsymbol{\theta}}(\rvx_t,t)   \mathrm{d}\rvx_t \\
&= \int \nabla p_{\text{bias}}^t(\rvx_t)^T \mathbf{s}_{\boldsymbol{\theta}}(\rvx_t,t) \mathrm{d}\rvx_t \\
&= \int \big[\nabla \int p_{\text{bias}}(\rvx_0) p_{0t}(\rvx_t|\rvx_0)\mathrm{d}\rvx_0\big]^T \mathbf{s}_{\boldsymbol{\theta}}(\rvx_t,t) \mathrm{d}\rvx_t \\
&= \int \big[ \int p_{\text{bias}}(\rvx_0)\nabla p_{0t}(\rvx_t|\rvx_0)\mathrm{d}\rvx_0\big]^T \mathbf{s}_{\boldsymbol{\theta}}(\rvx_t,t) \mathrm{d}\rvx_t \\
&= \int  \int p_{\text{bias}}(\rvx_0)\nabla p_{0t}(\rvx_t|\rvx_0)^T \mathbf{s}_{\boldsymbol{\theta}}(\rvx_t,t)\mathrm{d}\rvx_t  \mathrm{d}\rvx_0 \\
&= \int  \int p_{\text{bias}}(\rvx_0)p_{0t}(\rvx_t|\rvx_0)\nabla \log p_{0t}(\rvx_t|\rvx_0)^T \mathbf{s}_{\boldsymbol{\theta}}(\rvx_t,t) \mathrm{d}\rvx_t \mathrm{d}\rvx_0  \\
&= \mathbb{E}_{p_{\text{bias}}(\rvx_0)}\mathbb{E}_{p_{0t}(\rvx_t|\rvx_0)}  \big[ \nabla \log p_{0t}(\rvx_t|\rvx_0)^T \mathbf{s}_{\boldsymbol{\theta}}(\rvx_t,t)\big] \label{eq:24}
\end{align} 

Applying \cref{eq:17,eq:24} to \cref{eq:14}, we have:
\begin{align}
& \mathcal{L}_{\text{SM}}(\boldsymbol{\theta}; p_{\text{data}}) \\
 =&\frac{1}{2}\int_0^T \mathbb{E}_{p_{\text{bias}}^{t}(\rvx_t)}\bigg[w^t_{\boldsymbol{\phi}^*}(\rvx_t)\lambda(t)\big[||\mathbf{s}_{\boldsymbol{\theta}}(\rvx_t,t)||_2^2-2\nabla \log p_{\text{data}}(\rvx_t)^T\mathbf{s}_{\boldsymbol{\theta}}(\rvx_t,t) \nonumber\\ 
 & \qquad \qquad\qquad\qquad\qquad\qquad\qquad\qquad\qquad\qquad\qquad\qquad + ||\nabla \log p_{\text{data}}(\rvx_t)||_2^2\big] \bigg] \mathrm{d}t\\
 = &\frac{1}{2}\int_0^T \mathbb{E}_{p_{\text{bias}}(\rvx_0)}\mathbb{E}_{p_{0t}(\rvx_t|\rvx_0)}\bigg[w^t_{\boldsymbol{\phi}^*}(\rvx_t)\lambda(t)\big[||\mathbf{s}_{\boldsymbol{\theta}}(\rvx_t,t)||_2^2-2\nabla \log p_{\text{data}}(\rvx_t)^T\mathbf{s}_{\boldsymbol{\theta}}(\rvx_t,t)\big]
\bigg] \mathrm{d}t+C_1\\
 = &\frac{1}{2}\int_0^T \mathbb{E}_{p_{\text{bias}}(\rvx_0)}\mathbb{E}_{p_{0t}(\rvx_t|\rvx_0)}\bigg[w^t_{\boldsymbol{\phi}^*}(\rvx_t)\lambda(t)\big[||\mathbf{s}_{\boldsymbol{\theta}}(\rvx_t,t)||_2^2-2\nabla \log p_{0t}(\rvx_t|\rvx_0)^T \mathbf{s}_{\boldsymbol{\theta}}(\rvx_t,t)\big]
\bigg] \mathrm{d}t \nonumber \\
& - \frac{1}{2}\int_0^T \mathbb{E}_{p_{\text{bias}}(\rvx_0)}\mathbb{E}_{p_{0t}(\rvx_t|\rvx_0)}[2w^t_{\boldsymbol{\phi}^*}(\rvx_t)\lambda(t) \nabla \log w_{\boldsymbol{\phi}^*}(\rvx_t)^T \mathbf{s}_{\boldsymbol{\theta}}(\rvx_t,t) \big] \mathrm{d}t +C_1 \\
 = &\frac{1}{2}\int_0^T \mathbb{E}_{p_{\text{bias}}(\rvx_0)}\mathbb{E}_{p_{0t}(\rvx_t|\rvx_0)}\bigg[w^t_{\boldsymbol{\phi}^*}(\rvx_t)\lambda(t)\big[||\mathbf{s}_{\boldsymbol{\theta}}(\rvx_t,t)-\nabla \log p_{0t}(\rvx_t|\rvx_0)||_2^2\big] 
\bigg] \mathrm{d}t +C_2 \nonumber \\
& - \frac{1}{2}\int_0^T \mathbb{E}_{p_{\text{bias}}(\rvx_0)}\mathbb{E}_{p_{0t}(\rvx_t|\rvx_0)}[2w^t_{\boldsymbol{\phi}^*}(\rvx_t)\lambda(t) \nabla \log w_{\boldsymbol{\phi}^*}(\rvx_t)^T \mathbf{s}_{\boldsymbol{\theta}}(\rvx_t,t) \big] \mathrm{d}t +C_1 \\
=& \frac{1}{2}\int_0^T \mathbb{E}_{p_{\text{bias}}(\rvx_0)}\mathbb{E}_{p(\rvx_t|\rvx_0)}\bigg[\lambda(t)w_{\boldsymbol{\phi}^*}^t(\rvx_t)\big[||\mathbf{s}_{\boldsymbol{\theta}}(\rvx_t,t)-\nabla \log p(\rvx_t|\rvx_0)||_2^2 \nonumber \\
& \qquad\qquad\qquad\qquad\qquad\qquad\qquad\qquad\quad -2\mathbf{s}_{\boldsymbol{\theta}}(\rvx_t,t)^T \nabla \log w_{\boldsymbol{\phi}^*}^t(\rvx_t)\big]\bigg]   \mathrm{d}t +C_1 + C_2\\  
=& \frac{1}{2}\int_0^T \mathbb{E}_{p_{\text{bias}}(\rvx_0)}\mathbb{E}_{p(\rvx_t|\rvx_0)}\bigg[\lambda(t)w_{\boldsymbol{\phi}^*}^t(\rvx_t)\big[||\mathbf{s}_{\boldsymbol{\theta}}(\rvx_t,t)-\nabla \log p(\rvx_t|\rvx_0)||_2^2  \nonumber \\
& \qquad\qquad\qquad\qquad\qquad  - 2\mathbf{s}_{\boldsymbol{\theta}}(\rvx_t,t)^T \nabla \log w_{\boldsymbol{\phi}^*}^t(\rvx_t) + 2\nabla \log p(\rvx_t|\rvx_0) ^T  \nabla \log w_{\boldsymbol{\phi}^*}^t(\rvx_t) \nonumber \\
& \qquad\qquad\qquad\qquad\qquad+ || \nabla \log w_{\boldsymbol{\phi}^*}^t(\rvx_t) ||_2^2  \big]\bigg]   \mathrm{d}t +C_1 + C_2 + C_3\\  
=& \frac{1}{2}\int_0^T \mathbb{E}_{p_{\text{bias}}(\rvx_0)}\mathbb{E}_{p(\rvx_t|\rvx_0)}\bigg[\lambda(t)w_{\boldsymbol{\phi}^*}^t(\rvx_t)\big[||\mathbf{s}_{\boldsymbol{\theta}}(\rvx_t,t)-\nabla \log p(\rvx_t|\rvx_0)- \nabla \log w_{\boldsymbol{\phi}^*}^t(\rvx_t)||_2^2  \big]\bigg]\mathrm{d}t  +C \\
=& \mathcal{L}_{\text{TIW-DSM}}(\boldsymbol{\theta};p_{\text{bias}},w_{\boldsymbol{\phi}^*}^t(\cdot)) + C \label{eq:dr1}
\end{align}
where $C_1, C_2, C_3, C$ be constants that do not depend on $\boldsymbol{\theta}$. Thus, $\mathcal{L}_{\text{TIW-DSM}}(\boldsymbol{\theta};p_{\text{bias}},w_{\boldsymbol{\phi}^*}^t(\cdot))$ is equivalent to $\mathcal{L}_{\text{SM}}(\boldsymbol{\theta}; p_{\text{data}})$ with respect to $\boldsymbol{\theta}$.
\qedhere
\end{proof}

\newpage
\subsection{Theoretical analysis on time-dependent discriminator training.}
\label{subsec:ADJ}
We further discuss the training objective of a time-dependent discriminator in \cref{eq:tdagain}. We investigate whether optimizing at each time step of the density ratio would have a beneficial impact on other times. \Cref{theo:dre} provides some indirect answer. The minimization of log ratio estimation error at $t$ guarantees the smaller upper bound of the estimation error at $t=0$ for a point.
\begin{align}
\label{eq:tdagain}
\mathcal{L}_{\text{T-BCE}}(\boldsymbol{\phi}; p_{\text{data}},p_{\text{bias}}):=\int_0^T\lambda'(t)\big[\mathbb{E}_{p^t_{\text{data}}(\rvx_t)}[\log d_{\boldsymbol{\phi}}(\rvx_t, t)] + \mathbb{E}_{p^t_{\text{bias}}(\rvx_t)}[\log(1- d_{\boldsymbol{\phi}}(\rvx_t,t ))] \big]\mathrm{d}t
\end{align}

\begin{theorem}
\label{theo:dre}
Suppose the model density ratio $w^t_{\boldsymbol{\phi}}$ and $\frac{p_{\text{data}}^{t}}{p_{\text{bias}}^{t}}$ are continuously differentiable on their supports with respect to $t$, for any $\mathbf{x}$. Assume $\frac{p_{\text{data}}^{0}}{p_{\text{bias}}^{0}}$ is nonzero at any $[0,1]^{d}$, then we have
    \begin{align*}
        \bigg\vert\log{w_{\boldsymbol{\phi}}^{0}(\mathbf{x})}-\log{\frac{p_{\text{data}}^{0}(\mathbf{x})}{p_{\text{bias}}^{0}(\mathbf{x})}}\bigg\vert\le \bigg\vert\log{w_{\boldsymbol{\phi}}^{t}(\mathbf{x})}- \log{\frac{p_{\text{data}}^{t}(\mathbf{x})}{p_{\text{bias}}^{t}(\mathbf{x})}}\bigg\vert + tC(\mathbf{x},t;\boldsymbol{\phi}) + O(t^{2}),
    \end{align*}
    where $C(\mathbf{x},t;\boldsymbol{\phi})=\big\vert\frac{\partial}{\partial t}\log{w_{\boldsymbol{\phi}}^{t}(\mathbf{x})} - \frac{\partial}{\partial t}\log{\frac{p_{\text{data}}^{t}(\mathbf{x})}{p_{\text{bias}}^{t}(\mathbf{x})}}\big\vert$.
    For any $\epsilon>0$, set $\boldsymbol{\phi}_{t}^{*}=\argmin_{\boldsymbol{\phi}}{\mathbb{E}_{[t,t+\epsilon)}\big[\mathbb{E}_{p_{\text{data}}^{t}(\mathbf{x}_{t})}[\log{d_{\boldsymbol{\phi}}(\mathbf{x}_{t},t)}]+\mathbb{E}_{p_{\text{bias}}^{t}(\mathbf{x}_{t})}[\log{(1-d_{\boldsymbol{\phi}}(\mathbf{x}_{t},t))}]\big]}$. Then, the following properties hold:
    \begin{itemize}
        \item $\log{w_{\boldsymbol{\phi}_{t}^{*}}^{t}(\mathbf{x})}=\log{\frac{p_{\text{data}}^{t}(\mathbf{x})}{p_{\text{bias}}^{t}(\mathbf{x})}}$.
        \item $C(\mathbf{x},t;\boldsymbol{\phi}_{t}^{*})=0$,
    \end{itemize}
    for any $\mathbf{x}$. Therefore, at optimal $\boldsymbol{\phi}_{t}^{*}$, the following inequality holds:
    \begin{align*}
        \bigg\vert\log{w_{\boldsymbol{\phi}_{t}^{*}}^{0}(\mathbf{x})}-\log{\frac{p_{\text{data}}^{0}(\mathbf{x})}{p_{\text{bias}}^{0}(\mathbf{x})}}\bigg\vert\le O(t^{2}).
    \end{align*}
\end{theorem}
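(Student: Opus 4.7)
The plan is to view the log-ratio estimation error as a scalar function of diffusion time and Taylor expand it. Concretely, define
\begin{equation*}
g(\mathbf{x}, s) := \log w_{\boldsymbol{\phi}}^{s}(\mathbf{x}) - \log \frac{p_{\text{data}}^{s}(\mathbf{x})}{p_{\text{bias}}^{s}(\mathbf{x})}.
\end{equation*}
Under the stated differentiability hypothesis, $g(\mathbf{x}, \cdot)$ is differentiable in $s$ for each fixed $\mathbf{x}$. Taylor-expanding $g(\mathbf{x}, \cdot)$ around $s = t$ and evaluating at $s = 0$ gives
\begin{equation*}
g(\mathbf{x}, 0) = g(\mathbf{x}, t) - t\, \partial_{s}g(\mathbf{x}, s)\big|_{s=t} + O(t^{2}).
\end{equation*}
Taking absolute values and applying the triangle inequality yields the first displayed inequality, once one recognizes that $|\partial_{s}g(\mathbf{x}, t)|$ is precisely the definition of $C(\mathbf{x}, t; \boldsymbol{\phi})$.

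For the optimality claim, I would apply the standard result that for each fixed $s$ the minimizer of the binary cross-entropy loss $\mathbb{E}_{p_{\text{data}}^{s}}[\log d(\cdot,s)] + \mathbb{E}_{p_{\text{bias}}^{s}}[\log(1-d(\cdot,s))]$ over all measurable $d \colon \mathcal{X} \to [0,1]$ is $d^{\star}(\mathbf{x}, s) = p_{\text{data}}^{s}(\mathbf{x})/(p_{\text{data}}^{s}(\mathbf{x}) + p_{\text{bias}}^{s}(\mathbf{x}))$, equivalently $d^{\star}/(1-d^{\star}) = p_{\text{data}}^{s}/p_{\text{bias}}^{s}$. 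Because the objective in the theorem averages these pointwise-in-$s$ losses against a strictly positive weight on $[t, t+\epsilon)$ and $d_{\boldsymbol{\phi}}(\cdot, s)$ is free to depend on $s$, the minimizer $\boldsymbol{\phi}_{t}^{\star}$ realizes this pointwise optimum for (Lebesgue) almost every $s \in [t, t+\epsilon)$. Continuity of $w_{\boldsymbol{\phi}}^{s}$ and of $p_{\text{data}}^{s}/p_{\text{bias}}^{s}$ in $s$ then promotes the identity $w_{\boldsymbol{\phi}_{t}^{\star}}^{s}(\mathbf{x}) = p_{\text{data}}^{s}(\mathbf{x})/p_{\text{bias}}^{s}(\mathbf{x})$ to hold for every $s$ in the interval, which in particular covers $s = t$ and proves the first bullet.

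For the second bullet, the identity $g(\mathbf{x}, s) = 0$ holds throughout $[t, t+\epsilon)$, so the (one-sided) derivative at $s = t$ vanishes, giving $C(\mathbf{x}, t; \boldsymbol{\phi}_{t}^{\star}) = 0$. Plugging $g(\mathbf{x}, t) = 0$ together with $C(\mathbf{x}, t; \boldsymbol{\phi}_{t}^{\star}) = 0$ back into the Taylor bound collapses the right-hand side to the $O(t^{2})$ remainder, yielding the final conclusion.

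The main obstacle I anticipate is rigorously securing the $O(t^{2})$ remainder under the stated regularity. Mere continuous differentiability of $w_{\boldsymbol{\phi}}^{s}$ and $p_{\text{data}}^{s}/p_{\text{bias}}^{s}$ in $s$ only guarantees a first-order Taylor remainder of $o(t)$; obtaining $O(t^{2})$ requires either continuous second derivatives or a locally Lipschitz first derivative on $[0, t]$, so the hypothesis should either be strengthened or $O(t^{2})$ softened to $o(t)$. Secondary subtleties worth noting are that (i) the nonvanishing assumption on $p_{\text{data}}^{0}/p_{\text{bias}}^{0}$ should be extended to a neighborhood in $s$ to ensure the logarithms are well-defined, and (ii) passing from ``almost every $s$'' optimality to pointwise optimality at $s = t$ genuinely uses the assumed continuity in $s$ together with a sufficiently expressive discriminator class.
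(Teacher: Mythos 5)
Your proof follows essentially the same route as the paper's: a first-order Taylor expansion of the log-ratio error in $t$ followed by the triangle inequality, the pointwise Bayes-optimality of the BCE-minimizing discriminator on $[t,t+\epsilon)$ for the first bullet, and the vanishing one-sided derivative of the identically-zero error on that interval for the second. Your side remark that mere continuous differentiability only yields an $o(t)$ remainder (so the stated $O(t^{2})$ implicitly requires second-order regularity) is a fair caveat that applies equally to the paper's own argument, which assumes the stronger remainder without comment.
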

\begin{proof}
    From the Taylor expansion with respect to $t$ variable, we have
    \begin{align*}
        \log{w_{\boldsymbol{\phi}}^{0}(\mathbf{x})}-\log{\frac{p_{\text{data}}^{0}(\mathbf{x})}{p_{\text{bias}}^{0}(\mathbf{x})}}&=\log{w_{\boldsymbol{\phi}}^{t}(\mathbf{x})}-\log{\frac{p_{\text{data}}^{t}(\mathbf{x})}{p_{\text{bias}}^{t}(\mathbf{x})}}\\
        &\quad+t\bigg(\frac{\partial}{\partial t}\log{w_{\boldsymbol{\phi}}^{t}(\mathbf{x})} - \frac{\partial}{\partial t}\log{\frac{p_{\text{data}}^{t}(\mathbf{x})}{p_{\text{bias}}^{t}(\mathbf{x})}}\bigg)+O(t^{2}),
    \end{align*}
    which derives
    \begin{align*}
        \bigg\vert\log{w_{\boldsymbol{\phi}}^{0}(\mathbf{x})}-\log{\frac{p_{\text{data}}^{0}(\mathbf{x})}{p_{\text{bias}}^{0}(\mathbf{x})}}\bigg\vert\le \bigg\vert\log{w_{\boldsymbol{\phi}}^{t}(\mathbf{x})}- \log{\frac{p_{\text{data}}^{t}(\mathbf{x})}{p_{\text{bias}}^{t}(\mathbf{x})}}\bigg\vert + tC(\mathbf{x},t;\boldsymbol{\phi}) + O(t^{2}),
    \end{align*}
    by triangle inequality. Now, if $\boldsymbol{\phi}_{t}^{*}=\argmin_{\boldsymbol{\phi}}{\mathbb{E}_{[t,t+\epsilon)}\big[\mathbb{E}_{p_{\text{data}}^{t}(\mathbf{x}_{t})}[\log{d_{\boldsymbol{\phi}}(\mathbf{x}_{t},t)}]+\mathbb{E}_{p_{\text{bias}}^{t}(\mathbf{x}_{t})}[\log{(1-d_{\boldsymbol{\phi}}(\mathbf{x}_{t},t))}]\big]}$, then $w_{\boldsymbol{\phi}_{t}^{*}}^{u}(\mathbf{x})=\frac{d_{\boldsymbol{\phi}_{t}^{*}}(\mathbf{x},u)}{1-d_{\boldsymbol{\phi}_{t}^{*}}(\mathbf{x},u)}=\frac{p_{\text{data}}^{u}(\mathbf{x})}{p_{\text{bias}}^{u}(\mathbf{x})}$ for any $\mathbf{x}$ and $u\in[t,t+\epsilon)$. Therefore, we get 
    \begin{align*}
    \bigg\vert\log{w_{\boldsymbol{\phi}_{t}^{*}}^{t}(\mathbf{x})}- \log{\frac{p_{\text{data}}^{t}(\mathbf{x})}{p_{\text{bias}}^{t}(\mathbf{x})}}\bigg\vert=0
    \end{align*}
    by plugging $t$ to $u$. Also, we get
    \begin{align*}
        \frac{\partial}{\partial t}\log{w_{\boldsymbol{\phi}_{t}^{*}}^{t}(\mathbf{x})}&=\lim_{u\searrow t}\frac{\log{w_{\boldsymbol{\phi}_{t}^{*}}^{u}(\mathbf{x})} - \log{w_{\boldsymbol{\phi}_{t}^{*}}^{t}(\mathbf{x})}}{u-t}\\
        &=\lim_{u\searrow t}\frac{\log{\frac{p_{\text{data}}^{u}(\mathbf{x})}{p_{\text{bias}}^{u}(\mathbf{x})}} - \log{\frac{p_{\text{data}}^{t}(\mathbf{x})}{p_{\text{bias}}^{t}(\mathbf{x})}}}{u-t}\\
        &=\frac{\partial}{\partial t}\log{\frac{p_{\text{data}}^{t}(\mathbf{x})}{p_{\text{bias}}^{t}(\mathbf{x})}},
    \end{align*}
    since $\frac{p_{\text{data}}^{t}(\mathbf{x})}{p_{\text{bias}}^{t}(\mathbf{x})}$ is continuously differentiable with respect to $t$. Therefore, $C(\mathbf{x},t;\boldsymbol{\phi}_{t}^{*})=0$.
\end{proof}

\subsection{Relation between time-independent importance reweighting and time-dependent importance reweighting}
\label{subsec:A4}
This section explains further equivalence between the objective functions of IW-DSM and TIW-DSM. We rewrite the objective function of time-independent importance reweighting as follows:
\begin{align}
\label{eq:obj_iw}
    \mathcal{L}_{\text{IW-DSM}}(\boldsymbol{\theta};p_{\text{bias}},&w_{\boldsymbol{\phi}^*}(\cdot)) \\ \nonumber
    & := \frac{1}{2}\int_0^T \mathbb{E}_{p_{\text{bias}}(\rvx_0)}w_{\boldsymbol{\phi}^*}(\rvx_0)\mathbb{E}_{p(\rvx_t|\rvx_0)}\bigg[\lambda(t)\big[||\mathbf{s}_{\boldsymbol{\theta}}(\rvx_t,t)-\nabla \log p(\rvx_t|\rvx_0)||_2^2 )\big]\bigg]   \mathrm{d}t, \nonumber
\end{align}

where $w_{\boldsymbol{\phi}^*}(\rvx_0):=\frac{p_{\text{data}}(\rvx_0)}{p_{\text{bias}}(\rvx_0)}$. $\mathcal{L}_{\text{IW-DSM}}(\boldsymbol{\theta};p_{\text{bias}}, w_{\boldsymbol{\phi}^*}(\cdot))$ is equivalent to $\mathcal{L}_{\text{DSM}}(\boldsymbol{\theta};p_{\text{data}})$ as derived in \cref{eq:ir0} and \cref{eq:ir}.
We also know the equivalence between $\mathcal{L}_{\text{TIW-DSM}}(\boldsymbol{\theta};p_{\text{bias}}, w_{\boldsymbol{\phi}^*}^t(\cdot))$ and $\mathcal{L}_{\text{SM}}(\boldsymbol{\theta};p_{\text{data}})$ from \Cref{thm:1}. Since $\mathcal{L}_{\text{SM}}(\boldsymbol{\theta};p_{\text{data}})$ and $\mathcal{L}_{\text{DSM}}(\boldsymbol{\theta};p_{\text{data}})$ are equivalent~\citep{song2019generative}, we conclude that the objectives $\mathcal{L}_{\text{IW-DSM}}(\boldsymbol{\theta};p_{\text{bias}}, w_{\boldsymbol{\phi}^*}(\cdot))$ and $\mathcal{L}_{\text{TIW-DSM}}(\boldsymbol{\theta};p_{\text{bias}}, w_{\boldsymbol{\phi}^*}^t(\cdot))$ are equivalent w.r.t. $\boldsymbol{\theta}$ up to a constant.

This equivalence implies the empirical performance between IW-DSM and TIW-DSM is purely from the error propagation from the estimated time-independent density ratio $w_{\boldsymbol{\phi}}(\cdot)$ and the time-dependent density ratio $w^t_{\boldsymbol{\phi}}(\cdot)$.
 
\subsection{Objective for incorporating $\mathcal{D}_{\text{ref}}$} 
\label{subsec:A2}
The objective functions of TIW-DSM and IW-DSM in the main paper explain how to treat $\mathcal{D}_{\text{bias}}$ for unbiased diffusion model training, but we actually have $\mathcal{D}_{\text{obs}}=\mathcal{D}_{\text{ref}} \cup \mathcal{D}_{\text{bias}}$. The objective that incorporates $\mathcal{D}_{\text{ref}}$ is necessary for better performance of the implementation.

To do this, we define the mixture distribution $p^t_{\text{obs}}:=\frac{1}{2}p^t_{\text{bias}}+\frac{1}{2}p^t_{\text{data}}$, and plug $p^t_{\text{obs}}$ into $p^t_{\text{bias}}$ in each objective. Note that the density ratio between $p^t_{\text{data}}$ and $p^t_{\text{obs}}$ also can be represented by the time-dependent discriminator we explained in the main paper.
\begin{align}
\tilde{w}^t_{\boldsymbol{\phi}*}(\rvx_t)& :=\frac{p^t_{\text{data}}(\rvx_t)}{p_{\text{obs}}^t(\rvx_t)}=\frac{p^t_{\text{data}}(\rvx_t)}{\frac{1}{2}p^t_{\text{bias}}(\rvx_t)+\frac{1}{2}p^t_{\text{data}}(\rvx_t)}\nonumber \\ 
& =\frac{2\frac{p^t_{\text{data}}(\rvx_t)}{p^t_{\text{bias}}(\rvx_t)}}{1+\frac{p^t_{\text{data}}(\rvx_t)}{p^t_{\text{bias}}(\rvx_t)}}  =\frac{2w^t_{\boldsymbol{\phi}^*}(\rvx_t
)}{1+w^t_{\boldsymbol{\phi}^*}(\rvx_t)} = \frac{2\frac{d_{\boldsymbol{\phi}*}(\rvx_t,t)}{1-d_{\boldsymbol{\phi}^*}(\rvx_t,t)}}{1+\frac{d_{\boldsymbol{\phi}^*}(\rvx_t,t)}{1-d_{\boldsymbol{\phi}^*}(\rvx_t,t)}}
= 2d_{\boldsymbol{\phi}^*}(\rvx_t,t)
\label{eq:tilde_w}
\end{align}

By plugging $p_{\text{obs}}$ and $\tilde{w}^t_{\boldsymbol{\phi}*} $ into our objective function, we can get the objective function that incorporates all the samples in $\mathcal{D}_{\text{obs}}$.
\begin{align}
    &\mathcal{L}_{\text{TIW-DSM}}(\boldsymbol{\theta};p_{\text{obs}},\tilde{w}_{\boldsymbol{\phi}^*}^t(\cdot)) \label{eq:obj_ref} \\
    & := \frac{1}{2}\int_0^T \mathbb{E}_{p_{\text{obs}}(\rvx_0)}\mathbb{E}_{p(\rvx_t|\rvx_0)}\bigg[\lambda(t)\tilde{w}_{\boldsymbol{\phi}^*}^t(\rvx_t)\big[||\mathbf{s}_{\boldsymbol{\theta}}(\rvx_t,t)-\nabla \log p(\rvx_t|\rvx_0)- \nabla \log \tilde{w}_{\boldsymbol{\phi}^*}^t(\rvx_t)||_2^2 )\big]\bigg]   \mathrm{d}t \nonumber
\end{align}

In the same spirit, the time-independent importance reweighting objective that incorporates $\mathcal{D}_{\text{ref}}$ represented as follows:
\begin{align}
\label{eq:obj_iw}
    \mathcal{L}_{\text{IW-DSM}}(\boldsymbol{\theta};p_{\text{obs}},&\tilde{w}_{\boldsymbol{\phi}^*}(\cdot)) \\ \nonumber
    & := \frac{1}{2}\int_0^T \mathbb{E}_{p_{\text{obs}}(\rvx_0)}\tilde{w}_{\boldsymbol{\phi}^*}(\rvx_0)\mathbb{E}_{p(\rvx_t|\rvx_0)}\bigg[\lambda(t)\big[||\mathbf{s}_{\boldsymbol{\theta}}(\rvx_t,t)-\nabla \log p(\rvx_t|\rvx_0)||_2^2 )\big]\bigg]   \mathrm{d}t, \nonumber
\end{align}
where $\tilde{w}_{\boldsymbol{\phi}^*}(\rvx_0)=\frac{p^0_{\text{data}}(\rvx_0)}{p^0_{\text{obs}}(\rvx_0)}.$

The DSM(obs) in our experiment optimize the following objective in \cref{eq:obs_obs}.
\begin{align}
\label{eq:obs_obs}
    \mathcal{L}_{\text{DSM}}(\boldsymbol{\theta};p_{\text{obs}})= \frac{1}{2}\int_0^T \mathbb{E}_{p_{\text{obs}}(\rvx_0)}\mathbb{E}_{p(\rvx_t|\rvx_0)}\bigg[\lambda(t)\big[||\mathbf{s}_{\boldsymbol{\theta}}(\rvx_t,t)-\nabla \log p(\rvx_t|\rvx_0)||_2^2 \big]\bigg] \mathrm{d}t
\end{align}


\subsection{Loss component ablations}
\label{subsec:A5}
The proposed objective function, \cref{eq:ours}, utilizes $w_{\boldsymbol{\phi}^*}$ as two roles in our method: 1) reweighting and 2) score correction. We discuss what if each component did not exist.
\begin{align}
\label{eq:ours}
    &\mathcal{L}_{\text{TIW-DSM}}(\boldsymbol{\theta};p_{\text{bias}},w_{\boldsymbol{\phi}^*}^t(\cdot)) \\ \nonumber
    & := \frac{1}{2}\int_0^T \mathbb{E}_{p_{\text{bias}}(\rvx_0)}\mathbb{E}_{p(\rvx_t|\rvx_0)}\bigg[\lambda(t) w_{\boldsymbol{\phi}^*}^t(\rvx_t)\big[||\mathbf{s}_{\boldsymbol{\theta}}(\rvx_t,t)-\nabla \log p(\rvx_t|\rvx_0)- \nabla \log w_{\boldsymbol{\phi}^*}^t(\rvx_t)||_2^2 )\big]\bigg]   \mathrm{d}t \nonumber
\end{align}

First, we consider the objective function that only takes the score correction:
\begin{equation}
    \label{eq:wo_sc}
    \frac{1}{2}\int_0^T \mathbb{E}_{p_{\text{bias}}(\rvx_0)}\mathbb{E}_{p(\rvx_t|\rvx_0)}\bigg[\lambda(t) \big[||\mathbf{s}_{\boldsymbol{\theta}}(\rvx_t,t)-\nabla \log p(\rvx_t|\rvx_0)- \nabla \log w_{\boldsymbol{\phi}^*}^t(\rvx_t)||_2^2 )\big]\bigg]   \mathrm{d}t .
\end{equation}

If we define newly parameterized distribution $\mathbf{s}'_{\boldsymbol{\theta}}(\rvx_t,t):= \mathbf{s}_{\boldsymbol{\theta}}(\rvx_t,t) - \nabla \log w_{\boldsymbol{\phi}^*}^t(\rvx_t)$ as the model distribution (the adjusting parameter is still only $\boldsymbol{\theta}$), the objective becomes like \cref{eq:wo_sc2}.

\begin{equation}
\label{eq:wo_sc2}
 \frac{1}{2}\int_0^T \mathbb{E}_{p_{\text{bias}}(\rvx_0)}\mathbb{E}_{p(\rvx_t|\rvx_0)}\bigg[\lambda(t) \big[||s'_{\boldsymbol{\theta}}(\rvx_t,t)-\nabla \log p(\rvx_t|\rvx_0)||_2^2 )\big]\bigg]   \mathrm{d}t 
\end{equation}

This objective is same as $\mathcal{L}_{\text{DSM}}$ with $p_{\text{bias}}$, so $s'_{\boldsymbol{\theta}}(\rvx_t,t)$ will converge to $\nabla \log p_{\text{bias}}(\rvx_t)$. By the relation from $\mathbf{s}_{\boldsymbol{\theta}}(\rvx_t,t) = s'_{\boldsymbol{\theta}}(\rvx_t,t)+ \nabla \log w_{\boldsymbol{\phi}^*}^t(\rvx_t)$, $\mathbf{s}_{\boldsymbol{\theta}}(\rvx_t,t)$ will converges to $\nabla \log p_{\text{bias}}(\rvx_t) + \nabla \log \frac{p_{\text{data}}(\rvx_t)}{p_{\text{bias}}(\rvx_t)} = \nabla \log p_{\text{data}}(\rvx_t)$. This means that only applying score correction guarantees optimality, so this is the reason for the quite good performance.

Second, we consider the objective function that only takes the time-dependent reweighting:
\begin{equation}
    \label{eq:wo_re}
    \frac{1}{2}\int_0^T \mathbb{E}_{p_{\text{bias}}(\rvx_0)}\mathbb{E}_{p(\rvx_t|\rvx_0)}\bigg[\lambda(t) w_{\boldsymbol{\phi}^*}^t(\rvx_t)\big[||\mathbf{s}_{\boldsymbol{\theta}}(\rvx_t,t)-\nabla \log p(\rvx_t|\rvx_0)||_2^2 )\big]\bigg]   \mathrm{d}t .
\end{equation}
We can derive that \cref{eq:wo_re} is equivalent to following objective in \cref{eq:wo_re2}.
\begin{equation}
    \label{eq:wo_re2}
    \frac{1}{2}\int_0^T \mathbb{E}_{p^t_{\text{bias}}(\rvx_t)}\bigg[\lambda(t) w_{\boldsymbol{\phi}^*}^t(\rvx_t)\big[||\mathbf{s}_{\boldsymbol{\theta}}(\rvx_t,t)-\nabla \log p_{\text{bias}}(\rvx_t)||_2^2 )\big]\bigg]  \mathrm{d}t, 
\end{equation}
which implies that $\mathbf{s}_{\boldsymbol{\theta}}(\rvx_t,t)$ will converge to $\nabla \log p_{\text{bias}}(\rvx_t)$. This is the reason that the objective without score correction performs similarly to DSM(obs). 

\subsection{Generalized objective function by adjusting density ratio}
\label{subsec:A6}
We generalize our objective for the ablation study in \Cref{subsec:4.3} by adjusting the density ratio, which is represented as \cref{eq:general}.
\begin{align}
\label{eq:general}
    &\mathcal{L}_{\text{TIW-DSM}}(\boldsymbol{\theta};p_{\text{bias}},w_{\boldsymbol{\phi}^*}^t(\cdot)^{\alpha}) \\ \nonumber
    & := \frac{1}{2}\int_0^T \mathbb{E}_{p_{\text{bias}}(\rvx_0)}\mathbb{E}_{p(\rvx_t|\rvx_0)}\bigg[\lambda(t) w_{\boldsymbol{\phi}^*}^t(\rvx_t)^{\alpha}\big[||\mathbf{s}_{\boldsymbol{\theta}}(\rvx_t,t)-\nabla \log p(\rvx_t|\rvx_0)- \alpha \nabla \log w_{\boldsymbol{\phi}^*}^t(\rvx_t)||_2^2 )\big]\bigg]   \mathrm{d}t \nonumber
\end{align}
As $\alpha \rightarrow 0$, $\mathcal{L}_{\text{TIW-DSM}}(\boldsymbol{\theta};p_{\text{bias}},w_{\boldsymbol{\phi}^*}^t(\cdot)^{\alpha})$ becomes $\mathcal{L}_{\text{DSM}}(\boldsymbol{\theta};p_{\text{bias}})$, i.e.,  
\begin{align}
\label{eq:eqeq}
     \mathcal{L}_{\text{TIW-DSM}}(\boldsymbol{\theta}&;p_{\text{bias}},w_{\boldsymbol{\phi}^*}^t(\cdot)^{0}) \\ \nonumber
    & = \frac{1}{2}\int_0^T \mathbb{E}_{p_{\text{bias}}(\rvx_0)}\mathbb{E}_{p(\rvx_t|\rvx_0)}\bigg[\lambda(t) \big[||\mathbf{s}_{\boldsymbol{\theta}}(\rvx_t,t)-\nabla \log p(\rvx_t|\rvx_0)||_2^2 )\big]\bigg]   \mathrm{d}t = \mathcal{L}_{\text{DSM}}(\boldsymbol{\theta};p_{\text{bias}}) \\ \nonumber
\end{align}

To adopt this scaling to our objective with incorporate $\mathcal{D}_{\text{ref}}$, we utilize the relation in \cref{eq:eqeq2}, and define $\tilde{w}^t_{\boldsymbol{\phi}^*}(\rvx_t, \alpha)$ through \cref{eq:eqeq3}.
\begin{align}
\label{eq:eqeq2}
\tilde{w}^t_{\boldsymbol{\phi}*}(\rvx_t)= \frac{2w^t_{\boldsymbol{\phi}}(\rvx_t
)}{1+w^t_{\boldsymbol{\phi}^*}(\rvx_t)}
\end{align}
\begin{align}
\label{eq:eqeq3}
\tilde{w}^t_{\boldsymbol{\phi}*}(\rvx_t, \alpha):= \frac{2w^t_{\boldsymbol{\phi}}(\rvx_t
)^{\alpha}}{1+w^t_{\boldsymbol{\phi}^*}(\rvx_t)^{\alpha}}
\end{align}

Then, the $\alpha$-generalized objective that incorporates $\mathcal{D}_{\text{ref}}$ can be expressed as follows.
\begin{align}
\label{eq:eqeq4}
    &\mathcal{L}_{\text{TIW-DSM}}(\boldsymbol{\theta};p_{\text{obs}},\tilde{w}_{\boldsymbol{\phi}^*}^t(\cdot, \alpha)) \\ \nonumber
    & := \frac{1}{2}\int_0^T \mathbb{E}_{p_{\text{obs}}(\rvx_0)}\mathbb{E}_{p(\rvx_t|\rvx_0)}\bigg[\lambda(t)\tilde{w}_{\boldsymbol{\phi}^*}^t(\rvx_t, \alpha)\big[||\mathbf{s}_{\boldsymbol{\theta}}(\rvx_t,t)-\nabla \log p(\rvx_t|\rvx_0)- \nabla \log \tilde{w}_{\boldsymbol{\phi}^*}^t(\rvx_t, \alpha)||_2^2 )\big]\bigg]   \mathrm{d}t \nonumber
\end{align}
As $\alpha \rightarrow 0$, $\tilde{w}^t_{\boldsymbol{\phi}*}(\rvx_t, \alpha)$ becomes 1, which leads $\nabla \log \tilde{w}_{\boldsymbol{\phi}^*}^t(\rvx_t, \alpha)$ be 0. 
\begin{align}
\label{eq:eqeq5}
    &\mathcal{L}_{\text{TIW-DSM}}(\boldsymbol{\theta};p_{\text{obs}},\tilde{w}_{\boldsymbol{\phi}^*}^t(\cdot, 0)) \nonumber \\ 
    & = \frac{1}{2}\int_0^T \mathbb{E}_{p_{\text{obs}}(\rvx_0)}\mathbb{E}_{p(\rvx_t|\rvx_0)}\bigg[\lambda(t)\tilde{w}_{\boldsymbol{\phi}^*}^t(\rvx_t, 0)\big[||\mathbf{s}_{\boldsymbol{\theta}}(\rvx_t,t)-\nabla \log p(\rvx_t|\rvx_0)- \nabla \log \tilde{w}_{\boldsymbol{\phi}^*}^t(\rvx_t, 0)||_2^2 )\big]\bigg]   \mathrm{d}t \nonumber \\
    & = \frac{1}{2}\int_0^T \mathbb{E}_{p_{\text{obs}}(\rvx_0)}\mathbb{E}_{p(\rvx_t|\rvx_0)}\bigg[\lambda(t)\big[||\mathbf{s}_{\boldsymbol{\theta}}(\rvx_t,t)-\nabla \log p(\rvx_t|\rvx_0)||_2^2 )\big]\bigg]   \mathrm{d}t \nonumber \\
    & = \mathcal{L}_{\text{DSM}}(\boldsymbol{\theta};p_{\text{obs}}) \\ \nonumber
\end{align}
This implies that $\alpha$ interpolates the objective function between DSM(obs) and TIW-DSM. We can observe the quantitative results also interpolated in the range of $\alpha \in [0,1]$ as shown in \Cref{fig:alpha_al}. 

\newpage

\section{Related work}
\label{sec:B}
\subsection{Fairness in ML \& generative modeling}
Fairness is widely studied in the fields of classification tasks~\citep{dwork2012fairness, feldman2015certifying, heidari2018fairness, adel2019one}, representation learning~\citep{zemel2013learning, louizos2015variational, song2019learning}, and generative modeling~\citep{um2023fair, sattigeri2019fairness, xu2018fairgan, teo2023fair}. 
In terms of classification tasks, the objective for fairness is mainly to handle a classifier to be independent of the sensitive attributes such as gender with different measurement metrics~\citep{hardt2016equality, feldman2015certifying}. Fair representation learning is defined as equal representation which is a uniform distribution of samples with respect to the sensitive attributes~\citep{hutchinson201950}.

The task we address in this paper is also called \textit{fair} generative modeling~\citep{xu2018fairgan, choi2020fair, teo2023fair}, which aims to estimate a balanced distribution of samples with respect to sensitive attributes. With regard to data generation, there are relevant works such as Fair-GAN~\citep{xu2018fairgan} and FairnessGAN~\citep{sattigeri2019fairness}. These methods have been advanced to generate data instances characterized by fairness attributes, with their respective labels. These generated data instances are utilized as a preprocessing step. On the other hand, \cite{teo2023fair} introduces transfer learning to learn a fair generative model. They adapt the pre-trained generative model trained by large, biased datasets via leveraging the small, unbiased reference dataset to finefune the model. \cite{choi2020fair, um2023fair} treat fair generative modeling under a weak supervision setting so utilize the small amount of reference dataset. Most of the fair generative models have progressed using GANs. In the diffusion models, we propose, that the concept relevant to fairness \& dataset bias has not yet received significant attention.

\cite{friedrich2023fair} is a concurrent study that explores the theme of fairness in diffusion models, but their work is distinctly differentiated from our paper in terms of problem setting and methodology. Our paper focuses on a weak supervision setting, which is a cost-effective scenario in terms of dataset collection. Conversely, \cite{friedrich2023fair} leverage information in the joint space of (text, image) using a pre-trained text conditional diffusion model. This implies that their approach relies on point-wise text supervision to mitigate bias. There is also a distinguishable difference in the methods. \cite{friedrich2023fair} is based on the guidance method, which requires 2 to 3 times more NFEs for sampling. Our paper proposes the objective function for unbiased score network training, so we only need 1 NFE of score network at every denoising step. Please refer to \Cref{appendix:E.6} for quantitative comparison.

\subsection{Importance reweighting}
There are many approaches to reweighting data points for their purpose, which is common in the fields of noisy label learning~\citep{liu2015classification, wang2017multiclass}, class imbalanced learning~\citep{ren2018learning, guo2022learning, duggal2021har, park2021influence}, and fairness~\citep{chai2022fairness, hu2023adaptive, krasanakis2018adaptive, iosifidis2019adafair}. 
In the context of learning with noisy labels, importance reweighting aims to adjust the loss function by assigning reduced weights to instances with noisy labels and elevated weights to instances with clean labels, thereby mitigating the impact of noisy labels on the learning process \cite{liu2015classification}. Similar to the concept from the noisy label, research from class imbalanced learning utilizes an importance reweighting scheme to prevent the model from being biased to the majority classes while amplifying the effects of minority classes~\citep{wang2017learning, ren2018learning, guo2022learning}. In terms of fairness, there are researches for importance reweighting~\citep{chai2022fairness, hu2023adaptive}. These works on fairness aim to mitigate representation bias which is caused by insufficient and imbalanced data instances in a fair perspective. Consequently, they propose instance reweighting as a means to facilitate fair representation learning within the model. 

The reweighting related to time $t$ is considered in diffusion models~\citep{nichol2021improved, song2021maximum, kim2022soft}. However, these studies focus on resampling and reweighting the random variable $t$ itself, while we focus on the reweighting $\rvx_t$. 


\subsection{Score correction in diffusion model}
The sampling process of the diffusion model involves an iterative update process using a score direction, typically approximated by the score network. When there is a specific purpose for generating data, score correction becomes necessary. There are several methods to adjust this score direction, each tailored to specific purposes. From a technical standpoint, these methods can be divided into two groups: guidance methods and score-matching regularization methods.

Guidance methods introduce additional gradient signals to adjust the update direction. Classifier guidance~\citep{song2020score,dhariwal2021diffusion} utilizes a gradient signal from a classifier to generate samples that satisfy a condition. Classifier-free guidance~\citep{ho2021classifier} also aims at conditional generation but relies on both unconditional and conditional scores. Furthermore, various methods have been proposed to enable controllable generation using auxiliary models with a pre-trained unconditional score model~\citep{graikos2022diffusion,pmlr-v202-song23k}. On the other hand, discriminator guidance~\citep{pmlr-v202-kim23i} serves a different purpose by enhancing the sampling performance of a diffusion model through the use of a discriminator that distinguishes between real images and generated images. EGSDE~\citep{zhao2022egsde} leverages guidance signals based on energy functions, enhancing unpaired image-to-image translation. Guidance methods have the advantage of utilizing pre-trained score networks without the need for additional training. However, they require separate network training for guidance and additional network evaluation during the sampling process.

There is a body of work on score-matching regularization for better likelihood estimation~\citep{lu2022maximum, pmlr-v202-zheng23c, lai2023fp}. \cite{na2024labelnoise} propose a regularized conditional score-matching objective to mitigate label noise. The unique benefit of score-matching regularization is that it does not require an additional network at the inference stage.

\subsection{Time-dependent density ratio in GANs}
Density ratio is closely associated with the training of GANs~\citep{goodfellow2014generative,nowozin2016f,uehara2016generative}. The discrimination between perturbed real data and perturbed generated data is often mentioned in GAN literature. This is because the discriminator of a GAN also suffers from a density-chasm problem, and a noise injection trick could resolve it. \cite{arjovsky2017towards} propose a method to perturb real data and fake data with a small gaussian noise scale for discriminator input, but the practical choice of noise scale in high-dimension is not easy~\citep{roth2017stabilizing}. \cite{wang2023diffusiongan} propose a multi-scale noise injection using a forward diffusion process and introduced an adaptive diffusion technique, achieving significant performance improvements in high-dimensional datasets. \cite{xiao2022tackling, zheng2023truncated} utilize GAN's generator to achieve fast sampling in the reverse diffusion process, and they also naturally conduct discrimination between perturbed distribution. However, the time-dependent discriminator in GANs fundamentally differs in its use case from the proposed method that serves the roles of reweighting and score correction.

\section{Overfitting with limited data}
\label{sec:C}
We observe the FID overfitting phenomenon when we train diffusion models with too small a subset of data. In GANs, the origin of overfitting is well elucidated by \cite{karras2020training}. However, in diffusion models, the origin of overfitting is not well explored but often reported from the literature~\citep{nichol2021improved,moon2022fine,song2020improved}. Training configurations, such as network architecture, EMA, and diffusion noise scheduling, affect this phenomenon.
One thing explicitly observed from \Cref{fig:overfitting} is that overfitting becomes serious when the number of data becomes smaller. Our experiment sometimes considers a small amount of data, so we periodically measure the FID and choose the best one.

\begin{figure}[h]
    \hfill
     \centering
     \begin{subfigure}[b]{0.48\textwidth}
         \centering
         \includegraphics[width=\textwidth, height=1.5in]{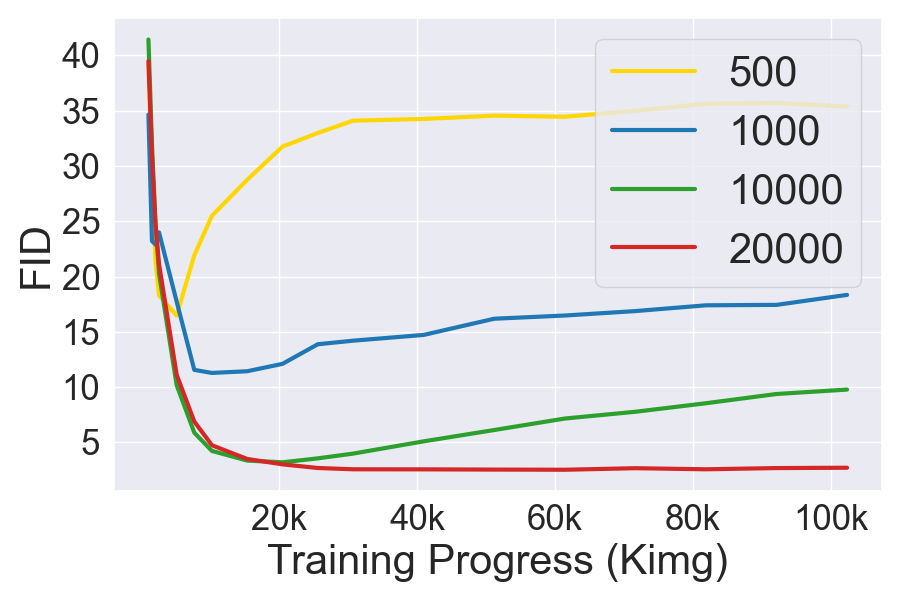}
         \caption{Training curve on various numbers of data}
         \label{subfig:5_1}
     \end{subfigure}
     \hfill
     \begin{subfigure}[b]{0.48\textwidth}
         \centering
         \includegraphics[width=\textwidth, height=1.5in]{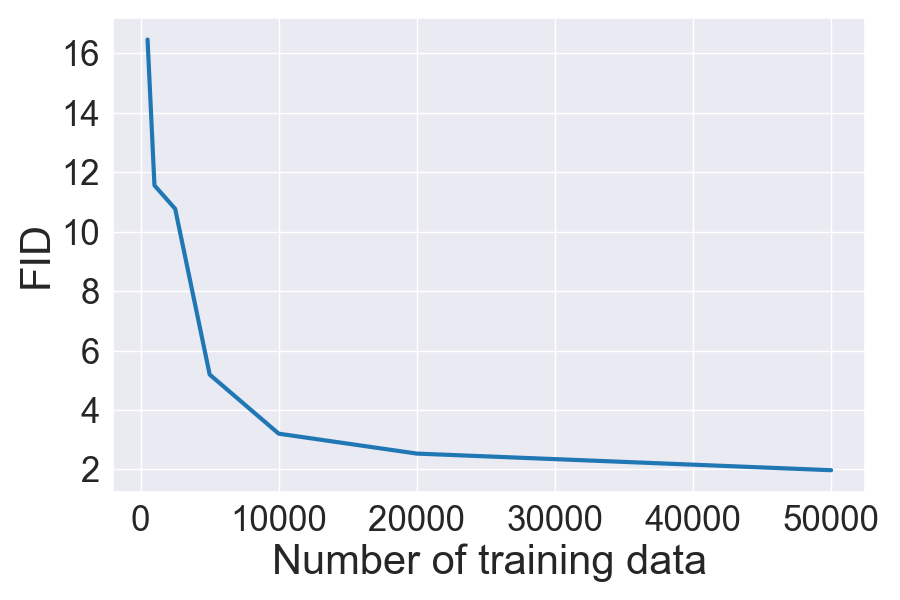}
         \caption{FID under various $\#$ of data with early stopping}
         \label{subfig:5_2}
     \end{subfigure}
     \hfill
     \caption{Overfitting in FID with a limited number of training data in training diffusion model with CIFAR-10.}
        \label{fig:overfitting}
\end{figure}

\section{Implementation detail}
\subsection{Datsets}
\label{subsec:D1}
We explain the details of the dataset construction for our experiment. \Cref{tab:dataset} shows the information about $\mathcal{D}_{\text{bias}}$, $\mathcal{D}_{\text{ref}}$ and the entire unbiased dataset. To construct $\mathcal{D}_{\text{bias}}$, we define the bias statistics in each latent subgroup (See \Cref{fig:cifarlt} for the proportion), and we randomly sampled from each subgroup. Once we established $\mathcal{D}_{\text{bias}}$, we conducted experiments using the same set for all baselines. The ground truth bias information on each data point is provided from the official dataset in CIFAR-10, CIFAR-100, and CelebA. We use bias information for FFHQ from  \url{https://github.com/DCGM/ffhq-features-dataset}. The entire unbiased dataset is used to construct $\mathcal{D}_{\text{ref}}$ and evaluation. We set the entire unbiased dataset as almost the maximum number of samples that are balanced under latent statistics. The reference dataset $\mathcal{D}_{\text{ref}}$ is randomly sampled from the entire unbiased dataset. Note that we do not intentionally balance the latent statistics in  $\mathcal{D}_{\text{ref}}$, and we use the same $\mathcal{D}_{\text{ref}}$ for all baselines.

\begin{table}[h]
	\caption{Dataset configurations}
	\label{tab:dataset}
	\scriptsize
	\centering
	\begin{tabular}{lcccc}
		\toprule
		& CIFAR-10 & CIFAR-100 & FFHQ & CelebA \\ \midrule
            \textbf{Resolution} & $3\times32\times32$ & $3\times32\times32$ & $3\times64\times64$ & $3\times64\times64$ \\ \midrule
		\multicolumn{5}{l}{\textbf{Bias dataset} $\mathcal{D}_{\text{bias}}$}\\
		Number of instances & 10000 & 10000 & 40000 & 162770 \\
		Bias factor & Class & Class & Gender & (Gender, Hair color) \\
            Bias subgroup  & 10 & 100 & 2 & (2, 2) \\
            Bias type & Long tail & Long tail & 80\%, 90\% & Benchmark \\\midrule
            
            \multicolumn{5}{l}{\textbf{Entire unbiased dataset}}\\
		Number of instances & 50000 & 50000 & 50000 & 75136 \\
		Number of instances in each bias group & 5000 & 500 & 25000 & 18784 \\ \midrule
		\multicolumn{5}{l}{\textbf{Reference dataset} $\mathcal{D}_{\text{ref}}$}\\
		Number of instances  & 500, 1000, 2500, 5000 & 500, 1000, 2500, 5000 & 500, 5000 & 8140 \\
		\bottomrule
	\end{tabular}
\end{table}

\begin{figure}[h]
    \hfill
     \centering
     \begin{subfigure}[b]{0.32\textwidth}
         \centering
         \includegraphics[width=\textwidth, height=1.5in]{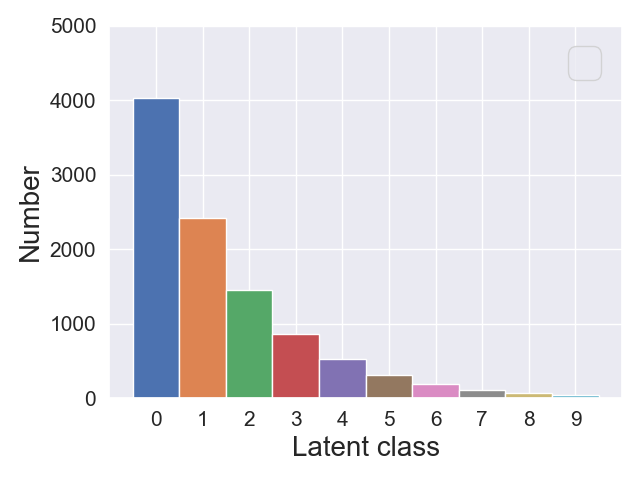}
         \caption{CIFAR-10 (LT)}
         \label{}
     \end{subfigure}
     \hfill
     \begin{subfigure}[b]{0.32\textwidth}
         \centering
         \includegraphics[width=\textwidth, height=1.5in]{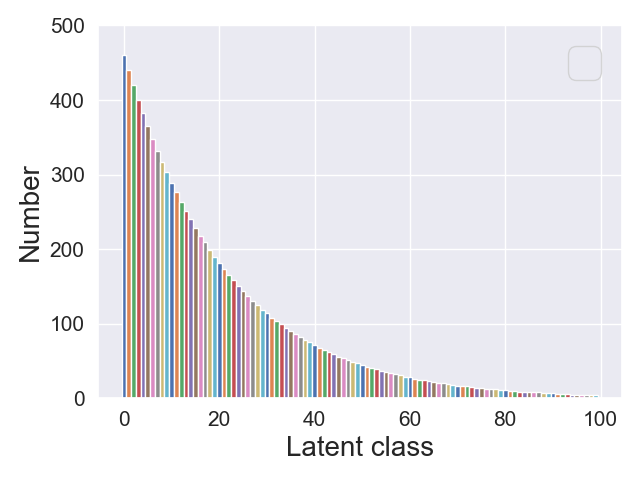}
         \caption{CIFAR-100 (LT)}
         \label{}
     \end{subfigure}
     \hfill
     \begin{subfigure}[b]{0.32\textwidth}
         \centering
         \includegraphics[width=\textwidth, height=1.5in]{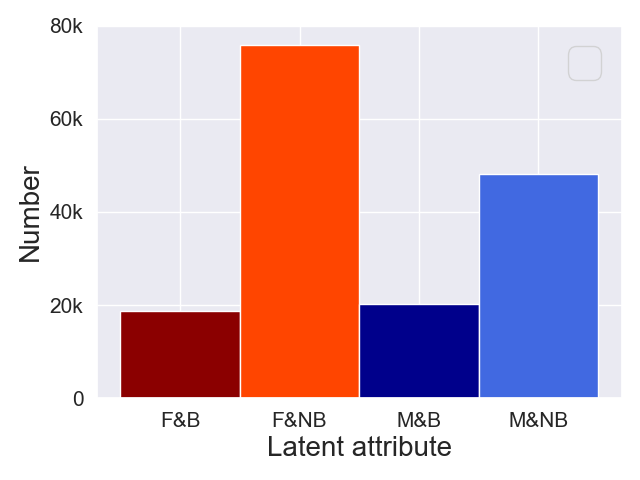}
         \caption{CelebA (benchmark)}
         \label{}
     \end{subfigure}
     \caption{The latent statistics in each $\mathcal{D}_{\text{bias}}$.}
        \label{fig:cifarlt}
\end{figure}

\newpage
\subsection{Training configuration}
\label{subsec:D2}
We follow the procedures outlined in EDM~\citep{karras2022elucidating} to implement the diffusion models only by changing the learning batch size and objective functions. For the time-dependent discriminator, we follow DG~\citep{pmlr-v202-kim23i}. \Cref{tab:config} presents the details of our experiment. We utilize the model architecture, and training configuration of diffusion model from \url{https://github.com/NVlabs/edm}. For CIFAR-10 and CIFAR-100 experiments, we follow the best setting that is used for CIFAR-10 in EDM. For FFHQ and CelebA experiments, we follow the best setting that is used for FFHQ in EDM, except for batch size. For time-dependent discriminator, we utilize the setting from \url{https://github.com/alsdudrla10/DG}. The time-dependent discriminator consists of two U-Net encoder architectures. We use a pre-trained U-Net encoder from ADM~\url{https://github.com/openai/guided-diffusion} which is as a feature extractor. We train the shallow U-Net encoder that transforms from the feature to the logit. For sampling, we utilize EDM deterministic sampler. To implement a time-independent discriminator for IW-DSM, we utilize the same discriminator architecture but only feed-forward $t=0$ for time inputs.

\begin{table}[h]
	\caption{Training and sampling configurations.}
	\label{tab:configurations}
	\scriptsize
	\centering
	\begin{tabular}{lcccc}
		\toprule
		& CIFAR-10 & CIFAR-100 & FFHQ & CelebA \\\cmidrule(lr){2-2}\cmidrule(lr){3-3}\cmidrule(lr){4-4}\cmidrule(lr){5-5}
		\multicolumn{5}{l}{\textbf{Score Network Architecture}}\\
		Backbone U-net & DDPM++ & DDPM++ & DDPM++ & DDPM++ \\
		Channel multiplier & 128 & 128 & 128 & 128 \\
            Channel per resolution & 2-2-2 & 2-2-2 & 1-2-2-2 & 1-2-2-2 \\
		\multicolumn{5}{l}{\textbf{Score Network Training}}\\
		Learning rate $\times10^4$ & 10 & 10 & 2 & 2 \\
            Augment probability & 12\% & 12\% & 15\% & 15\% \\
            Dropout probability & 13\% & 13\% & 5\% & 5\% \\
            Batch size & 256 & 256 & 128 & 128 \\ \midrule
		\multicolumn{5}{l}{\textbf{Discriminator Architecture}}\\
		Feature extractor & ADM & ADM & ADM & ADM \\ 
            Backbone & U-Net encoder & U-Net encoder & U-Net encoder & U-Net encoder \\ 
		depth & 2 & 2 & 2 & 2 \\
            width & 128 &128 & 128 & 128 \\
		Attention Resolutions & 32,16, 8 & 32,16, 8 & 32,16, 8 & 32,16, 8 \\
		Model channel & 128 & 128 & 128 & 128 \\ 
            \multicolumn{5}{l}{\textbf{Discriminator Training}}\\
            Batch size & 128 & 128 & 128 & 128 \\
            Perturbation & VP & VP & Cosine VP & Cosine VP \\
            Time sampling & Importance & Importance & Importance & Importance \\
            Learning rate $\times10^3$ & 4 & 4 & 4 & 4 \\
            Iteration & 10k & 10k & 10k & 10k \\ \midrule
		\multicolumn{5}{l}{\textbf{Sampling}}\\
		Solver type & ODE & ODE & ODE & ODE \\
            Solver order & 2 & 2 & 2 & 2 \\
		NFE & 35 & 35 & 79 & 79 \\
		\bottomrule
	\end{tabular}
        \label{tab:config}
\end{table}

\subsection{Metric}
\label{subsec:D3}
FID measures the distance between the sample distributions. Each group of samples is projected into the pre-trained features space and approximated through Gaussian distribution. So, FID measures both sample fidelity and diversity. We consider this to be the metric to indicate how well the model distribution approximates an unbiased data distribution. We utilize \url{https://github.com/NVlabs/edm} for FID computation.

For the analysis purpose, we use the metrics recall. The recall describes how well the generated samples in the feature space cover the manifold of unbiased data. We utilize this metric to highlight the reason why IW-DSM shows so poor FID performance in \label{subsec:4.1}. We utilize \url{https://github.com/chen-hao-chao/dlsm} for recall computation. 

\textit{Bias}~\citep{choi2020fair} is also utilized for the analysis. This metric measures how similar latent statistics are to the reference data. This metric requires a pre-trained classifier $p_{\psi}$ that distinguishes the latent subgroups. The classifier trained on the entire unbiased dataset. We use a pre-trained $\textit{vgg13-bn}$ model from \url{https://github.com/huyvnphan/PyTorch_CIFAR10} for CIFAR-10, pre-trained $\textit{DenseNet-BC (L=190, k=40)}$ from \url{https://github.com/bearpaw/pytorch-classification} for CIFAR-100. This latent classifier is also used to compute the portion of the latent group for sample visualization. For FFHQ and CelebA, we utilize our discriminator architecture with only feed-forward $t=0$, and adjust the output channels. 
\begin{align}
\textit{Bias}:= \Sigma_{z} ||\mathbb{E}_{\rvx\sim \mathcal{D}_{\text{ref}}}[p(z|\rvx)]-\mathbb{E}_{\rvx\sim p_{\boldsymbol{\theta}}}[p(z|\rvx)]||_2
\end{align}

\subsection{Algorithm}
\label{subsec:D4}
    \IncMargin{1.3em}
    \begin{algorithm}[h]
        \small
        \SetCustomAlgoRuledWidth{\linewidth}
        \DontPrintSemicolon
        \KwInput{Reference data $\mathcal{D}_{\text{ref}}$, biased data $\mathcal{D}_{\text{bias}}$, perturbation kernel $p_{t|0}$, temporal weights $\lambda$}
        \KwOutput{Discriminator $d_{\boldsymbol{\phi}}$}
        \While{not converged}{%
            Sample $\rvx_1, ..., \rvx_{B/2}$ from $\mathcal{D}_{\text{ref}}$ \\
            Sample $\rvx_{B/2+1}, ..., \rvx_B$ from $\mathcal{D}_{\text{bias}}$ \\
            Sample time $t_1, ..., t_{B/2}, t_{B/2+1}, ..., t_B$ from $[0, T]$ \\
            Diffuse $\rvx_1^{t_1}, ..., \rvx_{B/2},  \rvx_{B/2+1}, ..., \rvx_B^{t_B}$ using the transition kernel $p_{t|0}$ \\
            $l \leftarrow  -\sum_{i=1}^{B/2} \lambda(t_i) \log d_{\boldsymbol{\phi}}(\rvx_i,t_i) -\sum_{i=B/2+1}^{B} \lambda(t_i) \log (1 - d_{\boldsymbol{\phi}}(\rvx_i,t_i))  $ \\
            Update $\boldsymbol{\phi}$ by $l$ using the gradient descent method
        }
        \caption{Discriminator Training algorithm}     \label{alg:disc}
    \end{algorithm}
    \DecMargin{1.3em}

    \IncMargin{1.3em}
    \begin{algorithm}[h]
        \small
        \SetCustomAlgoRuledWidth{\linewidth}
        \DontPrintSemicolon
        \KwInput{Observed data $\mathcal{D}_{\text{obs}}$, discriminator $\boldsymbol{\phi}^*$, perturbation kernel $p_{t|0}$, temporal weights $\lambda$}
        \KwOutput{Score network $\mathbf{s}_{\boldsymbol{\theta}}$}
        \While{not converged}{%
            Sample $\rvx_0$ from $\mathcal{D}_{obs}$, and time $t$ from $[0, T]$ \\
            Sample $\rvx_t$ from the transition kernel $p_{t|0}$ \\
            Evaluate $\tilde{w}^{t}_{\boldsymbol{\phi}^*}(\rvx_t)$ using \cref{eq:tilde_w} \\    
            $l \leftarrow \lambda(t)  \tilde{w}^{t}_{\boldsymbol{\phi}^*}(\rvx_t) || \mathbf{s}_{\boldsymbol{\theta}} (\rvx_t, t)  - \nabla \log p (\rvx_t | \rvx_0) - \nabla \log  \tilde{w}^{t}_{\boldsymbol{\phi}^*}(\rvx_t)  ||_{2}^{2}$ \\
            Update $\boldsymbol{\theta}$ by $l$ using the gradient descent method
        }
        \caption{Score Training algorithm with TIW-DSM}     \label{alg:score}
    \end{algorithm}
    \DecMargin{1.3em}

\newpage

\newpage

\newpage

\subsection{Computational cost}
In this section, we compare the TIW-DSM and IW-DSM regarding computational costs. Both methods require the evaluation of the discriminator during the training phase, but the evaluation procedures are somewhat different. IW-DSM only requires the feed-forward value of the discriminator. On the other hand, TIW-DSM requires the value $\nabla \log w_{\boldsymbol{\phi}^*}^t(\cdot)$, which necessitates auto gradient operation in PyTorch. This slightly increases both training time and memory usage. Once the training is complete, the discriminator is not used for sampling, so the sampling time and memory remain the same. \Cref{tab:cost} shows the computational costs measured using RTX 4090 $\times$ 4 cores in the CIFAR-10 experiments. Note that the training time-dependent discriminator is negligibly cheap, converging around 10 minutes with 1 RTX 4090.

\begin{table}[h]
    \centering
    \caption{The computational cost comparison between IW-DSM and TIW-DSM.}
    \begin{tabular}{c|ccc}
        \toprule
             & IW-DSM & TIW-DSM \\
        \midrule
           Training time &0.26 Second / Batch & 0.34 Second / Batch \\
           Training memory &13,258 MiB $\times$ 4 Core & 15,031 MiB $\times$ 4 Core \\
           Sampling time& 7.5 Minute / 50k & 7.5 Minute / 50k  \\
        Sampling memory& 4,928 MiB $\times$ 4 Core& 4,928 MiB $\times$ 4 Core \\
        \bottomrule
    \end{tabular}
    \label{tab:cost}
    \hfill
\end{table}

\section{Additional experimental result}
             

\subsection{Comparison to GAN baselines}
\label{subsec:E2}
The reason we developed a methodology with a focus on the diffusion model is because it demonstrates superior sample quality compared to other generative models like GANs. To validate this, we conducted experiments with a GAN baselines. \Cref{tab:GAN} compare the performance with GAN. GAN(ref) and GAN(obs) indicates the GAN training with $\mathcal{D}_{\text{ref}}$ and  $\mathcal{D}_{\text{obs}}$, repectively. IW-GAN is applying importance reweighting on GAN training~\citep{choi2020fair}. We observed training GAN with limited data resulted in failure, which is often discussed in the literatures~\citep{karras2020training}. IW-GAN also exhibited a similar phenomenon, as it hardly utilized the information from $\mathcal{D}_{\text{bias}}$, as discussed in \Cref{subsec:4.4}. The issue with limited data actually led to better performance when we used all the observed data. Due to these issues, the quantitative metrics did not perform well, so we removed them from our considerations. We utilize the code from \url{https://github.com/ermongroup/fairgen} and modify the resolution for CIFAR-10. \Cref{fig:GAN} shows the samples from GANs.
\begin{table}[h]
    \centering
    \caption{Comparision to GAN baselines on CIFAR-10 (LT) experiment. The reported value is FID ($\downarrow$). }
    \adjustbox{max width=\textwidth}{%
    \begin{tabular}{lc cccc p{0.01\textwidth}}
        \toprule
            && \multicolumn{4}{c}{Reference size} \\
            && 5\%&10\%& 25\%&50\% \\
            \cmidrule{0-5}
         GAN(ref)  && 284.11 &246.75&144.32&56.29\\
        GAN(obs)  && 42.09 & 36.45 & 35.67&34.42\\
        IW-GAN&& 260.32& 235.22 & 120.23&50.32\\
        \cmidrule{0-5}
       IW-DSM && 15.79& 11.45 & 8.19 & 4.28 \\
       TIW-DSM && \bf{11.51}& \bf{8.08}& \bf{5.59}& \bf{4.06}\\
        \bottomrule
    \end{tabular}
    }
\label{tab:GAN}
\end{table}

\begin{figure}[h]
    \hfill
     \centering
     \begin{subfigure}[b]{0.24\textwidth}
         \centering
         \includegraphics[width=\textwidth, height=1.2in]{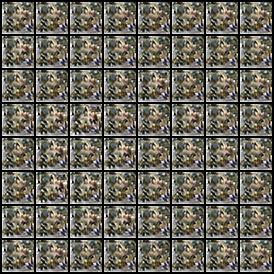}
         \caption{IW-GAN (5\%)}
         \label{}
     \end{subfigure}
     \hfill
     \begin{subfigure}[b]{0.24\textwidth}
         \centering
         \includegraphics[width=\textwidth, height=1.2in]{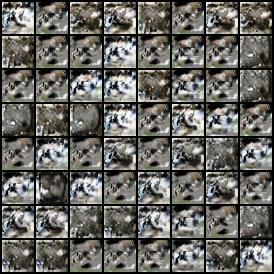}
         \caption{IW-GAN (10\%)}
         \label{}
     \end{subfigure}
     \hfill
     \begin{subfigure}[b]{0.24\textwidth}
         \centering
         \includegraphics[width=\textwidth, height=1.2in]{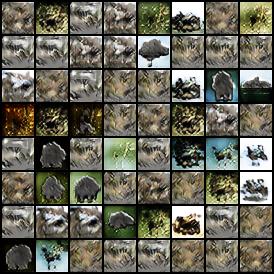}
         \caption{IW-GAN (25\%)}
         \label{}
     \end{subfigure}
          \hfill
     \begin{subfigure}[b]{0.24\textwidth}
         \centering
         \includegraphics[width=\textwidth, height=1.2in]{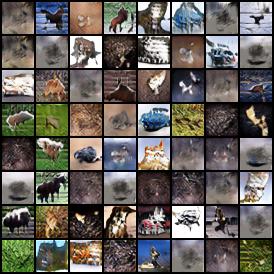}
         \caption{IW-GAN (50\%)}
         \label{}
     \end{subfigure}
          \begin{subfigure}[b]{0.24\textwidth}
         \centering
         \includegraphics[width=\textwidth, height=1.2in]{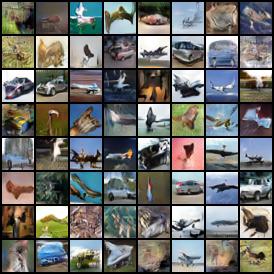}
         \caption{GAN(obs) (5\%)}
         \label{}
     \end{subfigure}
     \hfill
     \begin{subfigure}[b]{0.24\textwidth}
         \centering
         \includegraphics[width=\textwidth, height=1.2in]{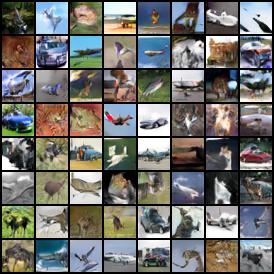}
         \caption{GAN(obs) (10\%)}
         \label{}
     \end{subfigure}
     \hfill
     \begin{subfigure}[b]{0.24\textwidth}
         \centering
         \includegraphics[width=\textwidth, height=1.2in]{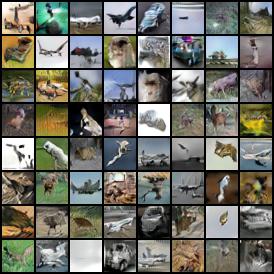}
         \caption{GAN(obs) (25\%)}
         \label{}
     \end{subfigure}
          \hfill
     \begin{subfigure}[b]{0.24\textwidth}
         \centering
         \includegraphics[width=\textwidth, height=1.2in]{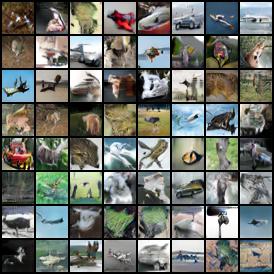}
         \caption{GAN(obs) (50\%)}
         \label{}
     \end{subfigure}
     \caption{Samples from GAN baselines according to the method and reference sizes.}
        \label{fig:GAN}
\end{figure}

\subsection{Trainig curve}
We provide more training curves on CIFAR-10 and CIFAR-100 experiments. We measured the FID in increments of 2.5K images during the early stages of training and then in increments of 10K images after reaching 20K images for all our experiments. See \Cref{fig:CIFAR training curve} for training curves, which demonstrate the training stability of TIW-DSM.
\label{subsec:E5}
\begin{figure}[h!]
    \hfill
     \centering
     \begin{subfigure}[b]{0.24\textwidth}
         \centering
         \includegraphics[width=\textwidth, height=1.0in]{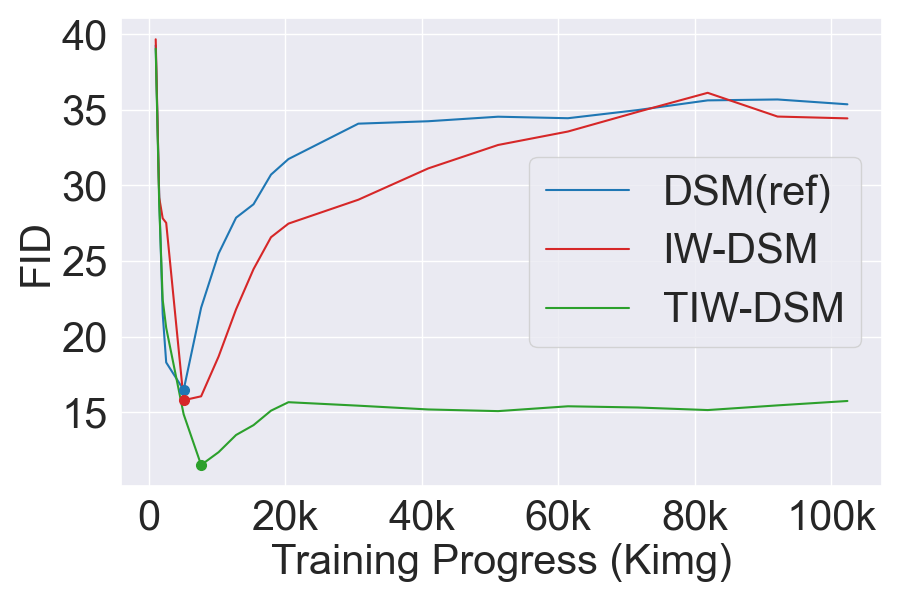}
         \caption{CIFAR-10 (LT, 5\%)}
     \end{subfigure}
     \hfill
     \begin{subfigure}[b]{0.24\textwidth}
         \centering
         \includegraphics[width=\textwidth, height=1.0in]{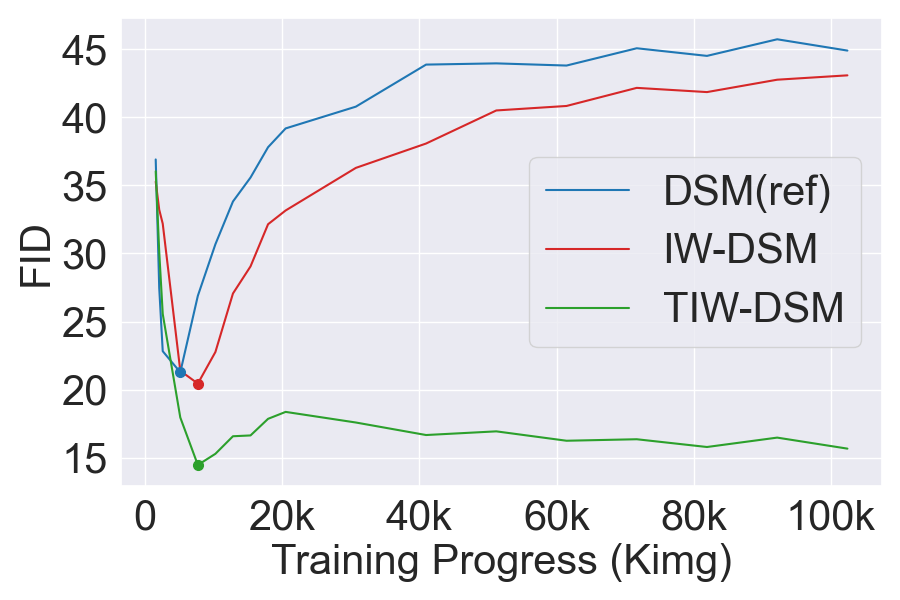}
         \caption{CIFAR-100 (LT, 5\%)}
     \end{subfigure}
     \hfill
        \label{fig:moredre}
    \begin{subfigure}[b]{0.24\textwidth}
         \centering
         \includegraphics[width=\textwidth, height=1.0in]{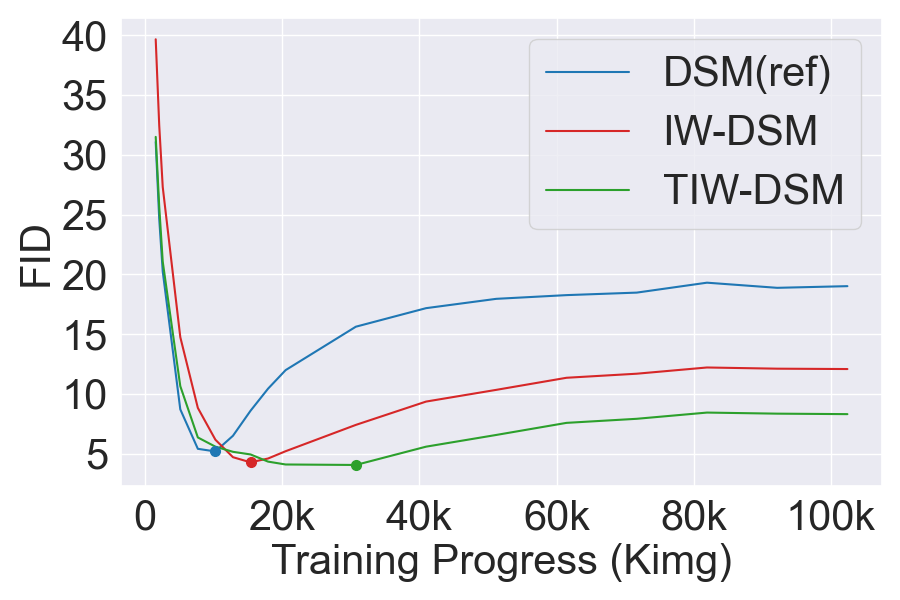}
         \caption{CIFAR-10 (LT, 50\%)}
     \end{subfigure}
     \hfill
     \begin{subfigure}[b]{0.24\textwidth}
         \centering
         \includegraphics[width=\textwidth, height=1.0in]{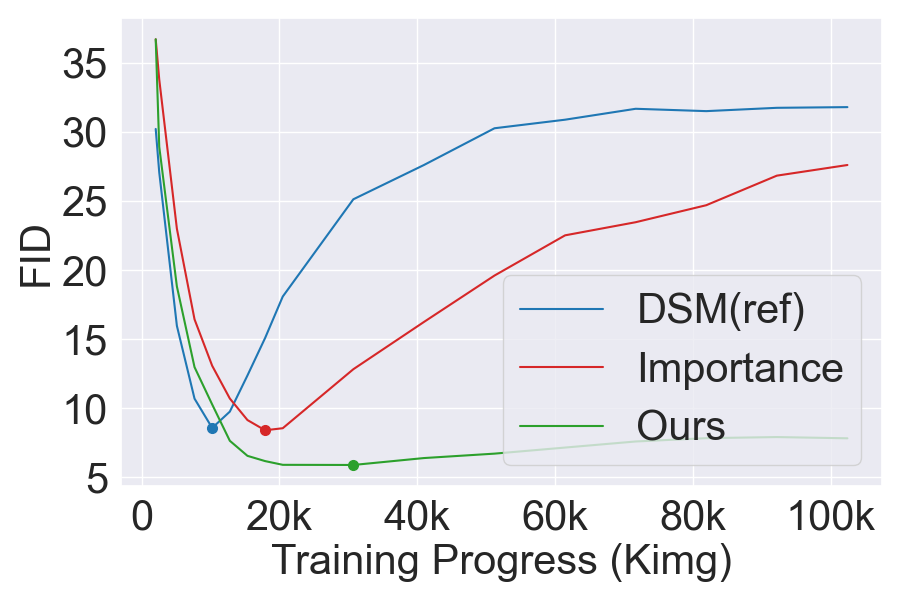}
         \caption{CIFAR-100 (LT, 50\%)}
     \end{subfigure}
     \caption{Training curves on CIFAR-10 / CIFAR-100 experiments.}
     \label{fig:CIFAR training curve}
\end{figure}

\subsection{Sample comparison}
We further provide the samples from each experiment in \Cref{fig:cifar10_sample,fig:celeba_sample,fig:ffhq_sample1,fig:ffhq_sample2}. We examine the proportion of each latent group on samples through \Cref{subsec:D3}, and reflect the latent statistics on each generated sample. \Cref{fig:morechange} shows more examples that the conversion from majority group to minority group through the proposed method in CelebA experiment.

\label{subsec:E3}
\begin{figure}[h]
    \hfill
     \centering
     \begin{subfigure}[b]{0.48\textwidth}
         \centering
         \includegraphics[width=\textwidth, height=0.8in]{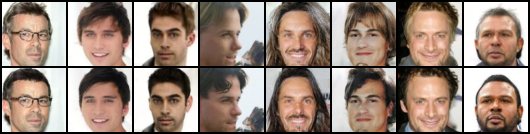}
         \caption{(male \& non-black hair) to (male\& black hair)}
         \label{subfig:morechange_1}
     \end{subfigure}
     \hfill
     \begin{subfigure}[b]{0.48\textwidth}
         \centering
         \includegraphics[width=\textwidth, height=0.8in]{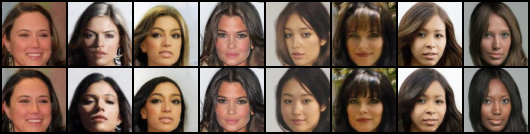}
         \caption{(female \& non-black hair) to (female\& black hair)}
         \label{subfig:morechange_2}
     \end{subfigure}
     \hfill
        \caption{Majority to minority conversion through our objective in CelebA (Benchmark, 5\%) experiment. The first row illustrates the samples from DSM(obs), and the second row illustrates the samples from TIW-DSM under the same random seed.}
        \label{fig:morechange}
\end{figure}

\begin{figure}[h]
   
    \hfill
     \centering
     \begin{subfigure}[b]{0.49\textwidth}
         \centering
         \includegraphics[width=\textwidth, height=2.5in]{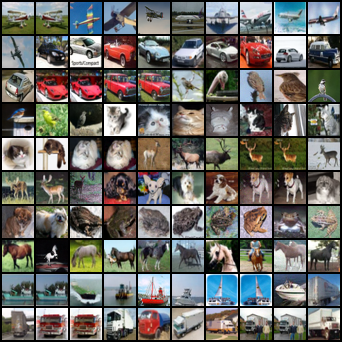}
         \caption{DSM(ref)}
     \end{subfigure}
     \hfill
     \begin{subfigure}[b]{0.49\textwidth}
         \centering
         \includegraphics[width=\textwidth, height=2.5in]{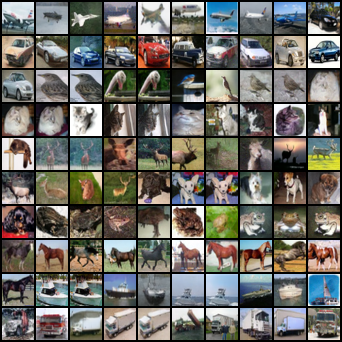}
         \caption{IW-DSM}
     \end{subfigure}
     \hfill
        \label{fig:moredre}
    \begin{subfigure}[b]{0.49\textwidth}
         \centering
         \includegraphics[width=\textwidth, height=2.5in]{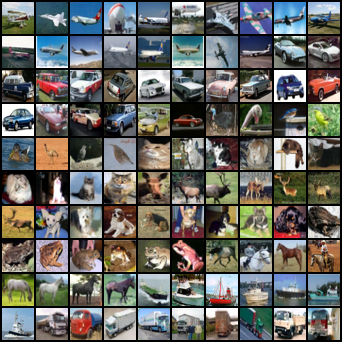}
         \caption{DSM(obs)}
     \end{subfigure}
     \hfill
     \begin{subfigure}[b]{0.49\textwidth}
         \centering
         \includegraphics[width=\textwidth, height=2.5in]{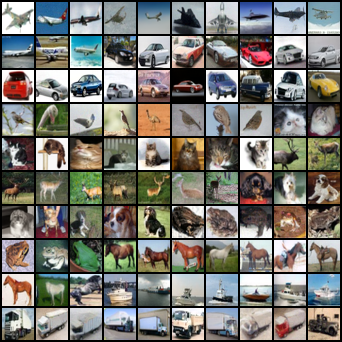}
         \caption{TIW-DSM}
     \end{subfigure}
         \caption{Samples that reflect latent statistics from CIFAR-10 (LT / 5\%) experiment.}
        \label{fig:cifar10_sample}
\end{figure}

\begin{figure}[h]
    \hfill
     \centering
     \begin{subfigure}[b]{1.0\textwidth}
         \centering
         \includegraphics[width=\textwidth, height=1.5in]{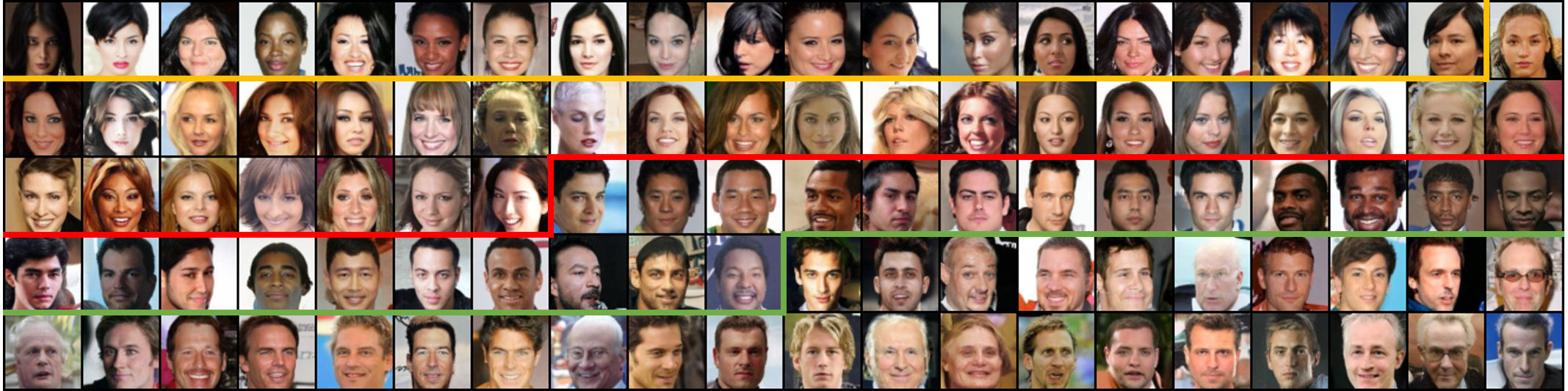}
         \caption{DSM(ref)}
     \end{subfigure}
     \hfill
     \begin{subfigure}[b]{1.0\textwidth}
         \centering
         \includegraphics[width=\textwidth, height=1.5in]{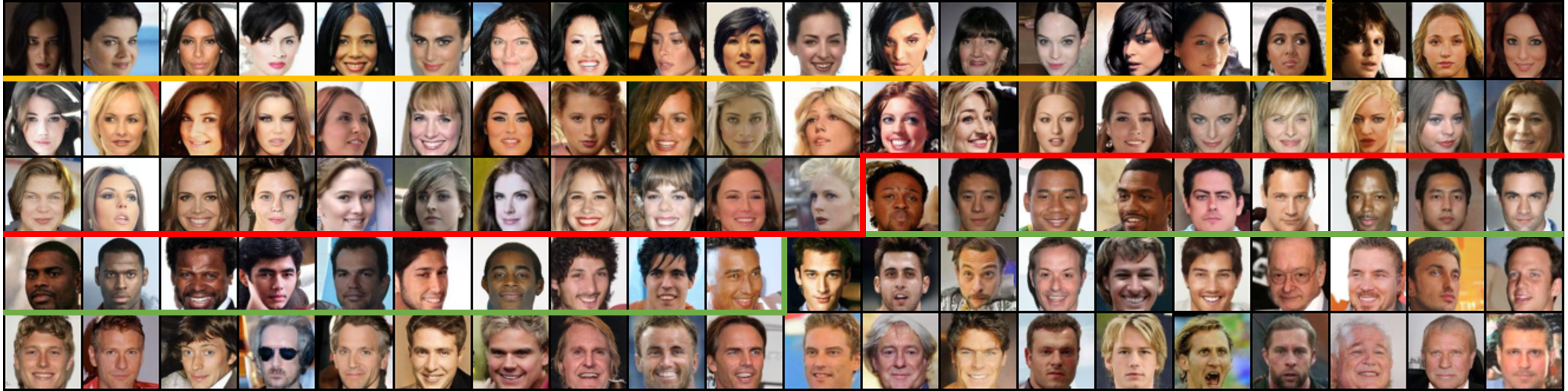}
         \caption{IW-DSM}
     \end{subfigure}
     \hfill
        \label{fig:moredre}
     \begin{subfigure}[b]{1.0\textwidth}
         \centering
         \includegraphics[width=\textwidth, height=1.5in]{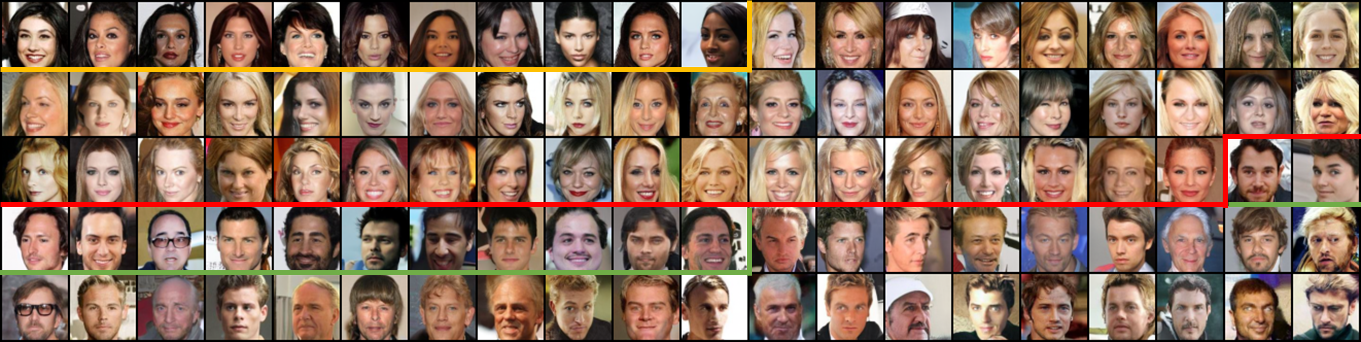}
         \caption{DSM(obs)}
     \end{subfigure}
     \hfill
     \begin{subfigure}[b]{1.0\textwidth}
         \centering
         \includegraphics[width=\textwidth, height=1.5in]{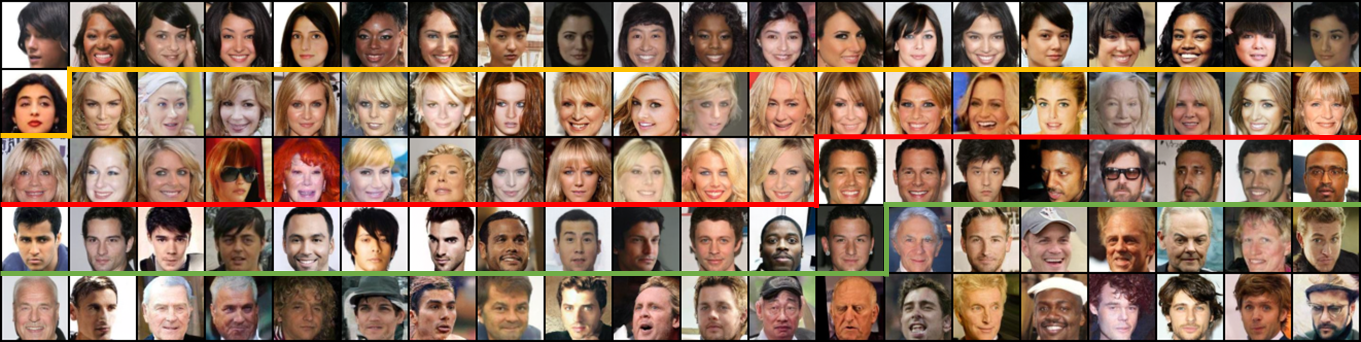}
         \caption{TIW-DSM}
     \end{subfigure}
         \caption{Samples that reflect latent statistics from CelebA (Benchmark, 5\%) experiment}
        \label{fig:celeba_sample}
\end{figure}

\begin{figure}[h]
    \hfill
     \centering
     \begin{subfigure}[b]{1.0\textwidth}
         \centering
         \includegraphics[width=\textwidth, height=1.5in]{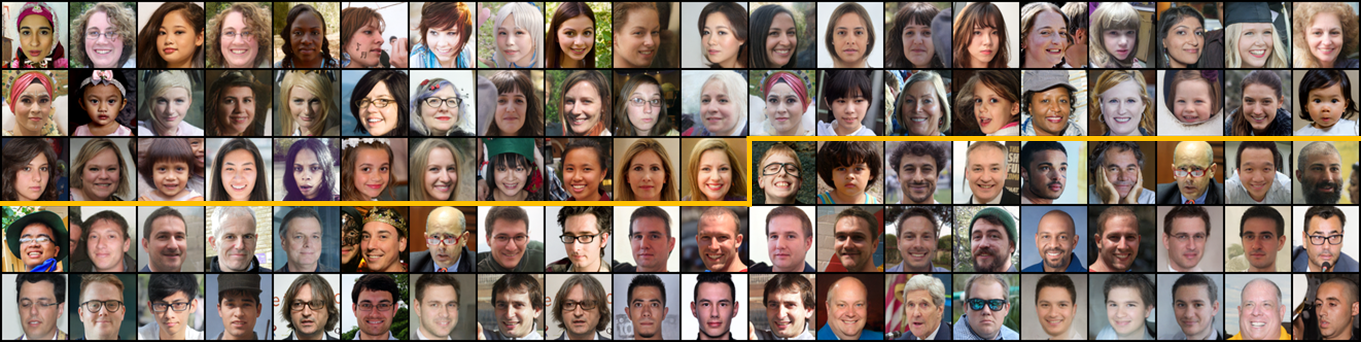}
         \caption{DSM(ref)}
     \end{subfigure}
     \hfill
     \begin{subfigure}[b]{1.0\textwidth}
         \centering
         \includegraphics[width=\textwidth, height=1.5in]{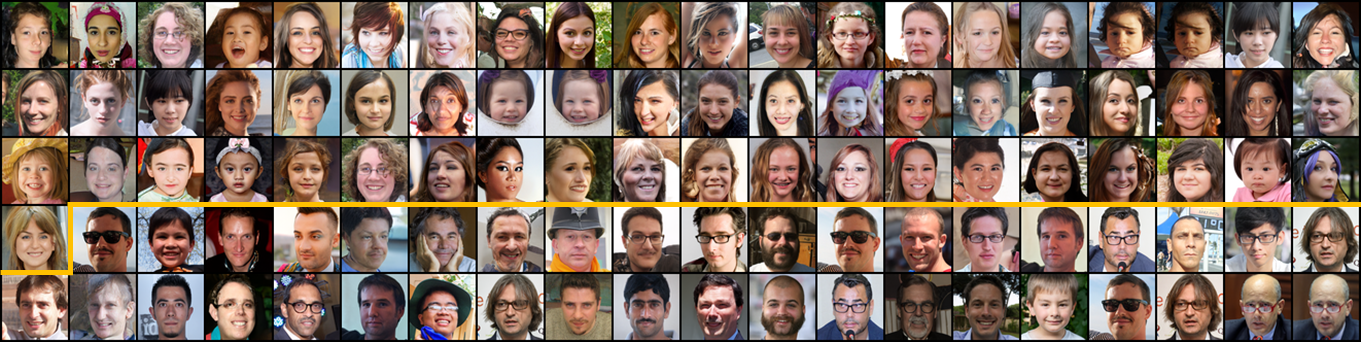}
         \caption{IW-DSM}
     \end{subfigure}
     \hfill
     \begin{subfigure}[b]{1.0\textwidth}
         \centering
         \includegraphics[width=\textwidth, height=1.5in]{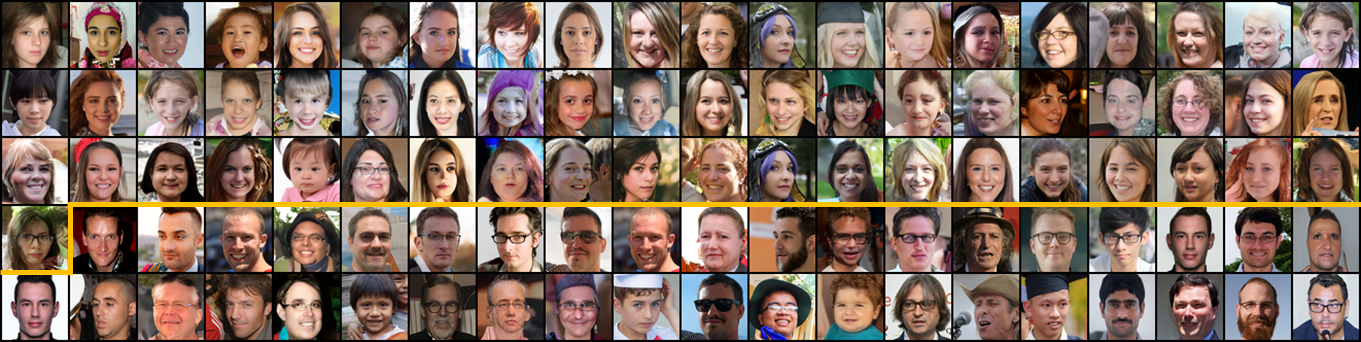}
         \caption{DSM(obs)}
     \end{subfigure}
     \hfill
     \begin{subfigure}[b]{1.0\textwidth}
         \centering
         \includegraphics[width=\textwidth, height=1.5in]{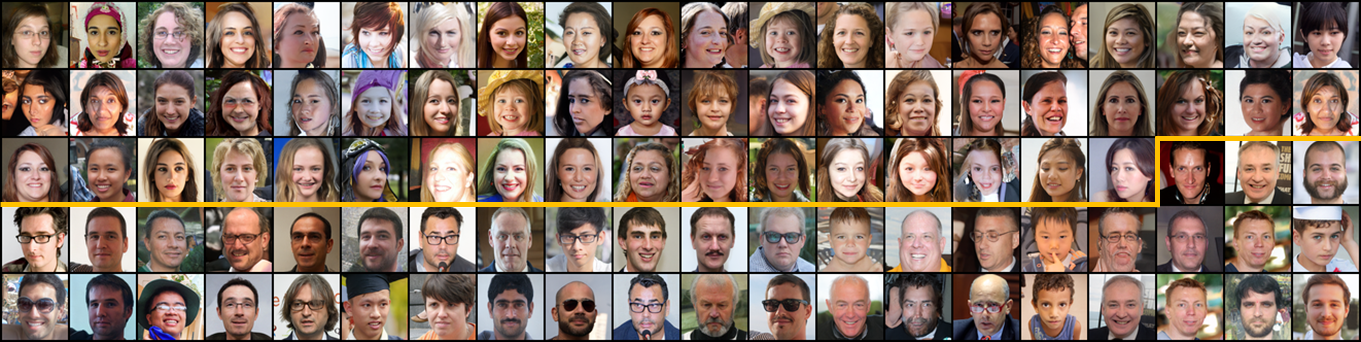}
         \caption{TIW-DSM}
     \end{subfigure}
     \hfill
        \label{fig:moredre}
         \caption{Samples that reflect latent statistics from FFHQ (Gender 80\%, 1.25\%) experiment}
        \label{fig:ffhq_sample1}
\end{figure}

\begin{figure}[h]
    \hfill
     \centering
     \begin{subfigure}[b]{1.0\textwidth}
         \centering
         \includegraphics[width=\textwidth, height=1.5in]{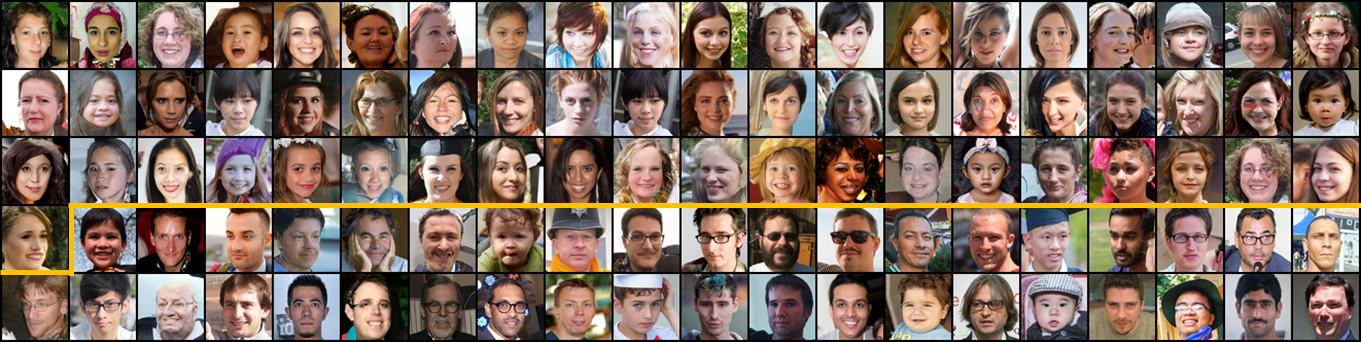}
         \caption{FFHQ (Gender 90\%, 1.25\%)}
     \end{subfigure}
     \hfill
     \begin{subfigure}[b]{1.0\textwidth}
         \centering
         \includegraphics[width=\textwidth, height=1.5in]{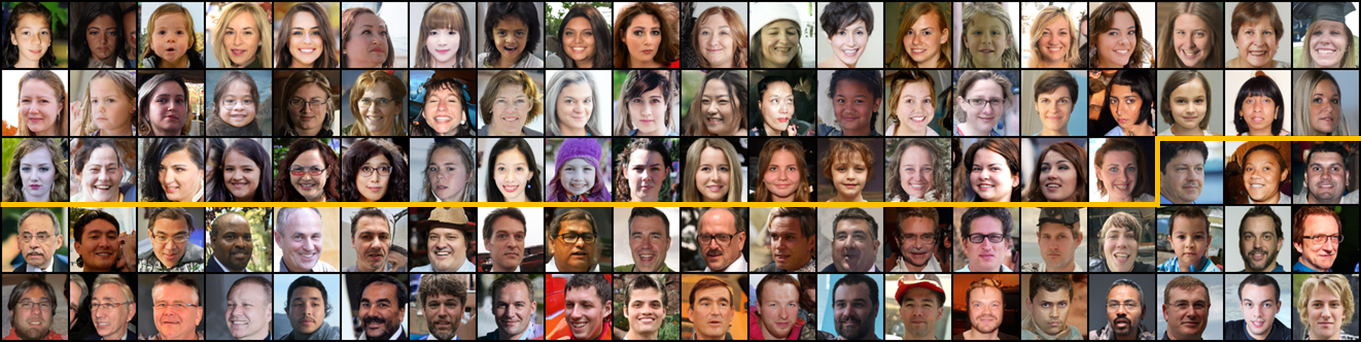}
         \caption{FFHQ (Gender 90\%, 12.5\%)}
     \end{subfigure}
     \hfill
     \begin{subfigure}[b]{1.0\textwidth}
         \centering
         \includegraphics[width=\textwidth, height=1.5in]{figure/FFHQ/80_500.png}
         \caption{FFHQ (Gender 80\%, 1.25\%)}
     \end{subfigure}
     \hfill
     \begin{subfigure}[b]{1.0\textwidth}
         \centering
         \includegraphics[width=\textwidth, height=1.5in]{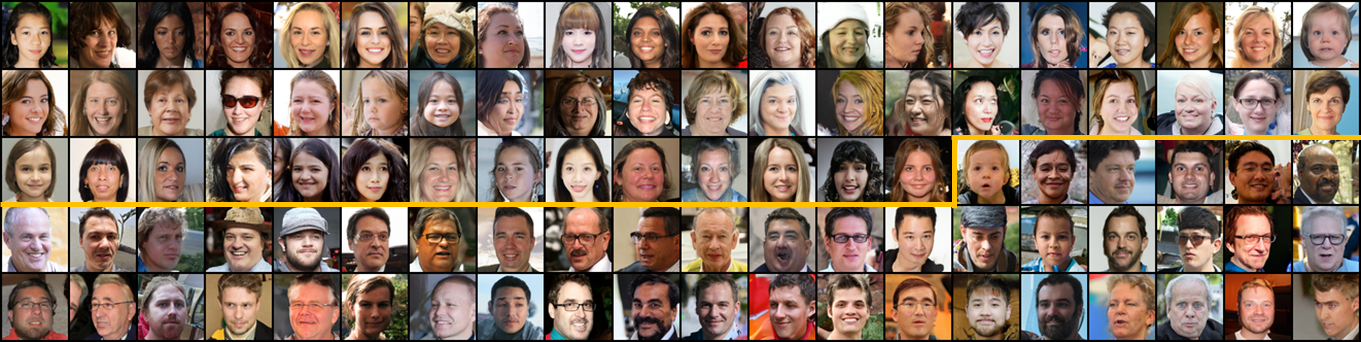}
         \caption{FFHQ (Gender 80\%, 12.5\%)}
     \end{subfigure}
     \hfill
        \label{fig:moredre}
         \caption{Samples that reflect latent statistics from TIW-DSM according to bias strength \& reference size in FFHQ experiments.}
        \label{fig:ffhq_sample2}
\end{figure}

\subsection{Density ratio analysis}
\label{subsec:E4}
We provide more density ratio statistics according to diffusion time in various experiments which are discussed in \Cref{subsec:4.4}. \Cref{fig:moredre1,fig:moredre2,fig:moredre3,fig:moredre4} shows the case in FFHQ, and CelebA. \Cref{fig:moredre} shows the reweighting value on the 2-D cases.

\begin{figure}[h!]
    \hfill
     \centering
     \begin{subfigure}[b]{0.195\textwidth}
         \centering
         \includegraphics[width=\textwidth, height=1.0in]{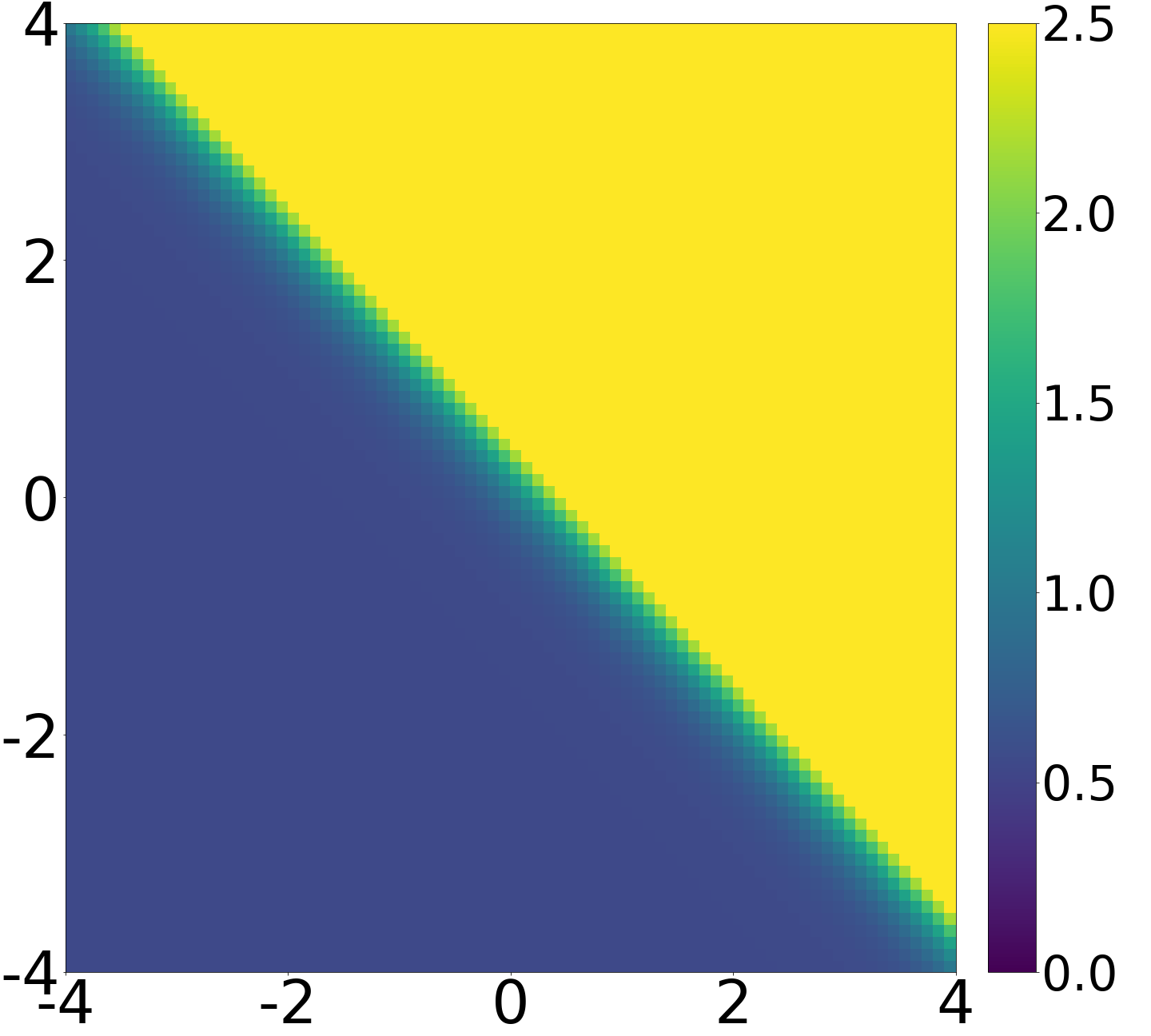}
         \caption{$t=0.0$}
     \end{subfigure}
      \begin{subfigure}[b]{0.195\textwidth}
         \centering
         \includegraphics[width=\textwidth, height=1.0in]{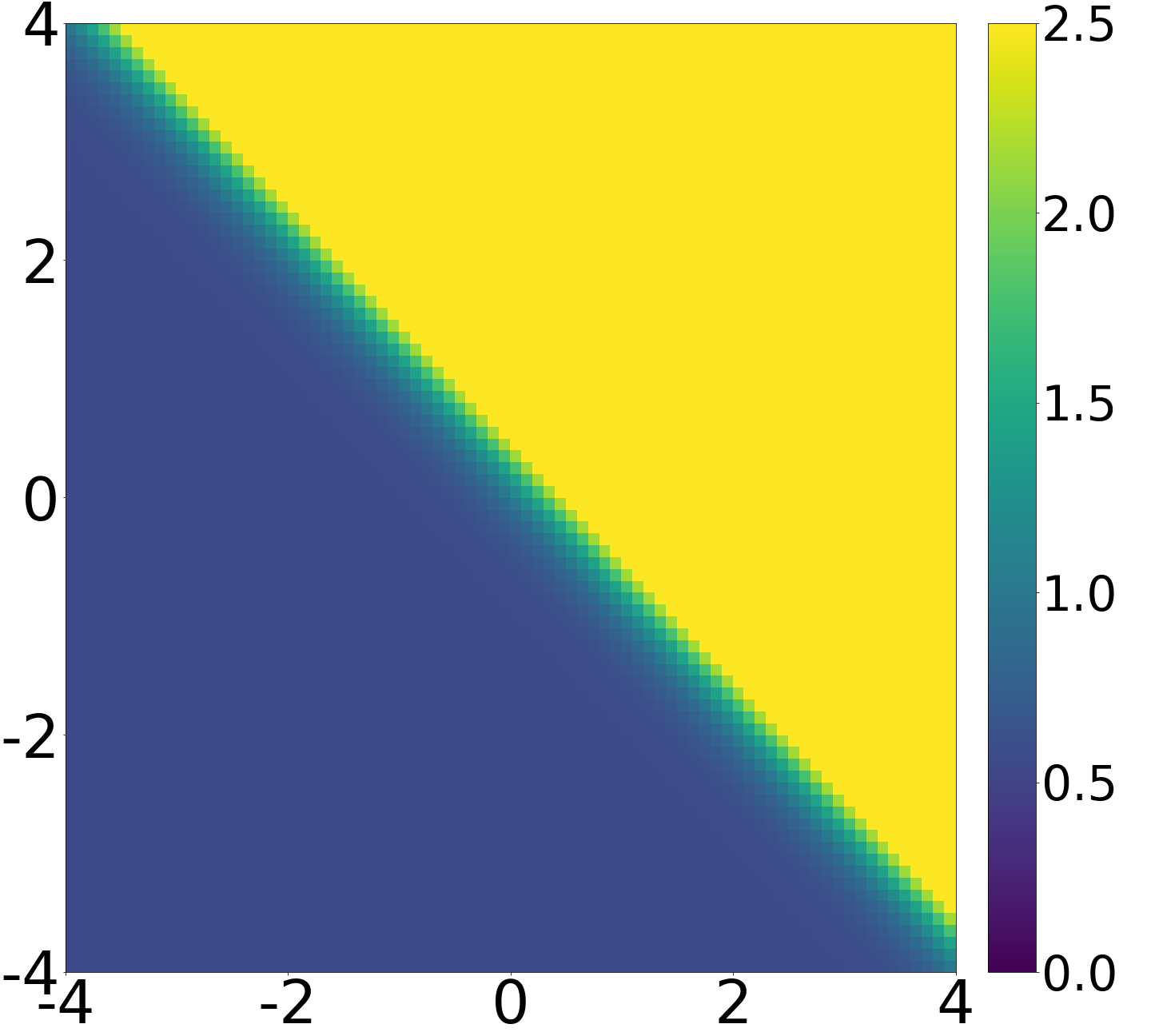}
         \caption{$t=0.1$}
     \end{subfigure}
      \begin{subfigure}[b]{0.195\textwidth}
         \centering
         \includegraphics[width=\textwidth, height=1.0in]{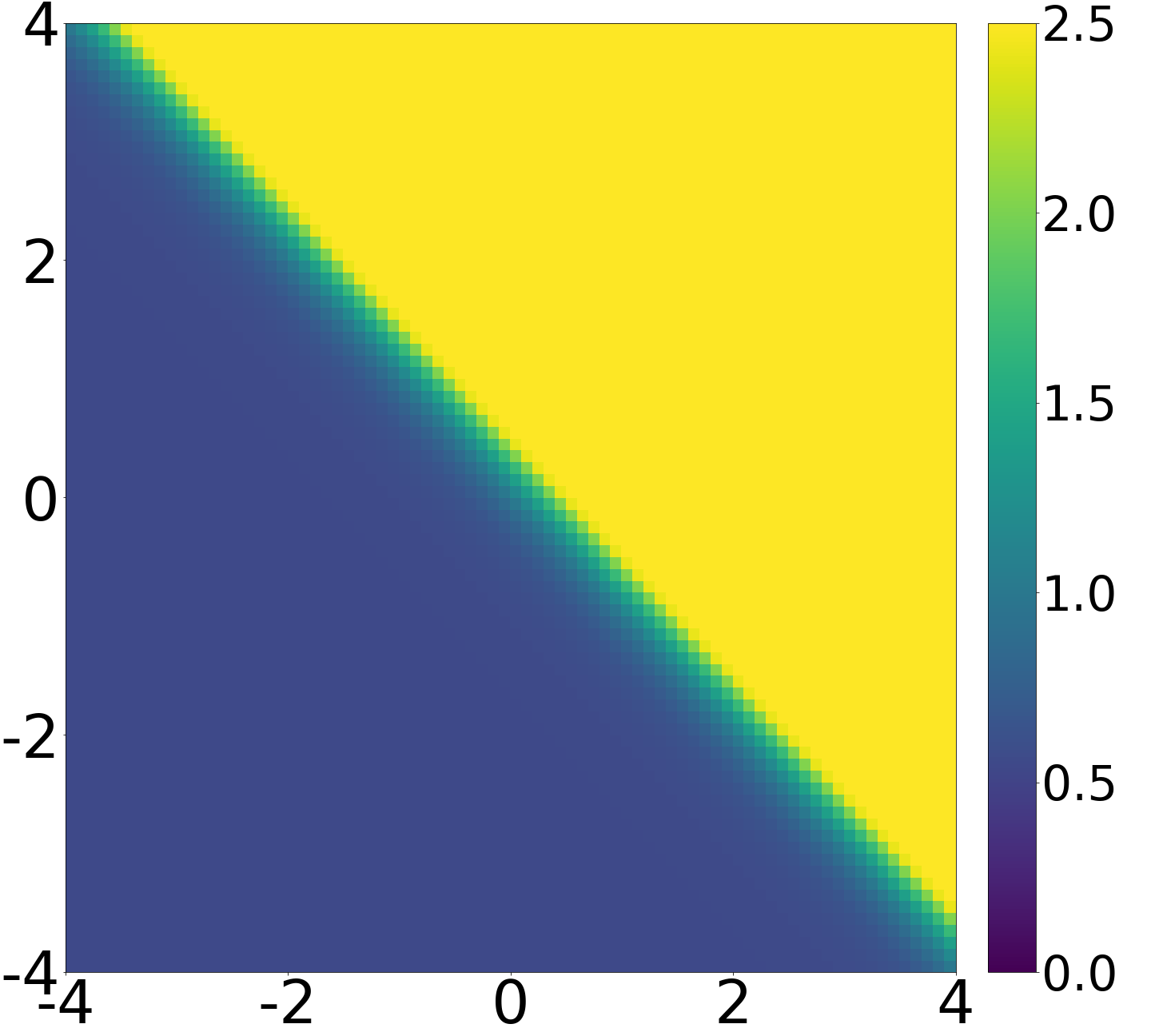}
         \caption{$t=0.2$}
     \end{subfigure}
      \begin{subfigure}[b]{0.195\textwidth}
         \centering
         \includegraphics[width=\textwidth, height=1.0in]{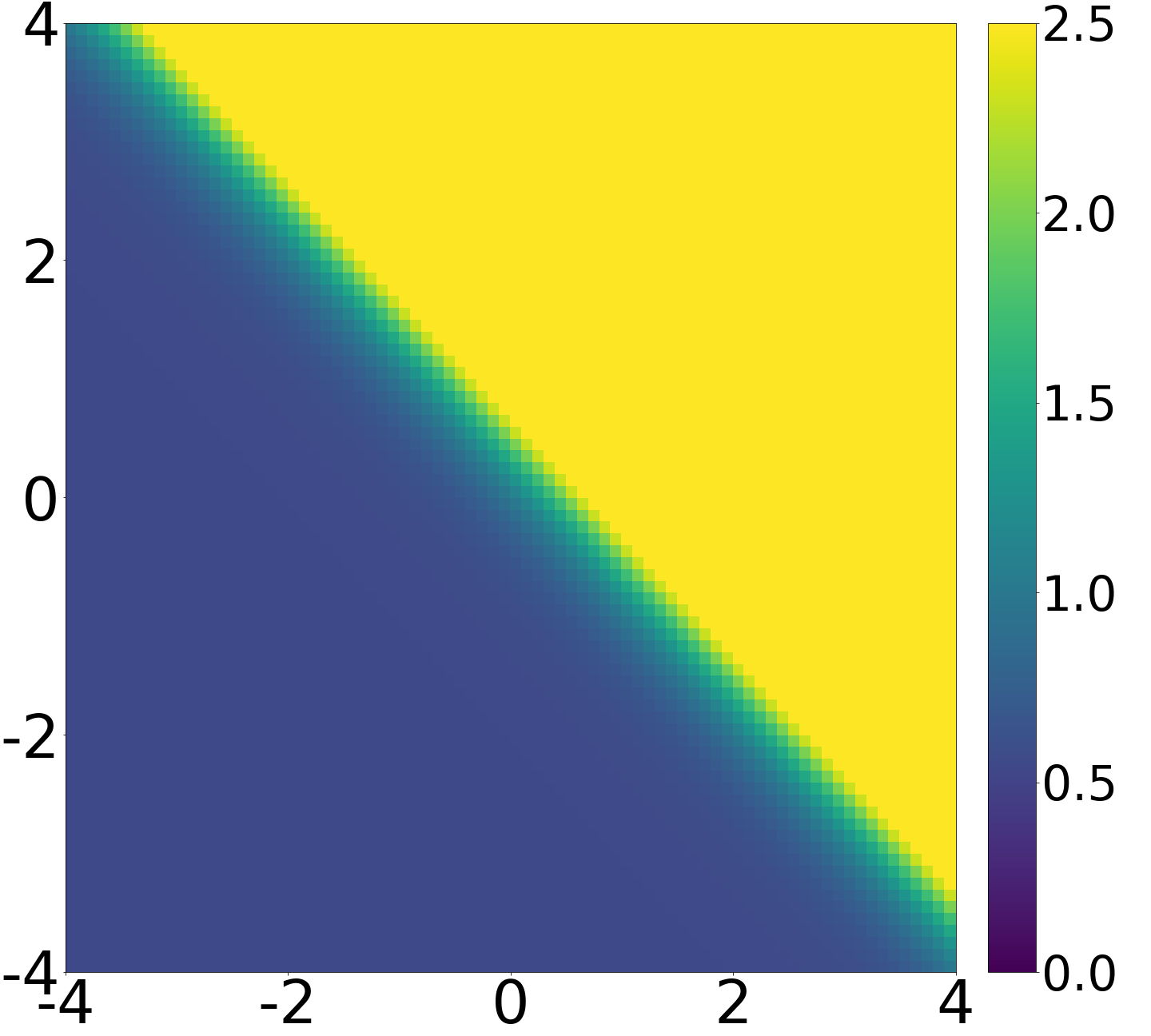}
         \caption{$t=0.3$}
     \end{subfigure}
      \begin{subfigure}[b]{0.195\textwidth}
         \centering
         \includegraphics[width=\textwidth, height=1.0in]{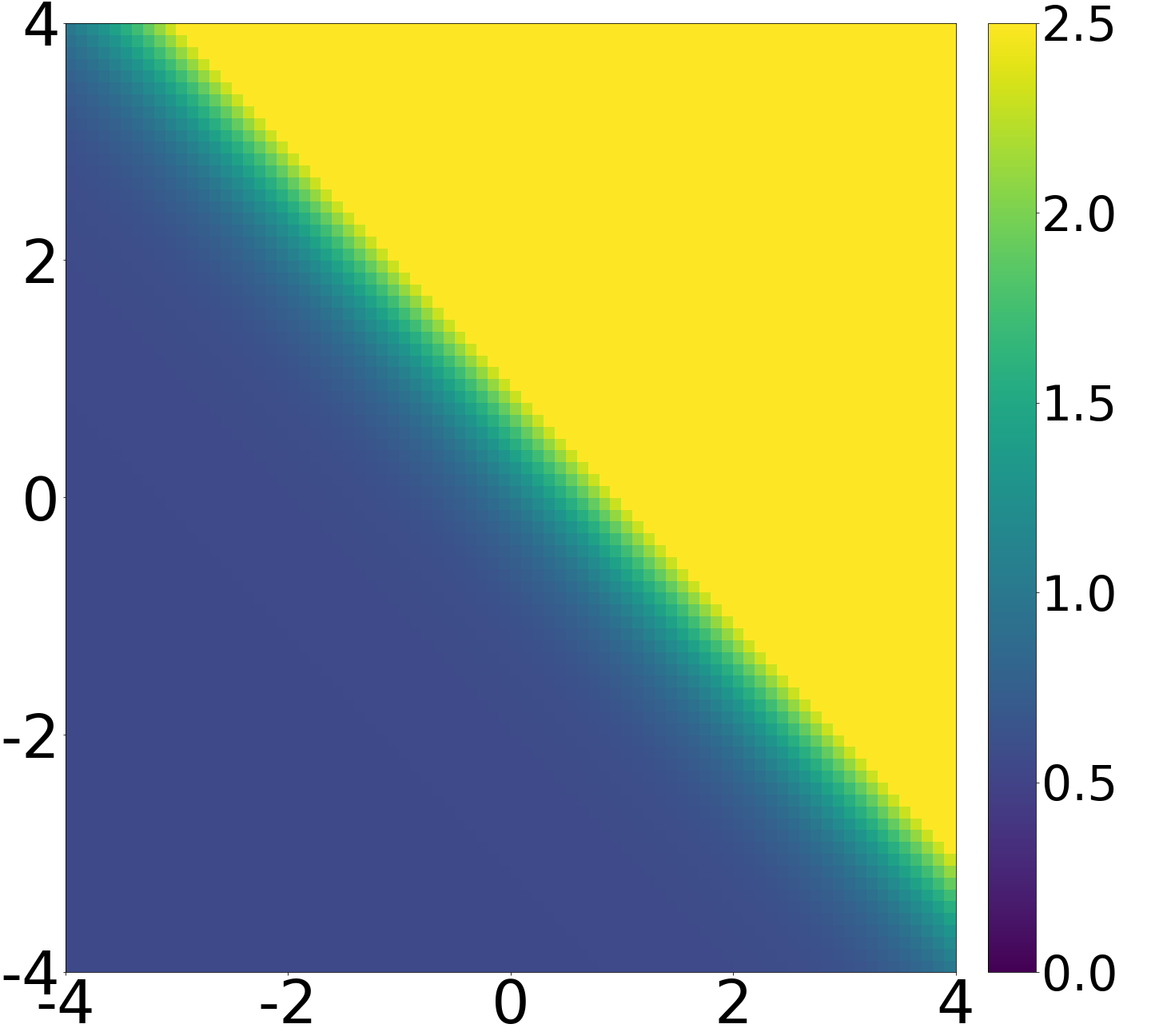}
         \caption{$t=0.4$}
     \end{subfigure}
      \begin{subfigure}[b]{0.195\textwidth}
         \centering
         \includegraphics[width=\textwidth, height=1.0in]{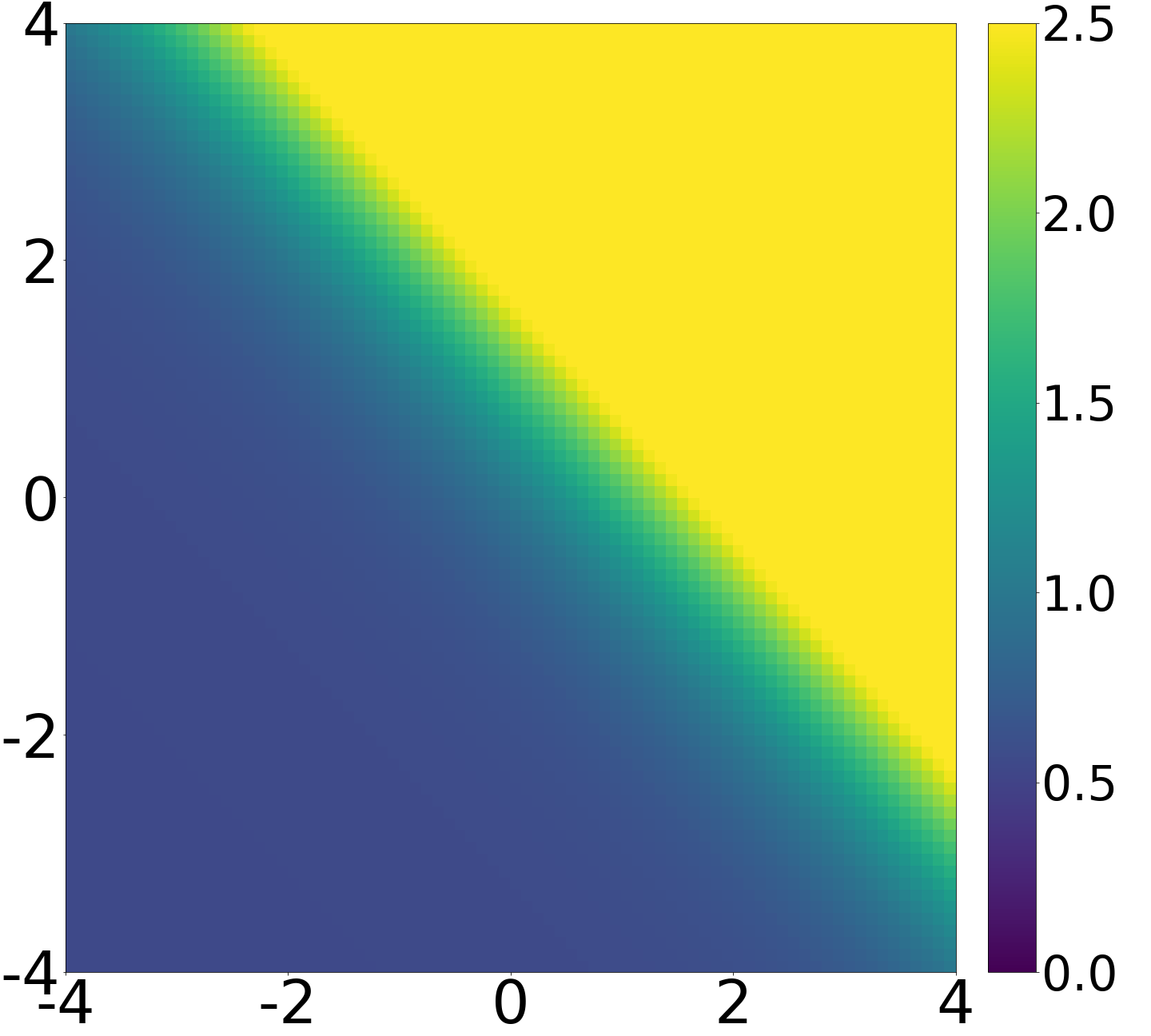}
         \caption{$t=0.5$}
     \end{subfigure}
      \begin{subfigure}[b]{0.195\textwidth}
         \centering
         \includegraphics[width=\textwidth, height=1.0in]{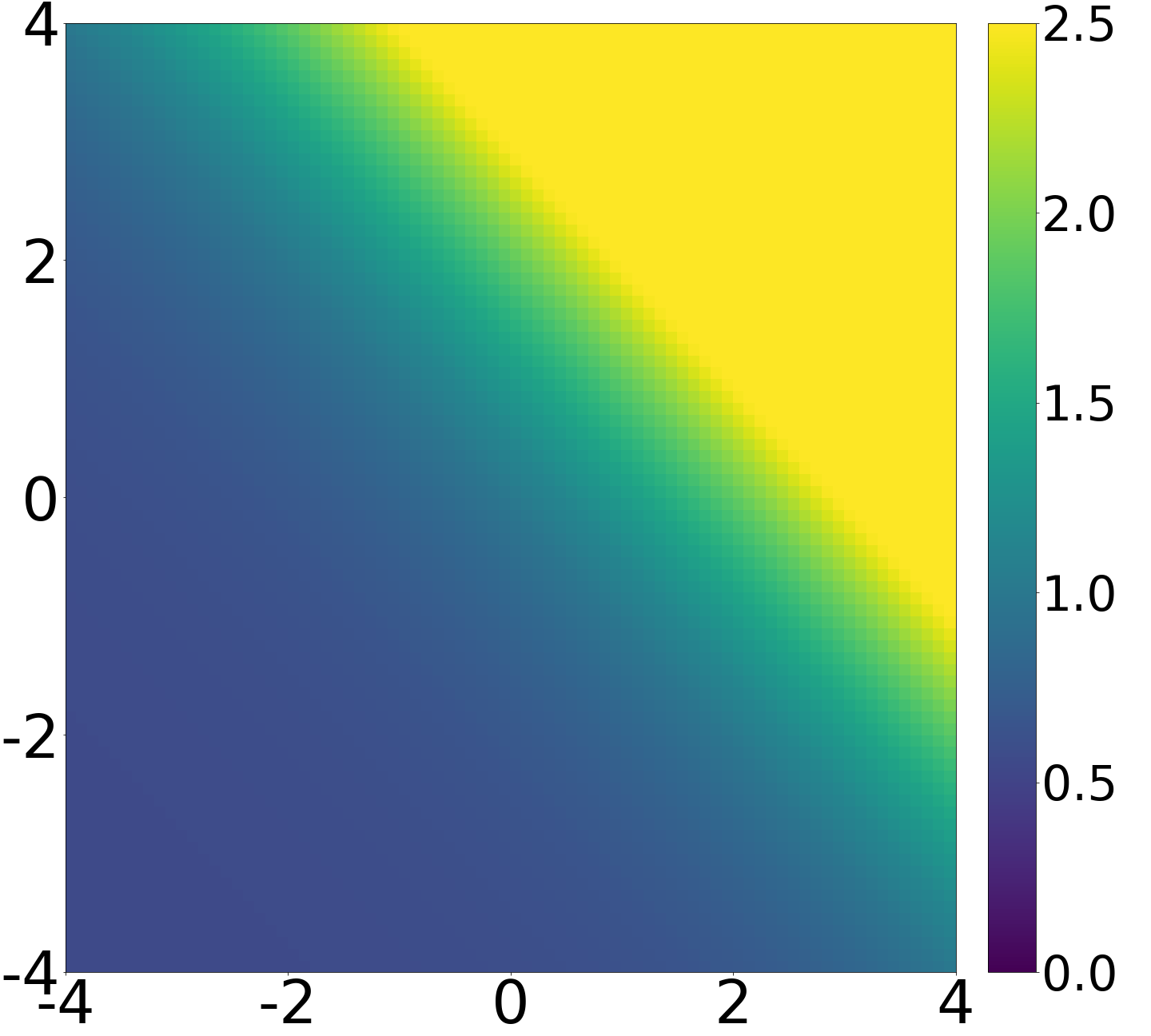}
         \caption{$t=0.6$}
     \end{subfigure}
      \begin{subfigure}[b]{0.195\textwidth}
         \centering
         \includegraphics[width=\textwidth, height=1.0in]{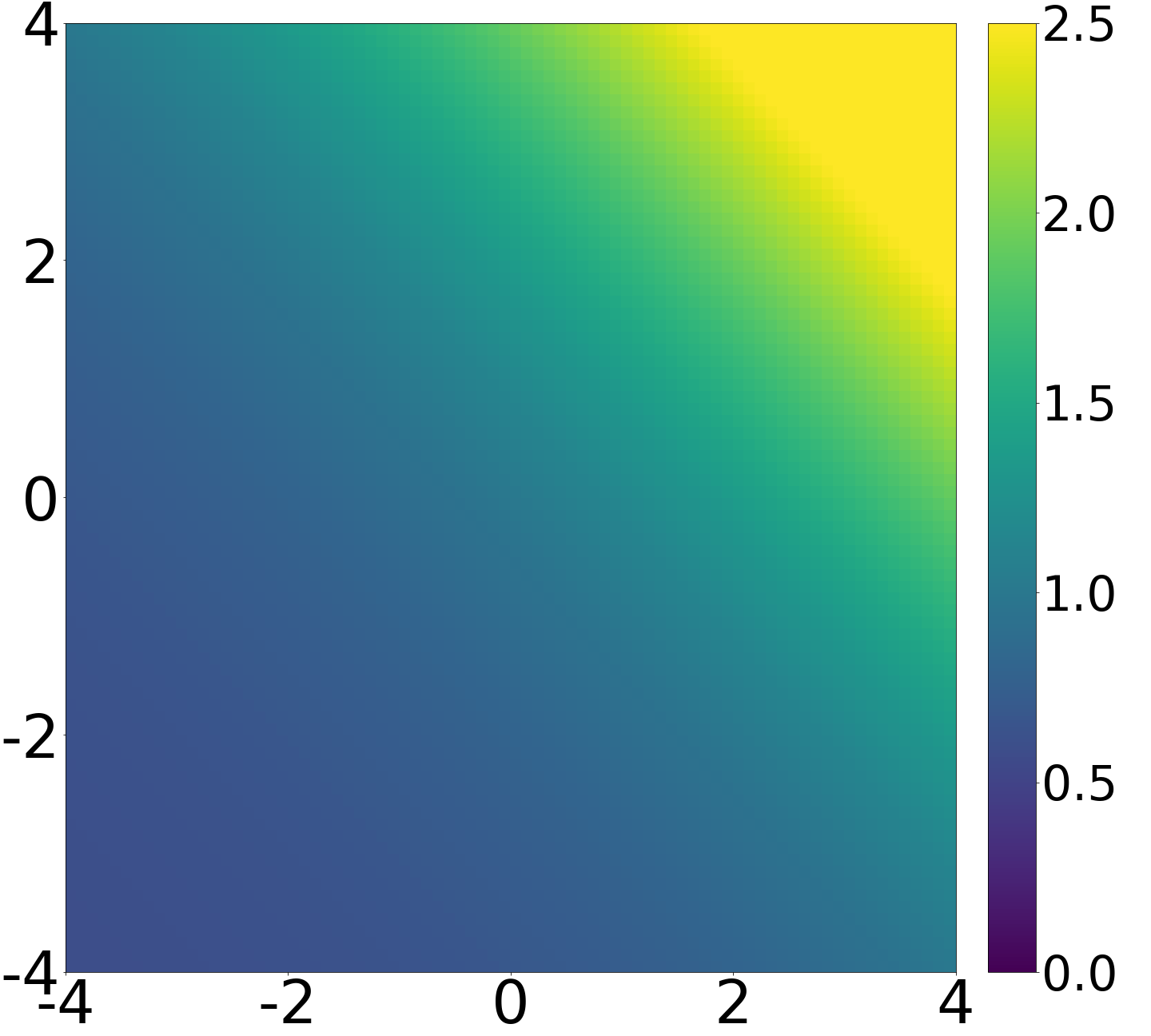}
         \caption{$t=0.7$}
     \end{subfigure}
      \begin{subfigure}[b]{0.195\textwidth}
         \centering
         \includegraphics[width=\textwidth, height=1.0in]{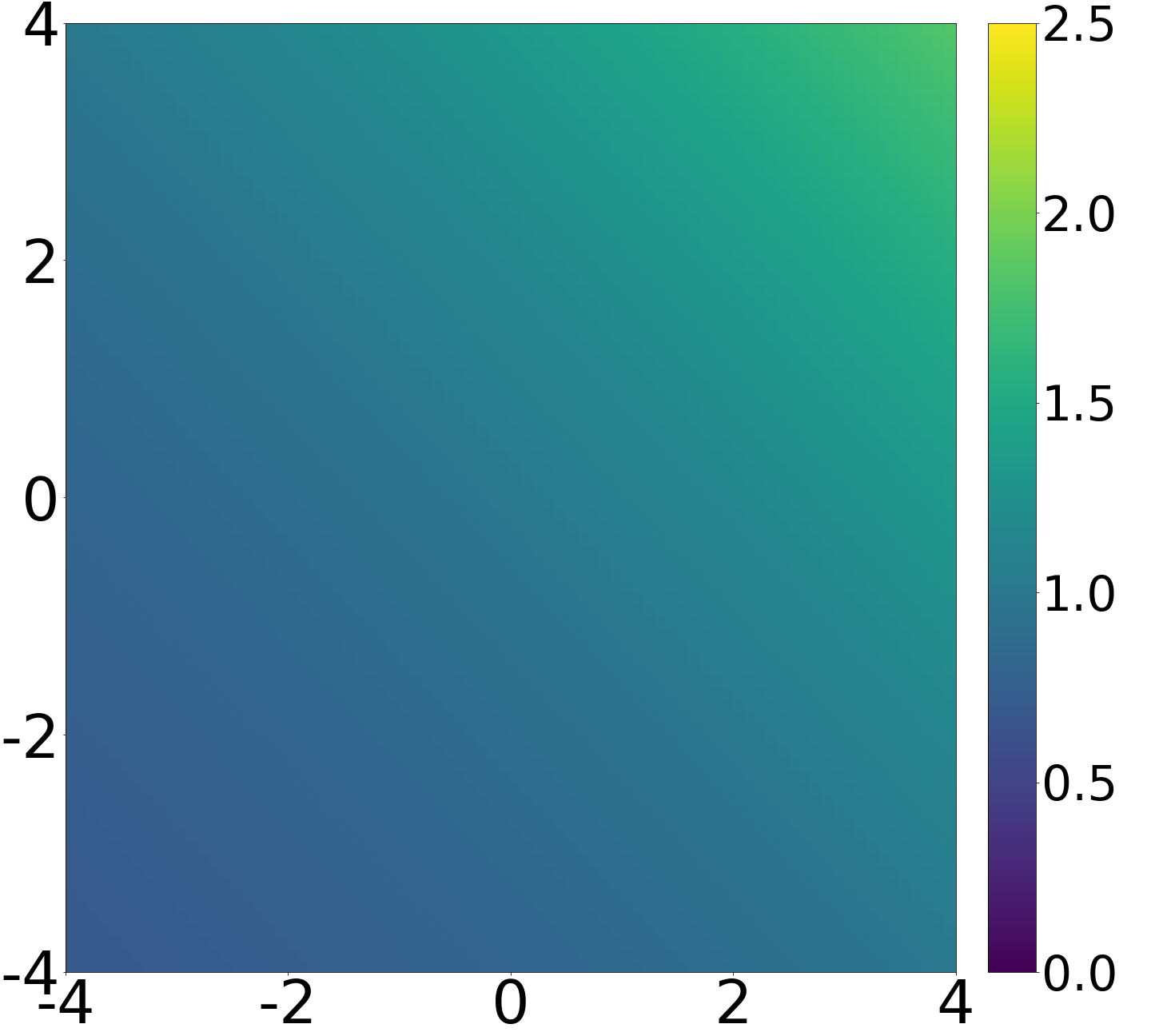}
         \caption{$t=0.8$}
     \end{subfigure}
      \begin{subfigure}[b]{0.195\textwidth}
         \centering
         \includegraphics[width=\textwidth, height=1.0in]{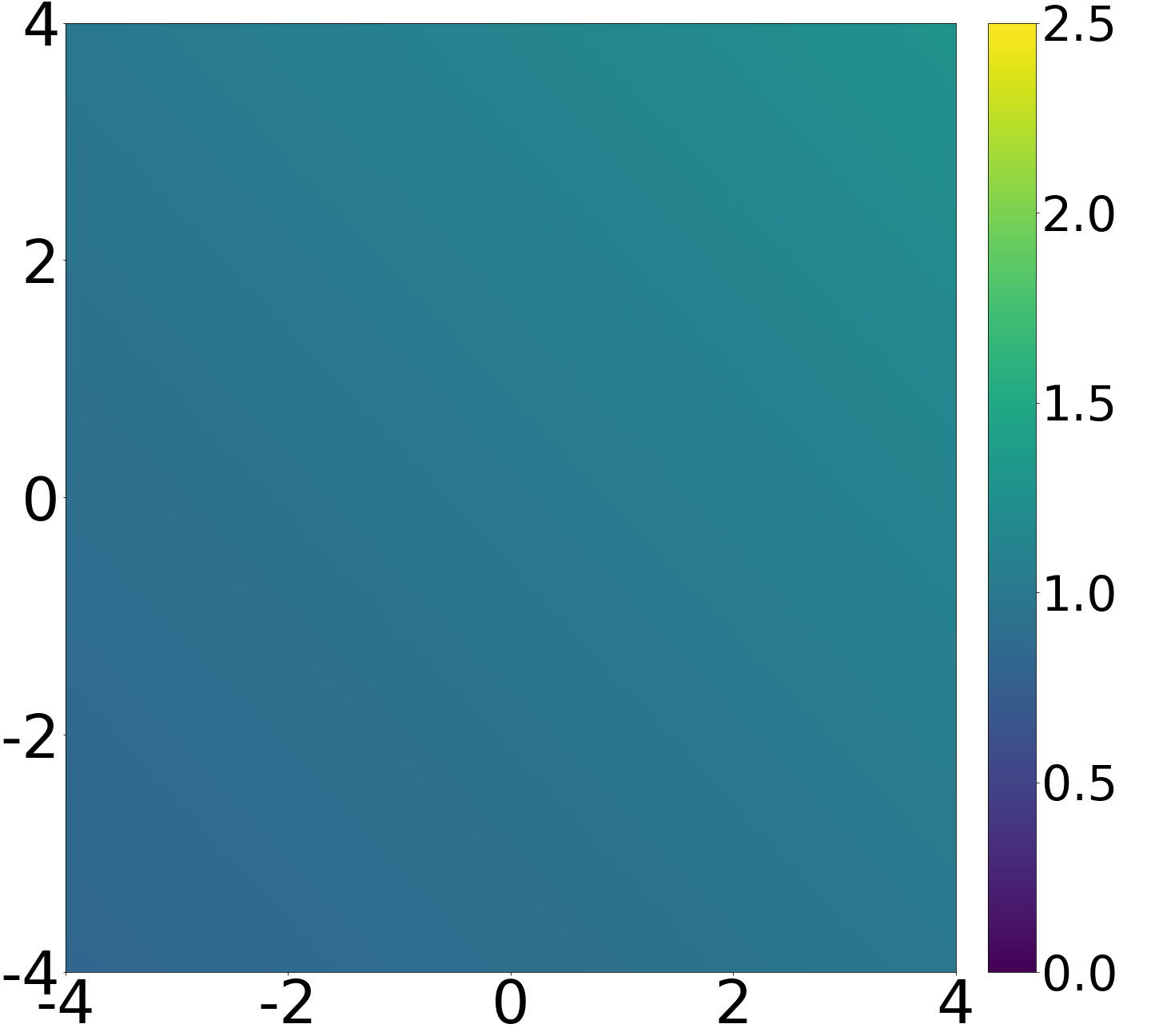}
         \caption{$t=0.9$}
     \end{subfigure}
        \caption{Density ratio analysis on 2-D example on various diffusion time (VP).}
        \label{fig:moredre}
\end{figure}
\begin{figure}[h!]
    \hfill
     \centering
     \begin{subfigure}[b]{0.24\textwidth}
         \centering
         \includegraphics[width=\textwidth, height=1.0in]{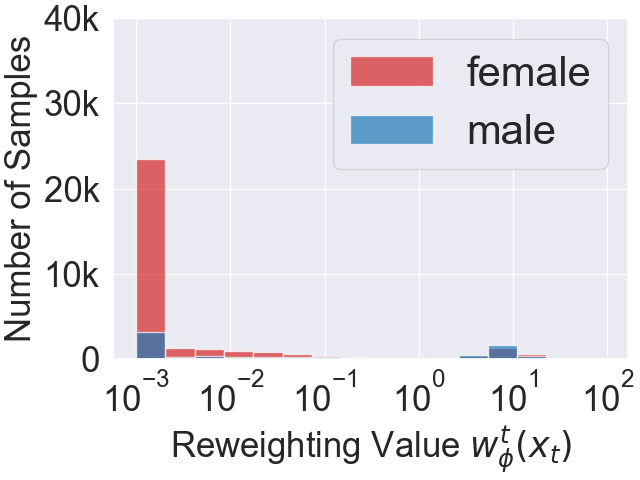}
         \caption{$\sigma(t)=0$}
     \end{subfigure}
     \hfill
     \begin{subfigure}[b]{0.24\textwidth}
         \centering
         \includegraphics[width=\textwidth, height=1.0in]{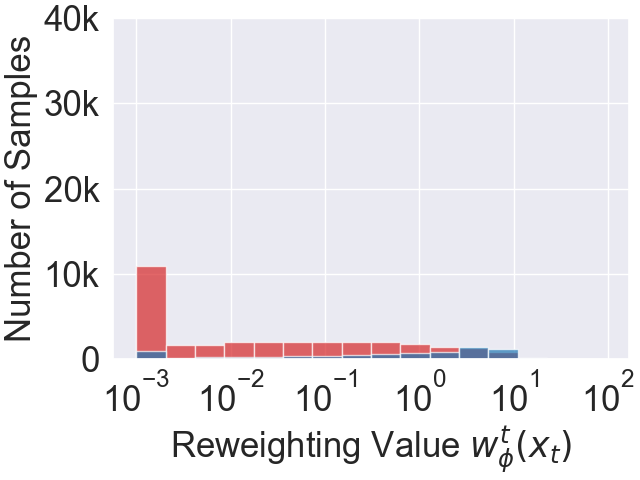}
         \caption{$\sigma(t)=0.34$}
     \end{subfigure}
     \hfill
        \label{fig:moredre}
    \begin{subfigure}[b]{0.24\textwidth}
         \centering
         \includegraphics[width=\textwidth, height=1.0in]{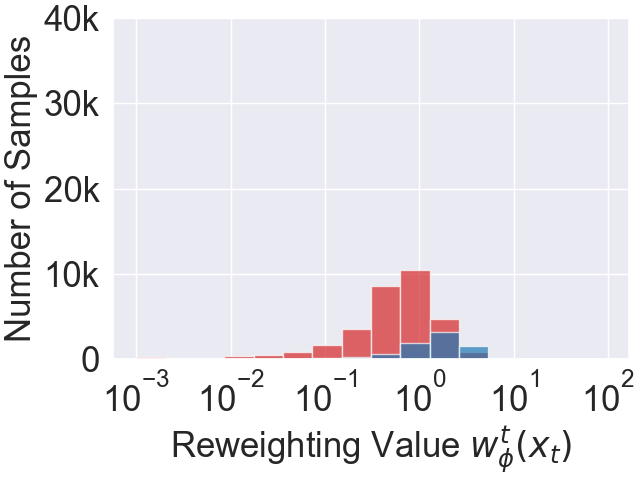}
         \caption{$\sigma(t)=0.72$}
     \end{subfigure}
     \hfill
     \begin{subfigure}[b]{0.24\textwidth}
         \centering
         \includegraphics[width=\textwidth, height=1.0in]{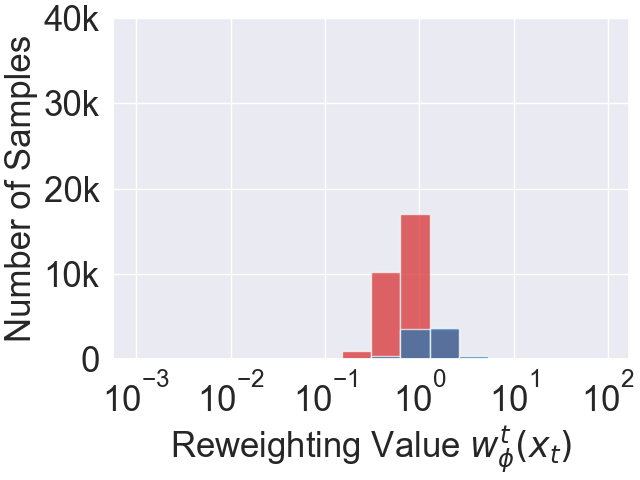}
         \caption{$\sigma(t)=1.2$}
     \end{subfigure}
     \begin{subfigure}[b]{0.24\textwidth}
         \centering
         \includegraphics[width=\textwidth, height=1.0in]{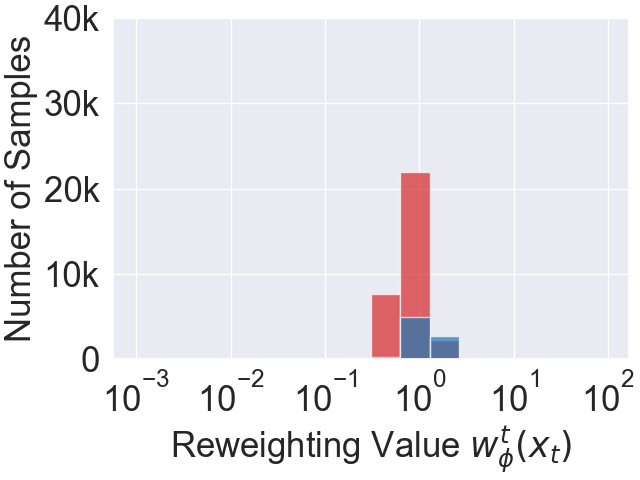}
         \caption{$\sigma(t)=2.0$}
     \end{subfigure}
     \hfill
     \begin{subfigure}[b]{0.24\textwidth}
         \centering
         \includegraphics[width=\textwidth, height=1.0in]{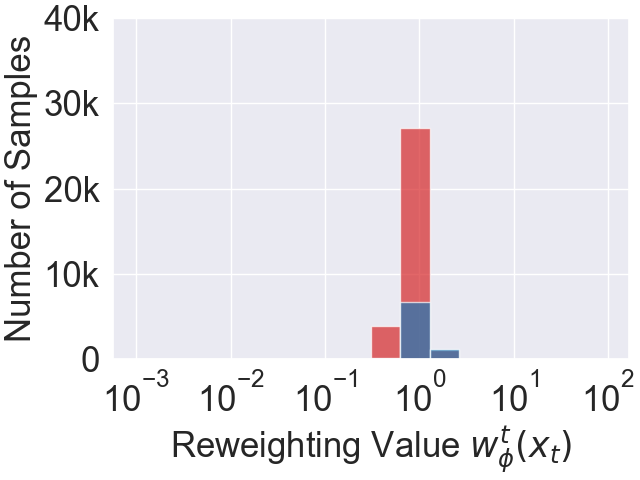}
         \caption{$\sigma(t)=3.4$}
     \end{subfigure}
     \hfill
        \label{fig:moredre}
    \begin{subfigure}[b]{0.24\textwidth}
         \centering
         \includegraphics[width=\textwidth, height=1.0in]{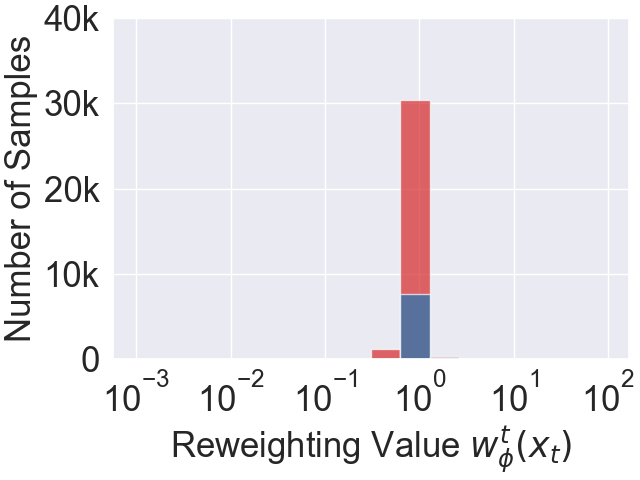}
         \caption{$\sigma(t)=6.0$}
     \end{subfigure}
     \hfill
     \begin{subfigure}[b]{0.24\textwidth}
         \centering
         \includegraphics[width=\textwidth, height=1.0in]{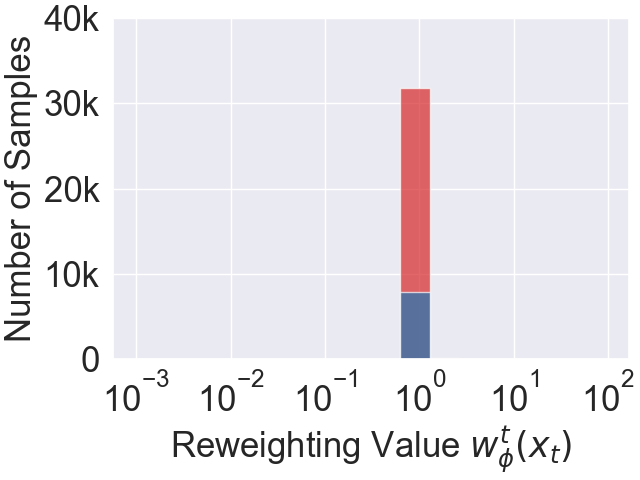}
         \caption{$\sigma(t)=12\sim80$}
     \end{subfigure}
     \hfill
     \caption{Density ratio analysis on FFHQ (Gender 80\%, 12.5\%) on various diffusion time for $\mathcal{D}_{\text{bias}}$}
        \label{fig:moredre1}
\end{figure}
\begin{figure}[h!]
    
    \hfill
     \centering
     \begin{subfigure}[b]{0.24\textwidth}
         \centering
         \includegraphics[width=\textwidth, height=1.0in]{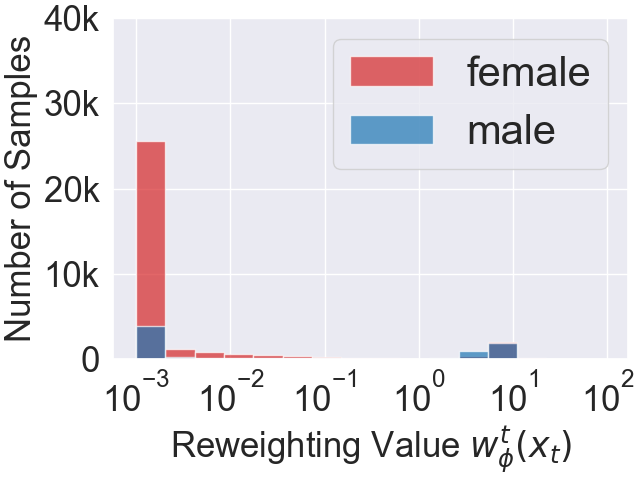}
         \caption{$\sigma(t)=0$}
     \end{subfigure}
     \hfill
     \begin{subfigure}[b]{0.24\textwidth}
         \centering
         \includegraphics[width=\textwidth, height=1.0in]{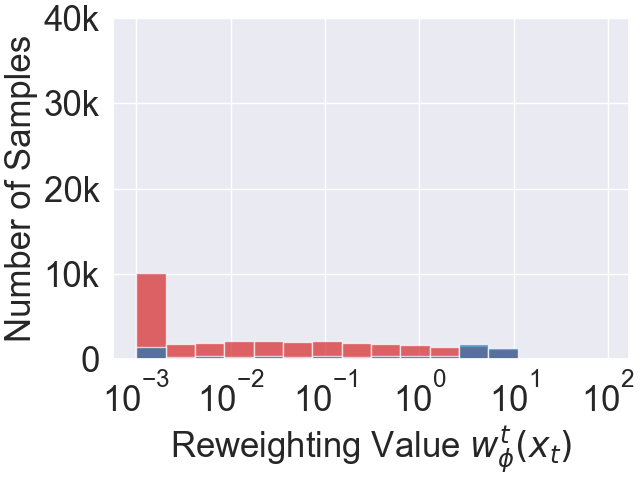}
         \caption{$\sigma(t)=0.34$}
     \end{subfigure}
     \hfill
        \label{fig:moredre}
    \begin{subfigure}[b]{0.24\textwidth}
         \centering
         \includegraphics[width=\textwidth, height=1.0in]{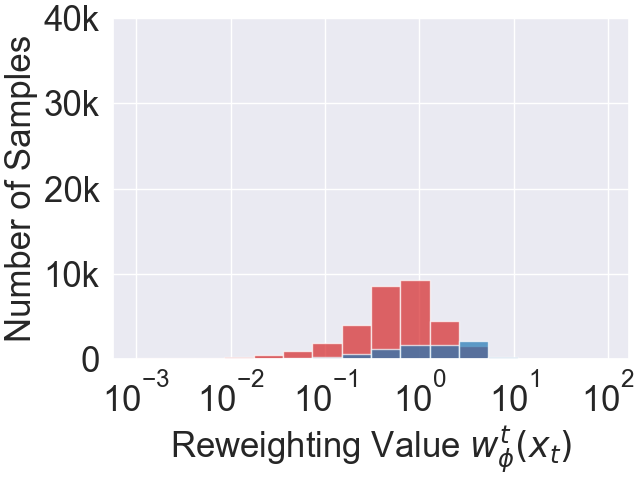}
         \caption{$\sigma(t)=0.72$}
     \end{subfigure}
     \hfill
     \begin{subfigure}[b]{0.24\textwidth}
         \centering
         \includegraphics[width=\textwidth, height=1.0in]{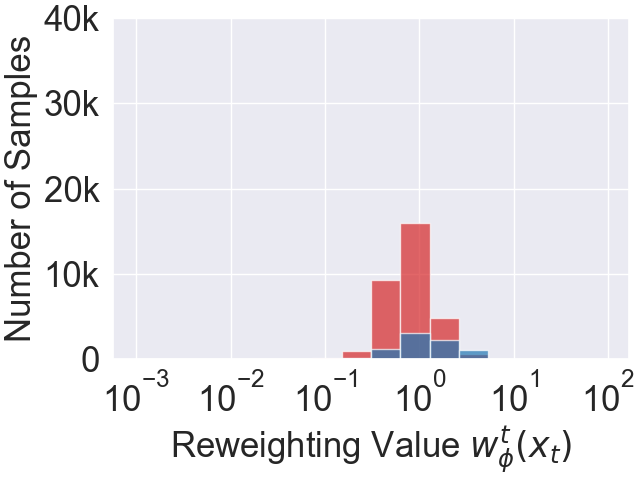}
         \caption{$\sigma(t)=1.2$}
     \end{subfigure}
     \begin{subfigure}[b]{0.24\textwidth}
         \centering
         \includegraphics[width=\textwidth, height=1.0in]{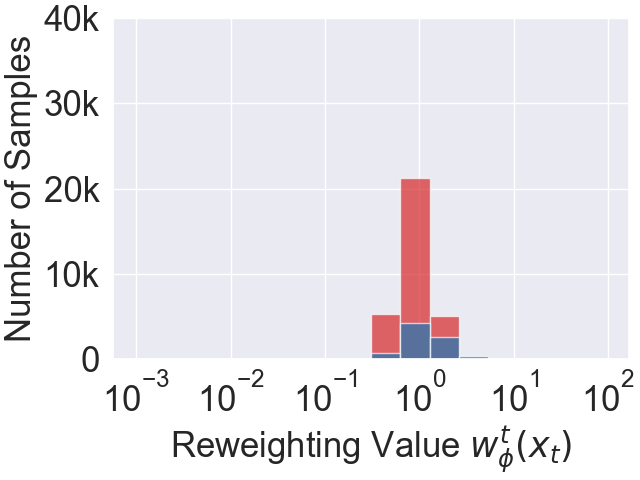}
         \caption{$\sigma(t)=2.0$}
     \end{subfigure}
     \hfill
     \begin{subfigure}[b]{0.24\textwidth}
         \centering
         \includegraphics[width=\textwidth, height=1.0in]{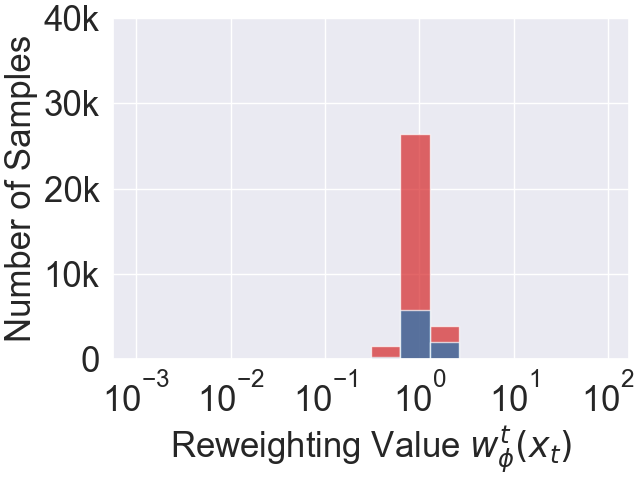}
         \caption{$\sigma(t)=3.4$}
     \end{subfigure}
     \hfill
        \label{fig:moredre}
    \begin{subfigure}[b]{0.24\textwidth}
         \centering
         \includegraphics[width=\textwidth, height=1.0in]{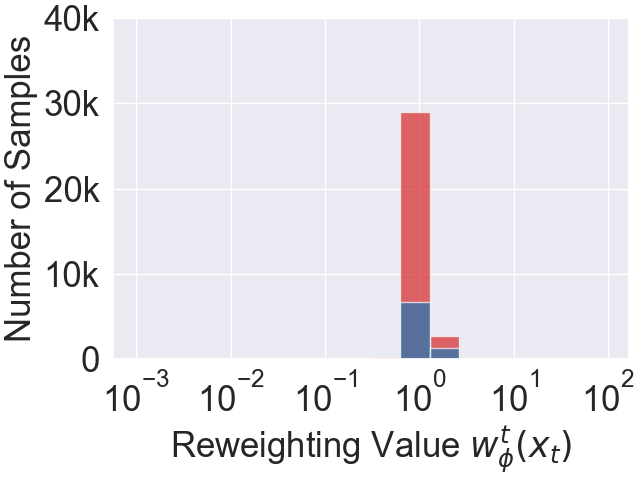}
         \caption{$\sigma(t)=6.0$}
     \end{subfigure}
     \hfill
     \begin{subfigure}[b]{0.24\textwidth}
         \centering
         \includegraphics[width=\textwidth, height=1.0in]{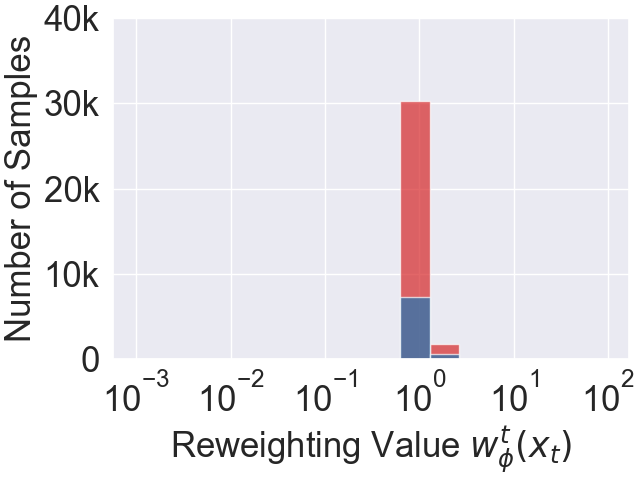}
         \caption{$\sigma(t)=12\sim80$}
     \end{subfigure}
     \hfill
        \caption{Density ratio analysis on FFHQ (Gender 90\%, 12.5\%) on various diffusion time for $\mathcal{D}_{\text{bias}}$.}
        \label{fig:moredre2}
\end{figure}

\begin{figure}[h!]
    \hfill
     \centering
     \begin{subfigure}[b]{0.24\textwidth}
         \centering
         \includegraphics[width=\textwidth, height=1.0in]{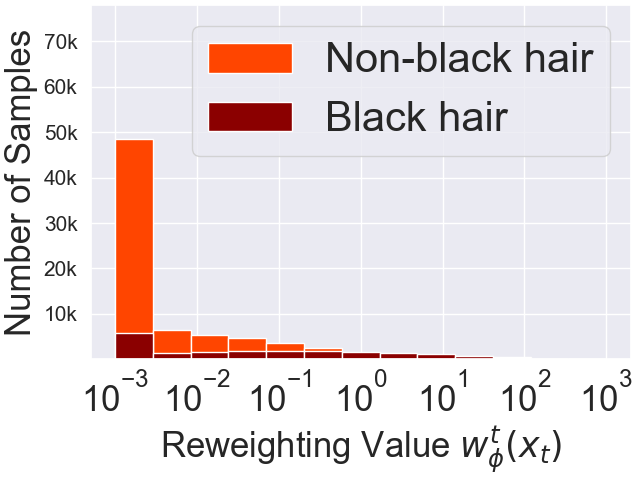}
         \caption{$\sigma(t)=0$}
     \end{subfigure}
     \hfill
     \begin{subfigure}[b]{0.24\textwidth}
         \centering
         \includegraphics[width=\textwidth, height=1.0in]{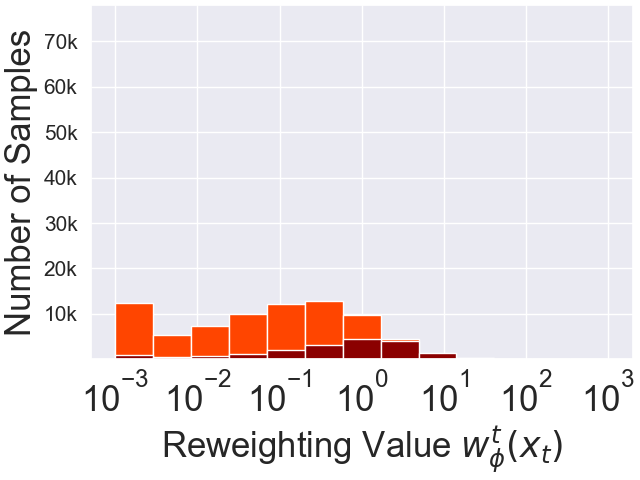}
         \caption{$\sigma(t)=0.34$}
     \end{subfigure}
     \hfill
        \label{fig:moredre}
    \begin{subfigure}[b]{0.24\textwidth}
         \centering
         \includegraphics[width=\textwidth, height=1.0in]{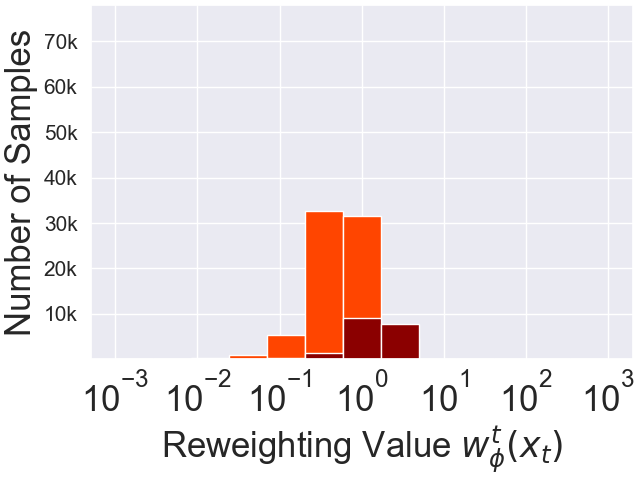}
         \caption{$\sigma(t)=0.72$}
     \end{subfigure}
     \hfill
     \begin{subfigure}[b]{0.24\textwidth}
         \centering
         \includegraphics[width=\textwidth, height=1.0in]{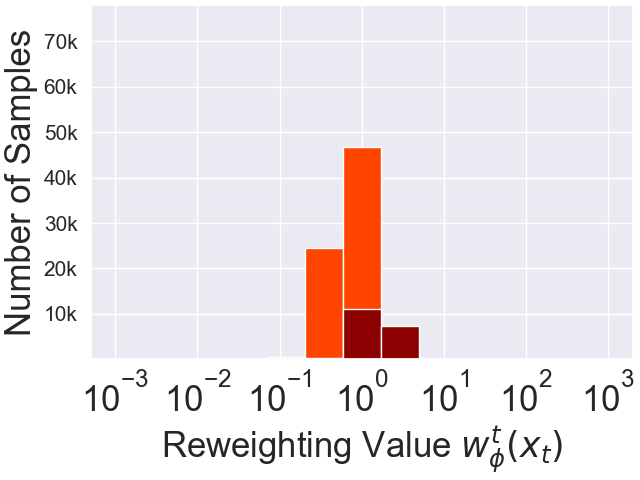}
         \caption{$\sigma(t)=1.2$}
     \end{subfigure}
     \begin{subfigure}[b]{0.24\textwidth}
         \centering
         \includegraphics[width=\textwidth, height=1.0in]{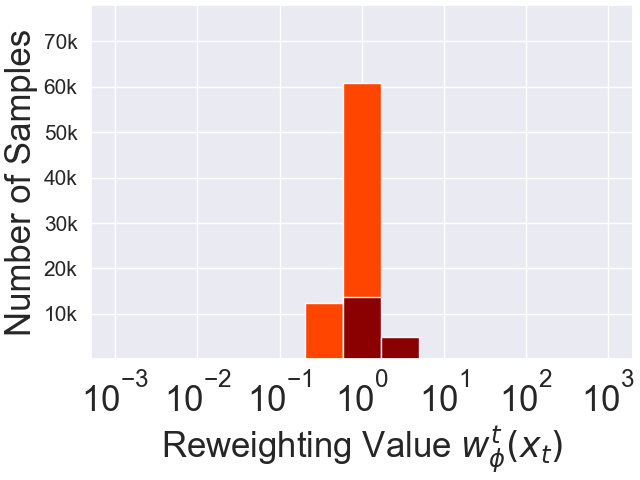}
         \caption{$\sigma(t)=2.0$}
     \end{subfigure}
     \hfill
     \begin{subfigure}[b]{0.24\textwidth}
         \centering
         \includegraphics[width=\textwidth, height=1.0in]{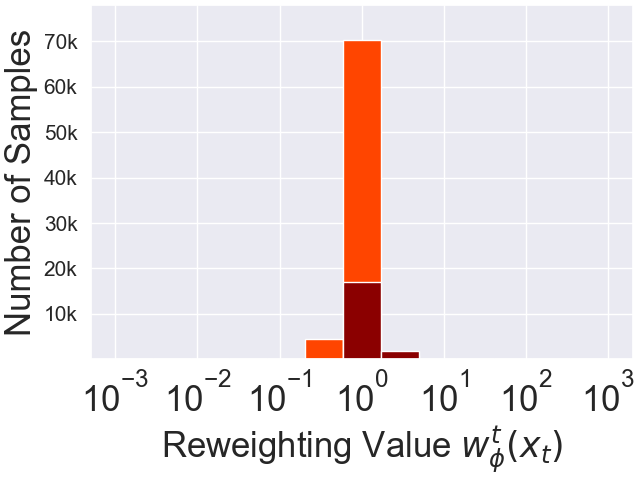}
         \caption{$\sigma(t)=3.4$}
     \end{subfigure}
     \hfill
        \label{fig:moredre}
    \begin{subfigure}[b]{0.24\textwidth}
         \centering
         \includegraphics[width=\textwidth, height=1.0in]{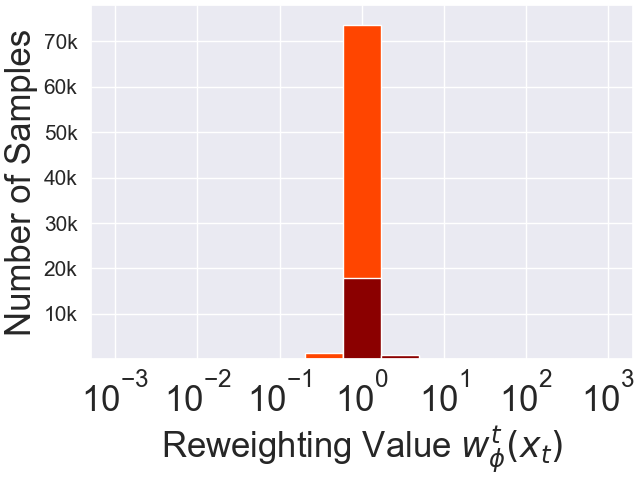}
         \caption{$\sigma(t)=6.0$}
     \end{subfigure}
     \hfill
     \begin{subfigure}[b]{0.24\textwidth}
         \centering
         \includegraphics[width=\textwidth, height=1.0in]{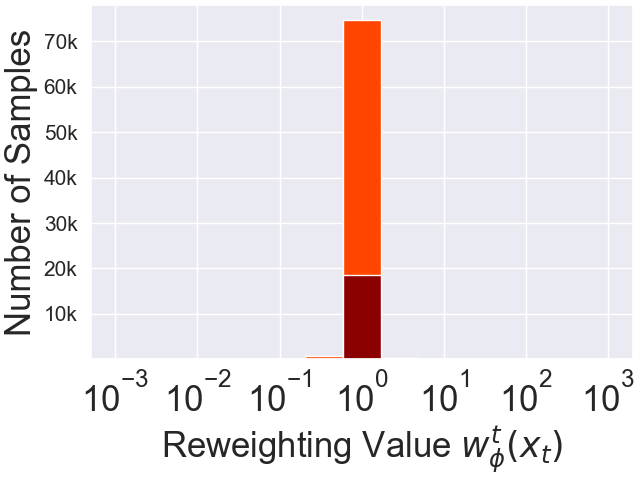}
         \caption{$\sigma(t)=12\sim80$}
     \end{subfigure}
     \hfill
        \caption{Density ratio analysis on CelebA (Benchmark, 5\%) on various diffusion times. This figure only consider female in $\mathcal{D}_{\text{bias}}$.}
        \label{fig:moredre3}
\end{figure}
\begin{figure}[h!]
    \hfill
     \centering
     \begin{subfigure}[b]{0.24\textwidth}
         \centering
         \includegraphics[width=\textwidth, height=1.0in]{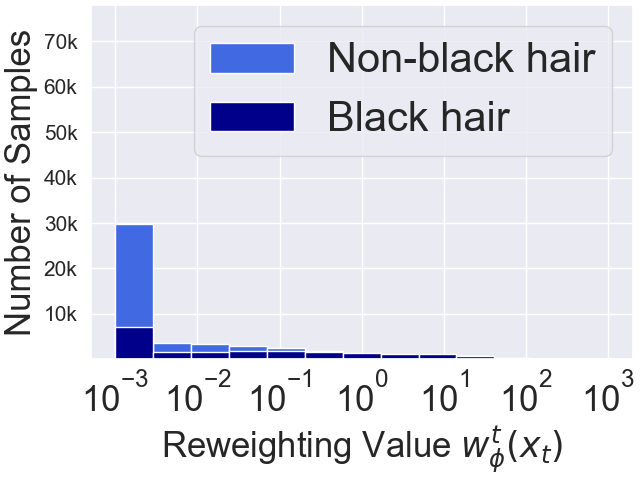}
         \caption{$\sigma(t)=0$}
     \end{subfigure}
     \hfill
     \begin{subfigure}[b]{0.24\textwidth}
         \centering
         \includegraphics[width=\textwidth, height=1.0in]{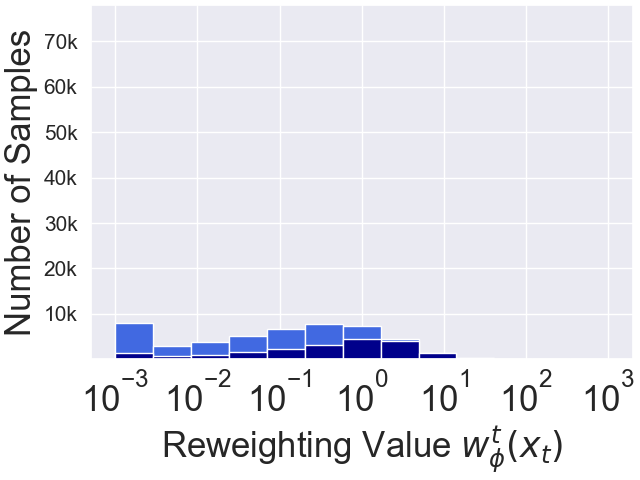}
         \caption{$\sigma(t)=0.34$}
     \end{subfigure}
     \hfill
        \label{fig:moredre}
    \begin{subfigure}[b]{0.24\textwidth}
         \centering
         \includegraphics[width=\textwidth, height=1.0in]{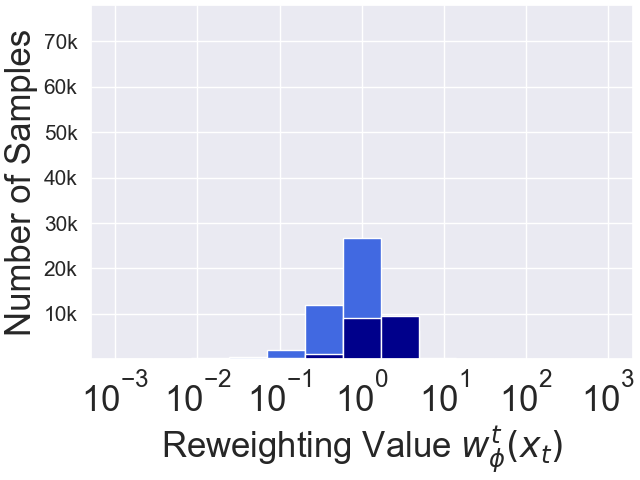}
         \caption{$\sigma(t)=0.72$}
     \end{subfigure}
     \hfill
     \begin{subfigure}[b]{0.24\textwidth}
         \centering
         \includegraphics[width=\textwidth, height=1.0in]{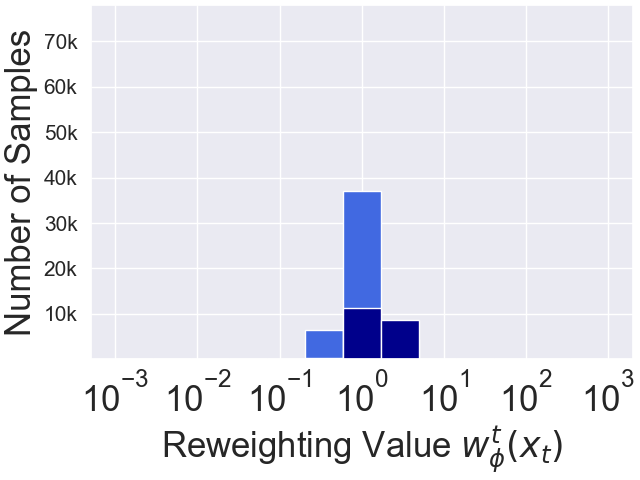}
         \caption{$\sigma(t)=1.2$}
     \end{subfigure}
     \begin{subfigure}[b]{0.24\textwidth}
         \centering
         \includegraphics[width=\textwidth, height=1.0in]{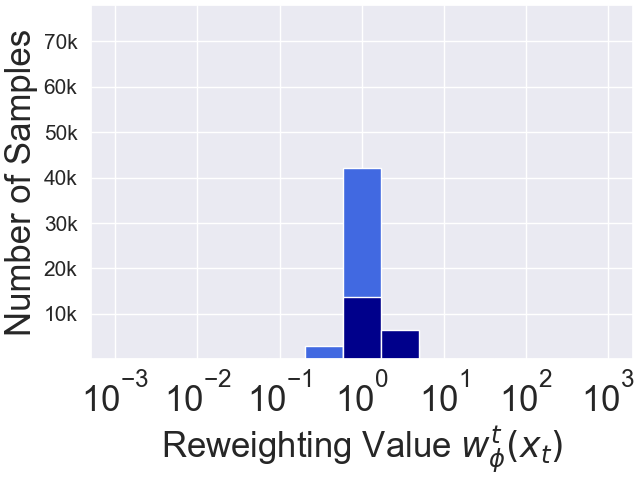}
         \caption{$\sigma(t)=2.0$}
     \end{subfigure}
     \hfill
     \begin{subfigure}[b]{0.24\textwidth}
         \centering
         \includegraphics[width=\textwidth, height=1.0in]{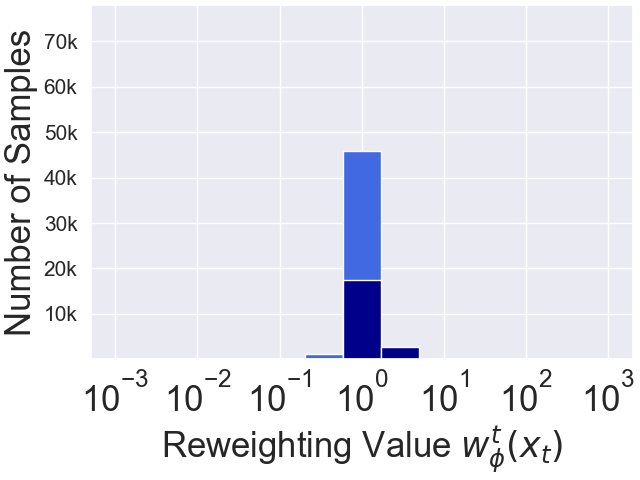}
         \caption{$\sigma(t)=3.4$}
     \end{subfigure}
     \hfill
        \label{fig:moredre}
    \begin{subfigure}[b]{0.24\textwidth}
         \centering
         \includegraphics[width=\textwidth, height=1.0in]{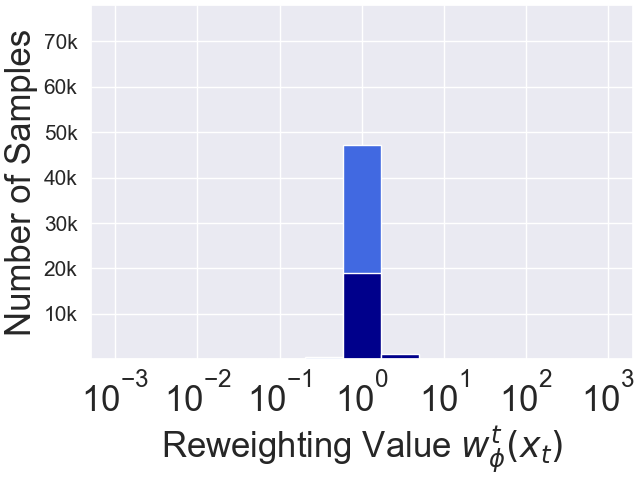}
         \caption{$\sigma(t)=6.0$}
     \end{subfigure}
     \hfill
     \begin{subfigure}[b]{0.24\textwidth}
         \centering
         \includegraphics[width=\textwidth, height=1.0in]{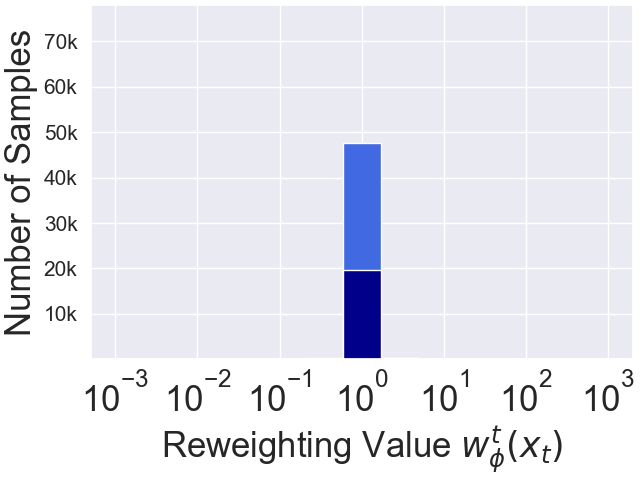}
         \caption{$\sigma(t)=12\sim80$}
     \end{subfigure}
     \hfill
         \caption{Density ratio analysis on CelebA (Benchmark, 5\%) on various diffusion times. This figure only consider male in $\mathcal{D}_{\text{bias}}$.}
        \label{fig:moredre4}
\end{figure}

\begin{figure}[h!]
    \hfill
     \centering
     \begin{subfigure}[b]{0.243\textwidth}
         \centering
         \includegraphics[width=\textwidth, height=1.0in]{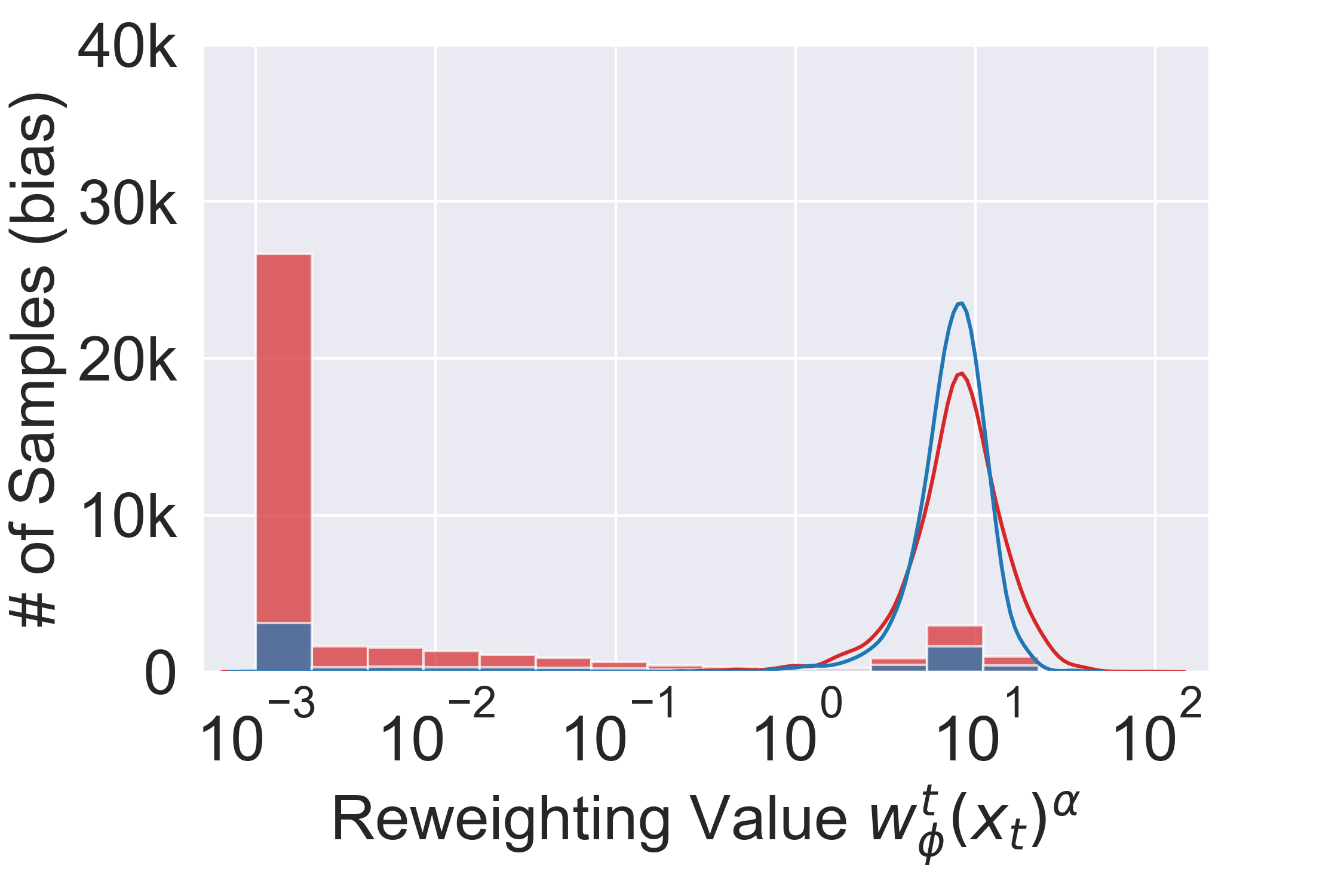}
         \caption{$\sigma(t)=0, \alpha=1$}
     \end{subfigure}
     \hfill
     \begin{subfigure}[b]{0.243\textwidth}
         \centering
         \includegraphics[width=\textwidth, height=1.0in]{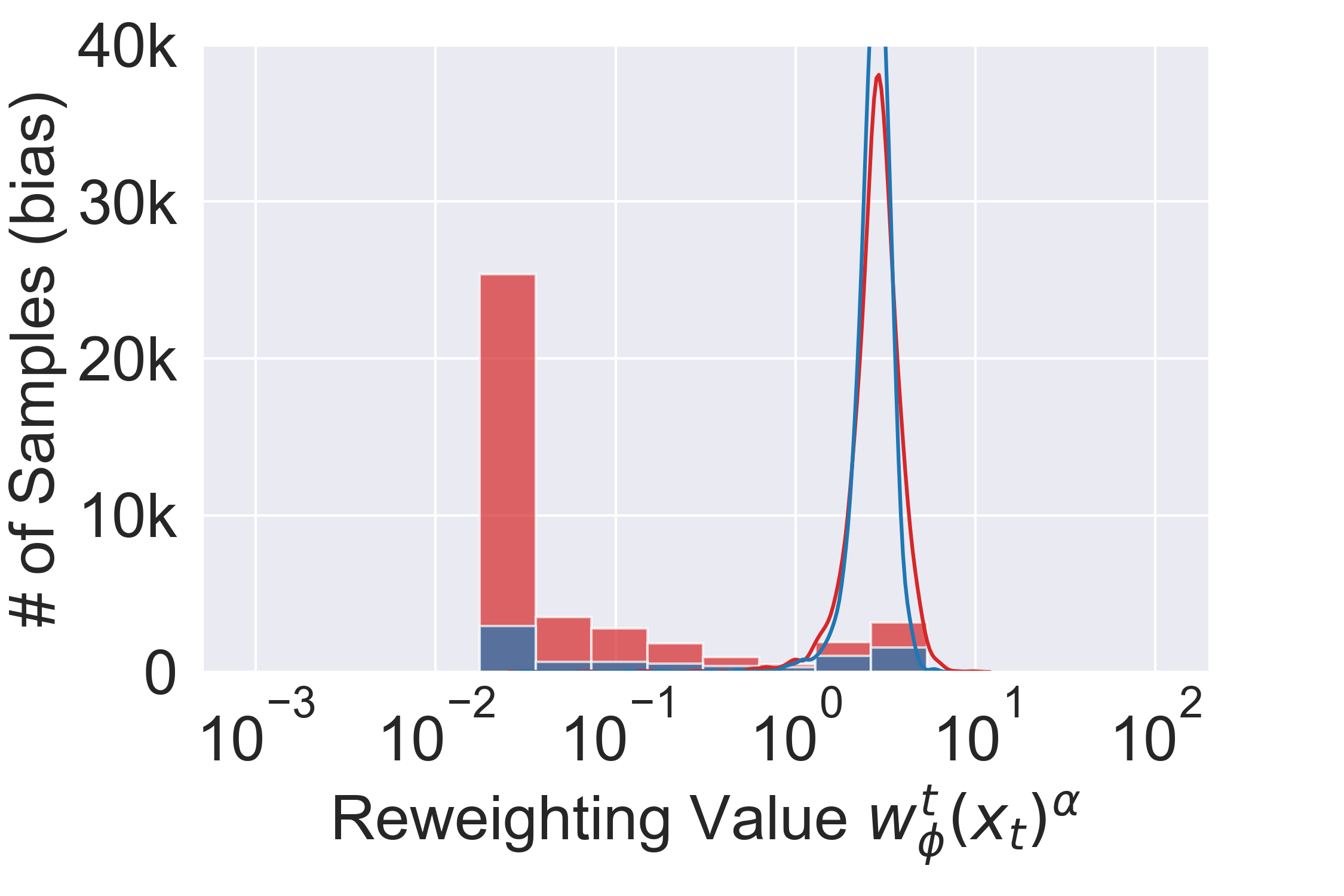}
         \caption{$\sigma(t)=0, \alpha=0.5$}
     \end{subfigure}
     \hfill
        \label{fig:moredre}
    \begin{subfigure}[b]{0.243\textwidth}
         \centering
         \includegraphics[width=\textwidth, height=1.0in]{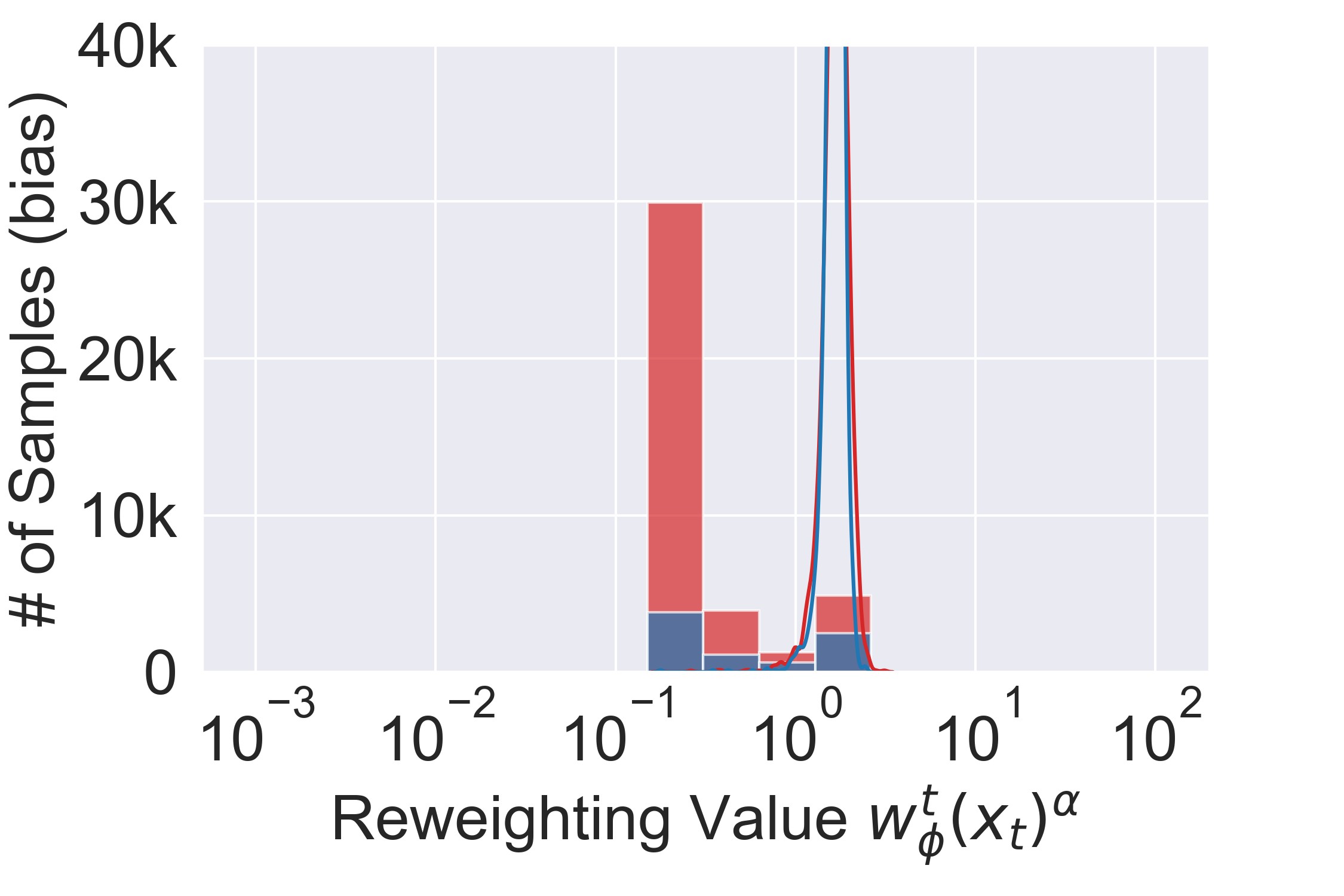}
         \caption{$\sigma(t)=0, \alpha=0.25$}
     \end{subfigure}
     \hfill
     \begin{subfigure}[b]{0.243\textwidth}
         \centering
         \includegraphics[width=\textwidth, height=1.0in]{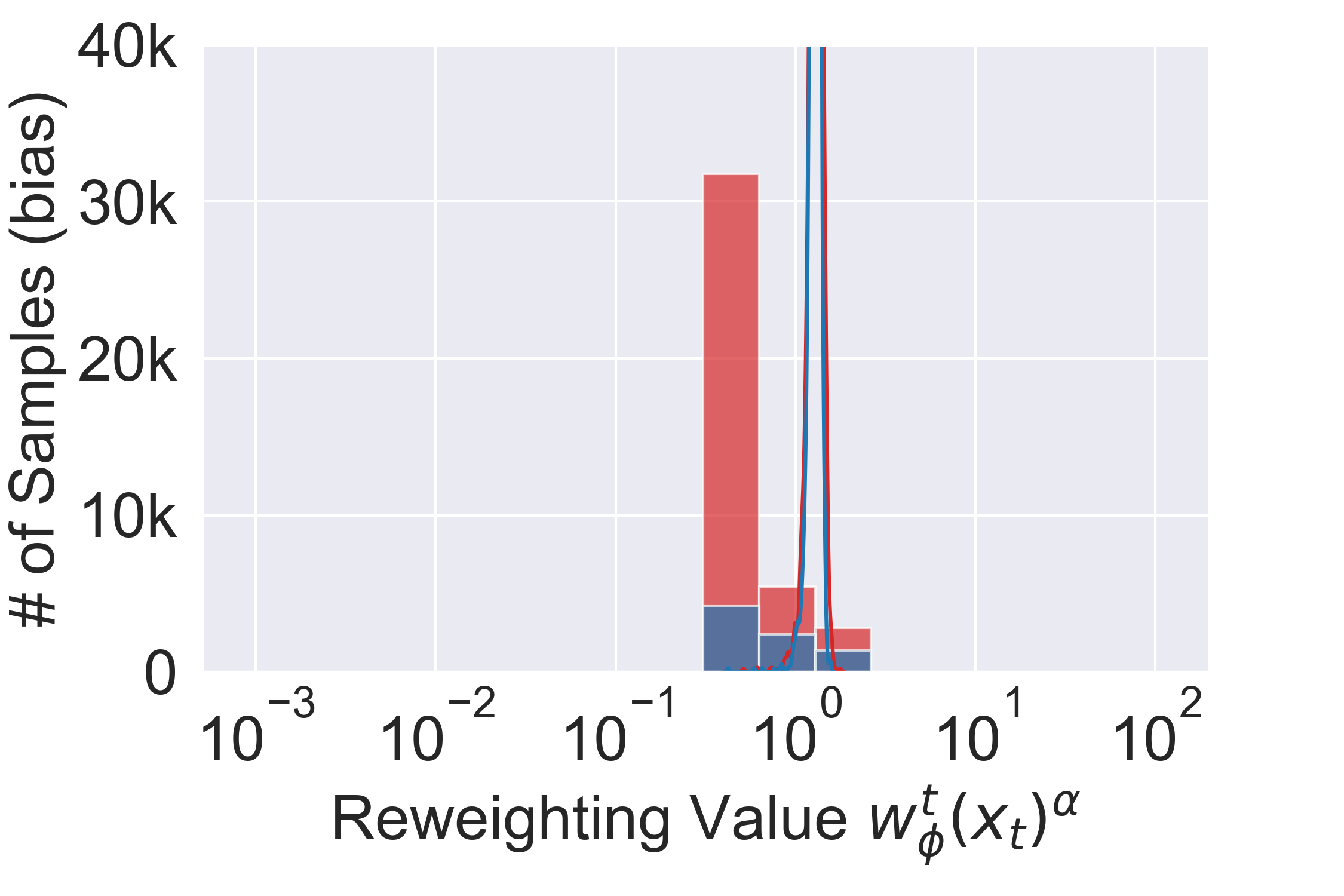}
         \caption{$\sigma(t)=0, \alpha=0.125$}
     \end{subfigure}
     \hfill
     \hfill
         \caption{Density ratio analysis on FFHQ (Gender 80\%, 12.5\%) on zero diffusion time with ratio scaling for $\mathcal{D}_{\text{bias}}$ (box plot) and $\mathcal{D}_{\text{ref}}$ (density plot).}
        \label{fig:moredre5}
\end{figure}
\newpage
\quad
\newpage
\quad
\newpage
\quad
\newpage
\quad
\newpage
\quad
\newpage
\subsection{Effects of discriminator accuracy}
\begin{figure}[h!]
    \hfill
     \centering
     \begin{subfigure}[b]{0.49\textwidth}
         \centering
         \includegraphics[width=\textwidth, height=1.7in]{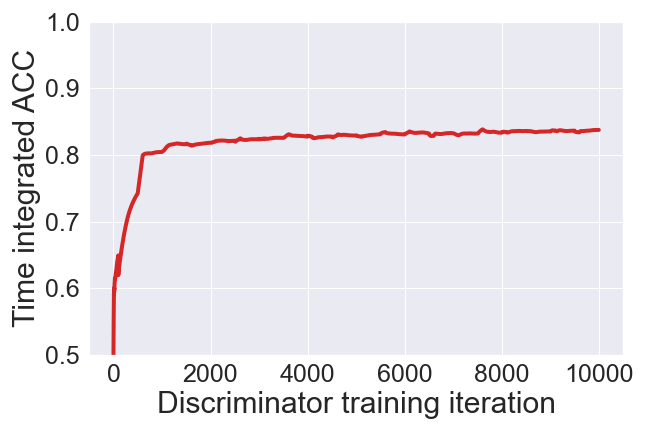}
         \caption{Time-integrated accuracy of discriminator}
         \label{fig:discriminator_analysis_1}
     \end{subfigure}
     \hfill
     \begin{subfigure}[b]{0.49\textwidth}
         \centering
         \includegraphics[width=\textwidth, height=1.7in]
         {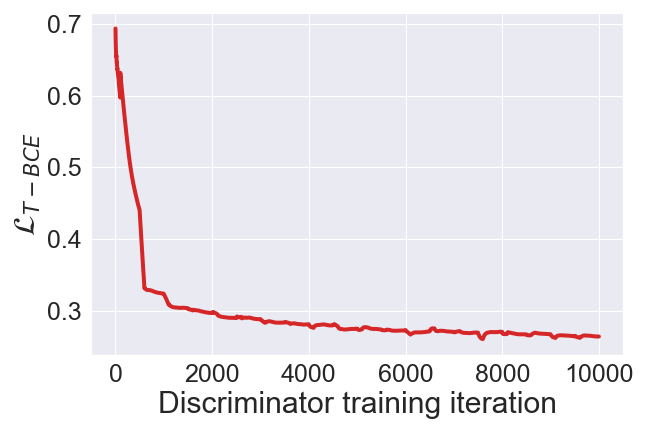}
         \caption{Time-integrated loss of discriminator}
         \label{fig:discriminator_analysis_2}
     \end{subfigure}
     \hfill
     \begin{subfigure}[b]{0.49\textwidth}
         \centering
         \includegraphics[width=\textwidth, height=1.7in]{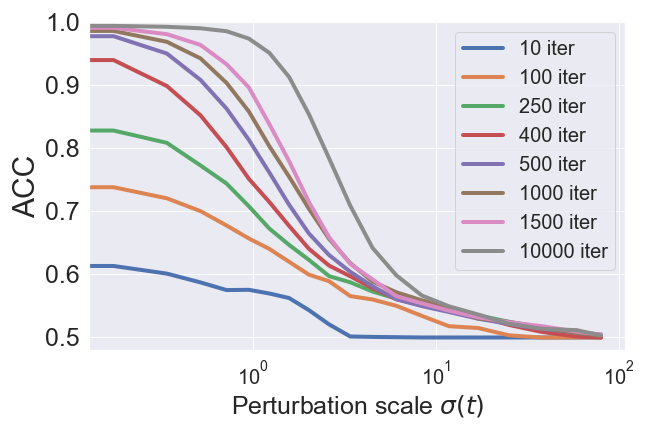}
         \caption{Discriminator accuracy according to the time}
         \label{fig:discriminator_analysis_3}
     \end{subfigure}
     \hfill
     \begin{subfigure}[b]{0.49\textwidth}
         \centering
         \includegraphics[width=\textwidth, height=1.7in]{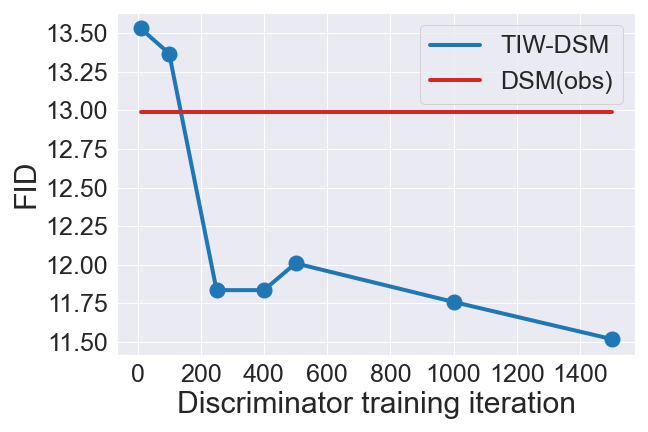}
         \caption{FID according to discriminator learning progress}
         \label{fig:discriminator_analysis_4}
     \end{subfigure}
     \hfill
     \caption{Effects of discriminator accuracy on diffusion model training in CIFAR-10 (LT / 5\%) experiments. The maturity of the time-dependent discriminator directly influences the performance of the diffusion model.}
     \label{fig:discriminator_analysis}
\end{figure}

We analyze the learning progress of the time-dependent discriminator and its correlation with the diffusion model's performance. \Cref{fig:discriminator_analysis_1,fig:discriminator_analysis_2} show the time-integrated accuracy and time-integrated loss value according to the discriminator training iteration. Note that perfect discrimination in terms of accuracy is impossible at a large perturbation scale. \Cref{fig:discriminator_analysis_3} shows the accuracy according to $\sigma(t)$. As the training of time-dependent discriminator matures, accuracy improves for all perturbation scales.

Our objective, $\mathcal{L}_{\text{TIW-DSM}}(\boldsymbol{\theta};p_{\text{bias}}, w_{\boldsymbol{\phi}^*}^t(\cdot))$ assumes an optimal time-dependent discriminator, so analysis on maturity of the discriminator is important. \Cref{fig:discriminator_analysis_4} shows the performance when training TIW-DSM using a less-trained discriminator. In the very early stages, the discriminator provides signals that are completely off, resulting in worse performance compared to DSM(obs). However, as it undergoes some level of training, it progressively enhances the performance of the diffusion model. The time-dependent discriminator with 1.5k iterations shows an FID of 11.52, which is nearly close to the reported value in the \Cref{tab:main1} of 11.51 obtained with 10k iterations. Note that the training 10k iterations of the discriminator only takes 30 minutes with 1 RTX 4090.

\newpage
\subsection{Comparison to the guidance method}
\label{appendix:E.6}
A direct quantitative comparison with \cite{friedrich2023fair} is infeasible because their approach is not based on a weak supervision setting. However, their method is based on the commonly used guidance method in the diffusion model, and it is possible to adapt the spirit of that method to our weak supervision scenario.

The unbiased data score $\nabla \log p_{\text{data}}(\rvx_t)$ can be represented by \cref{eq:52}, and it can be approximated with two neural networks as in \cref{eq:53}. $\alpha=1$ for an ideal scenario, but it is usually adjusted for better performance. This is a similar mechanism in \Cref{subsec:4.3}: Density ratio scaling. \Cref{tab:8} and \Cref{fig:fair_diffusion} compare the guidance method and proposed method by adjusting $\alpha$. Note that the guidance method requires the evaluation of 2 neural networks for one denoising step, resulting in slow sampling.

\begin{align}
\label{eq:52}
\nabla \log p_{\text{data}}(\rvx_t) &= \nabla \log p_{\text{bias}}(\rvx_t) + \nabla \log \frac{p_{\text{bias}}(\rvx_t)}{p_{\text{data}}(\rvx_t)} \\ \label{eq:53}
&\approx \mathbf{s}_{\boldsymbol{\theta}}(\rvx_t,t) + \alpha \nabla \log \frac{d_{\boldsymbol{\phi}}(\rvx_t,t)}{1-d_{\boldsymbol{\phi}}(\rvx_t,t)} 
\end{align}

\begin{table}[h]
    \centering
    \caption{The comparison between the guidance method (Fair-Diffusion) and TIW-DSM for CIFAR-10 (LT / 5\%) experiments.}
    \begin{tabular}{c|ccc}
        \toprule
             & DSM(obs)& Fair-Diffusion & TIW-DSM \\
        \midrule
           FID ($\alpha=1$) &12.99 & 12.55& \textbf{11.51} \\
             FID (optimal $\alpha$)&12.99 & 12.15& \textbf{11.51} \\
           Sampling time& \textbf{7.5} Minute / 50k & 20.85 Minute / 50k & \textbf{7.5} Minute / 50k \\
        \bottomrule
    \end{tabular}
    \label{tab:8}
    \hfill
\end{table}

\begin{figure}[h!]
     \centering
     \includegraphics[width=0.5\textwidth, height=1.7in]{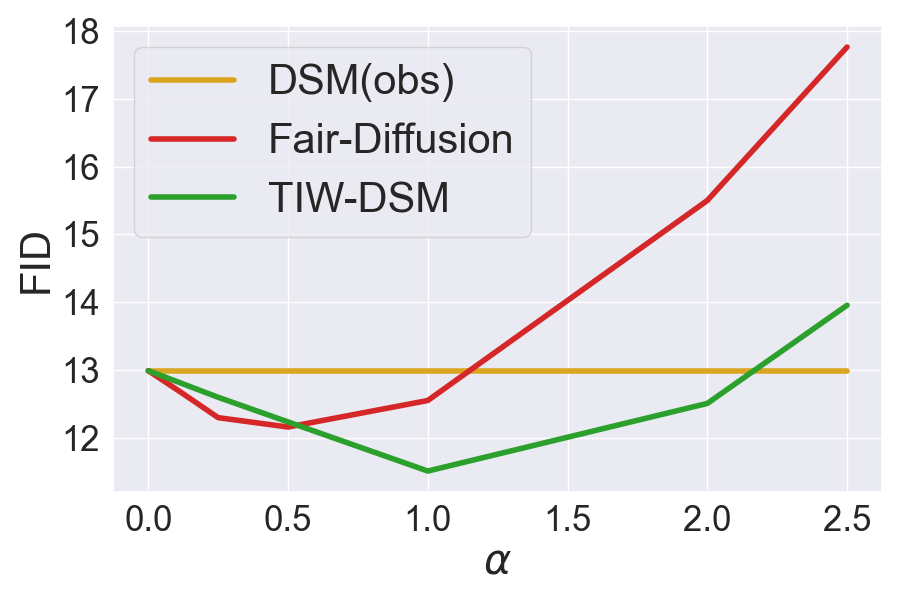}
     \caption{Comparison to guidance method (Fair-Diffusion) by adjusting $\alpha$ for CIFAR-10 (LT / 5\%) experiments.}
     \label{fig:fair_diffusion}
\end{figure}
\newpage

\subsection{Objective function interpolation}
The time-dependent density ratio used in TIW-DSM is more precise than the time-independent density ratio, in the integration sense. However, the time-marginal density ratio from $w_{\boldsymbol{\phi}^*}^t(\cdot)$ remains inaccurate for small diffusion time. One attractive direction is utilizing vanilla objective $\mathcal{L}_{\text{DSM}}(\boldsymbol{\theta}; p_{\text{bias}})$ for small diffusion time. Small diffusion time is known to be oriented towards denoising rather than addressing semantic information~\citep{rombach2022high,xu2023stable}, so objective interpolation is worth exploring.

We experimented with the preliminary approach, which interpolates the objectives as \cref{eq:interpole}. Note that $\sigma(\tau)=0$ indicates the original TIW-DSM objective, and $\sigma(\tau)=80$ indicates the vanilla DSM objective.
\begin{align}
\label{eq:interpole}
    &\mathcal{L}_{\text{Interpolate}}(\boldsymbol{\theta};p_{\text{bias}},w_{\boldsymbol{\phi}^*}^t(\cdot),\tau) \\ \nonumber
    & := \frac{1}{2}\int_{0}^{\tau} \mathbb{E}_{p_{\text{bias}}(\rvx_0)}\mathbb{E}_{p(\rvx_t|\rvx_0)}\bigg[\lambda(t)\big[||s_{\boldsymbol{\theta}}(\rvx_t,t)-\nabla \log p(\rvx_t|\rvx_0)||_2^2 )\big]\bigg]   \mathrm{d}t \nonumber\\
    & + \frac{1}{2}\int_{\tau}^T \mathbb{E}_{p_{\text{bias}}(\rvx_0)}\mathbb{E}_{p(\rvx_t|\rvx_0)}\bigg[\lambda(t)w_{\boldsymbol{\phi}^*}^t(\rvx_t)\big[||s_{\boldsymbol{\theta}}(\rvx_t,t)-\nabla \log p(\rvx_t|\rvx_0)- \nabla \log w_{\boldsymbol{\phi}^*}^t(\rvx_t)||_2^2 )\big]\bigg]   \mathrm{d}t \nonumber
\end{align}

Contrary to intuition, the result in \Cref{fig:interpolation} shows that it does not improve the performance but rather smoothly interpolates between two objectives. We suspect that a hard truncation between the objectives may not be the best choice. We consider the gradual change of the objective according to time as a future work. 

\begin{figure}[h!]
     \centering
     \includegraphics[width=0.5\textwidth, height=1.7in]{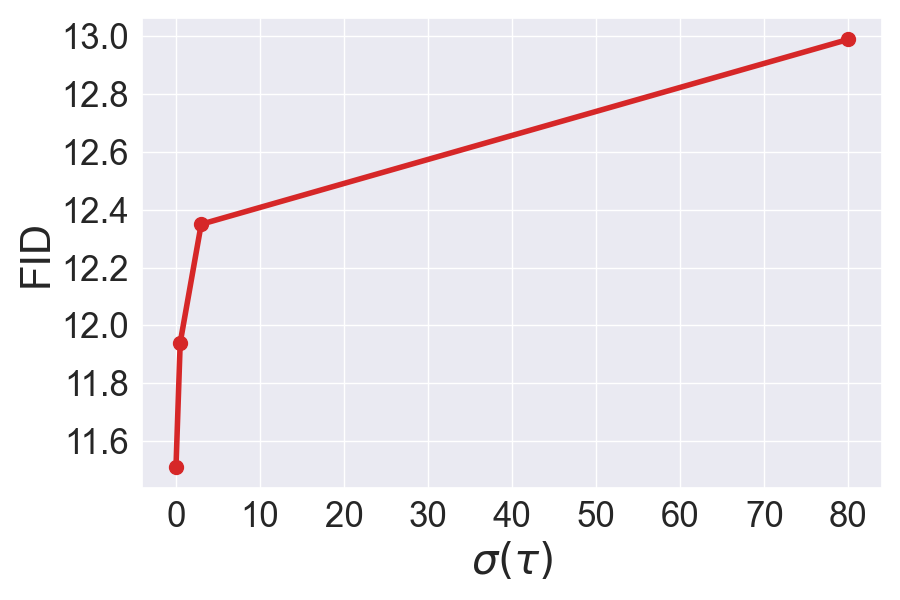}
     \caption{Objective interpolation according to $\sigma(\tau)$ in CIFAR-10 (LT / 5\%) experiments.}
     \label{fig:interpolation}
\end{figure}

\newpage
\subsection{Fine tuning Stable Diffusion}
The existing large-scale text-to-image diffusion model suffers from serious bias~\citep{SD2022bias}. For example, if you type ``nurse'' in the prompt, only female nurses appear, as shown in \Cref{fig:nurse}. Our method involves mitigating latent bias, so we can consider a scenario where we mitigate gender as a latent bias. We obtained a reference dataset of approximately 50 images  and fine-tuned Stable Diffusion~\citep{rombach2022high} for the ``nurse'' prompt using the framework in \cite{ruiz2023dreambooth} with our objective TIW-DSM. The fine-tuned Stable Diffusion successfully generated a male nurse as shown in \Cref{fig:nurse_ft}.

We consider this to be the primary result of applying our objective to text-to-image diffusion models. In addition to fine-tuning, this approach can be applied to training a text-to-image model from scratch. Constructing a reference set for (prompt, bias) pairs deemed important by society and applying our objective during training, should enable a relatively fair generation.

\begin{figure}[h!]
     \centering
     \includegraphics[width=0.9\textwidth, height=1.5in]{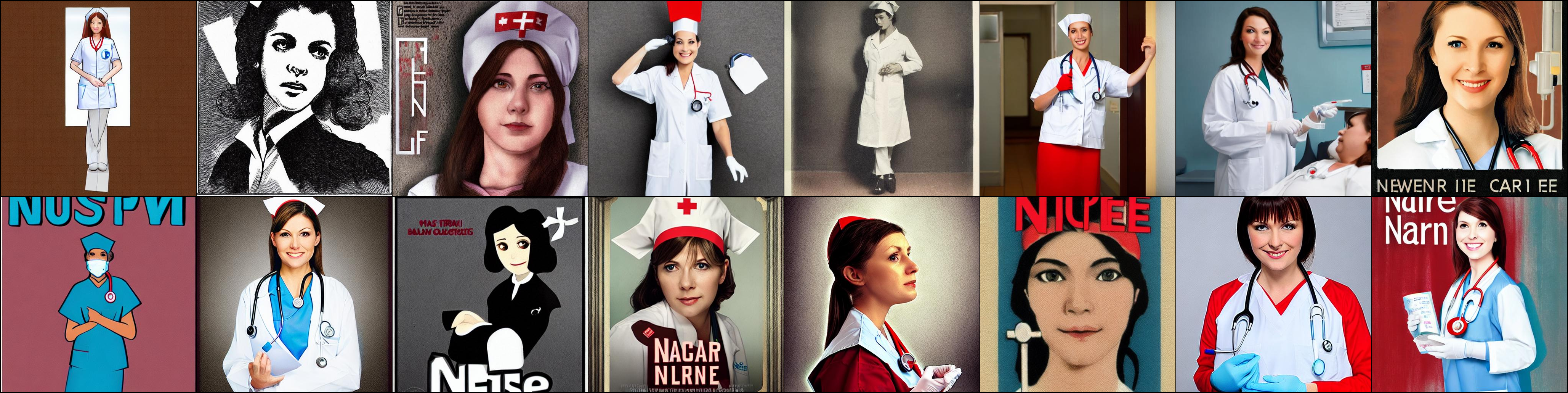}
     \caption{Samples from Stable Diffusion with prompt ``nurse''}
     \label{fig:nurse}
\end{figure}

\begin{figure}[h!]
     \centering
     \includegraphics[width=0.9\textwidth, height=1.5in]{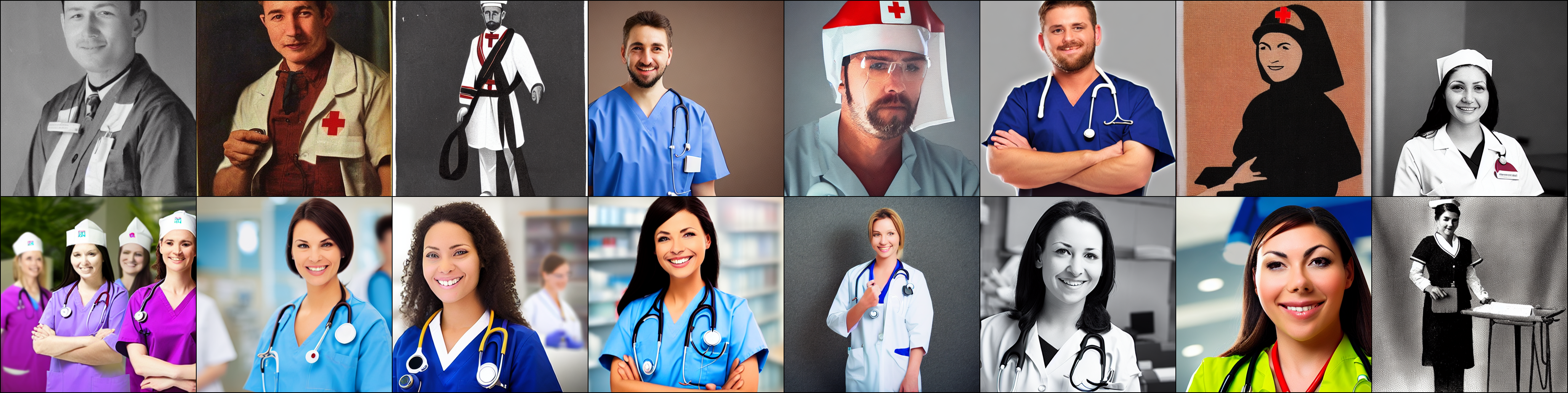}
     \caption{Samples from fine-tuned Stable Diffusion on prompt ``nurse'' with TIW-DSM.}
     \label{fig:nurse_ft}
\end{figure}

\newpage
\subsection{Data augmentation with Stable Diffusion}
The baseline DSM(ref) does not exhibit good performance, because it suffers from a limited number of $\mathcal{D}_{\text{ref}}$, leading to poor diversity of generated samples. One consideration is to request Stable Diffusion to generate unbiased samples and use them in conjunction with $\mathcal{D}_{\text{ref}}$. We request Stable Diffusion to generate 500 samples with the prompt ``a photo of man'' and another 500 samples with the prompt ``a photo of woman'' and resizing them to fit our experiment setting in FFHQ as shown in \Cref{fig:SD_man,fig:SD_woman}.

\begin{figure}[h!]
     \centering
     \includegraphics[width=0.98\textwidth, height=0.6in]{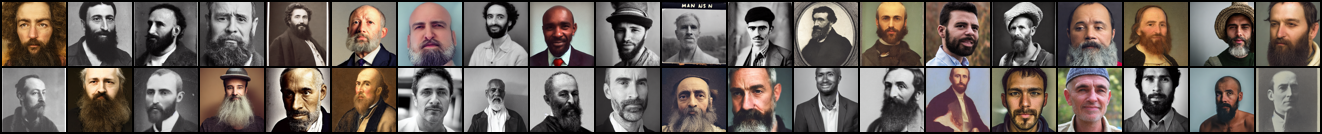}
     \caption{Samples from Stable Diffusion with prompt ``a photo of man''}
     \label{fig:SD_man}
\end{figure}

\begin{figure}[h!]
     \centering
     \includegraphics[width=0.98\textwidth, height=0.6in]{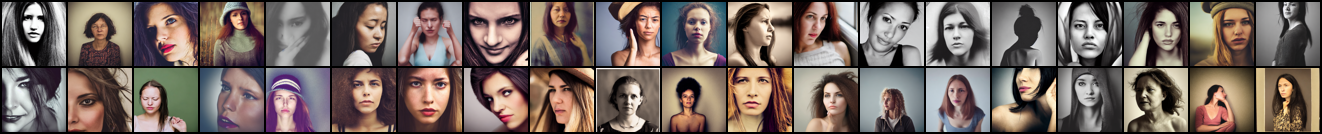}
     \caption{Samples from Stable Diffusion with prompt ``a photo of woman''}
     \label{fig:SD_woman}
\end{figure}

\Cref{tab:SD_aug} presents indirect results quantifying the data augmentation method using Stable Diffusion. The method SD represents the performance of directly generated samples from Stable Diffusion. It's noteworthy that SD exhibits poor performance at 95.63, primarily because unannotated biases such as age, and race are not controlled. On the other hand, DSM(ref) and TIW-DSM utilize statistics from balanced reference data, which is free from unannotated bias without point-wise supervision. This result underscores the reason why we should not rely solely on a large-scale foundation model. DSM(ref) + SD indicates the half of generated samples from  DSM(ref) and the other half of the samples from SD. This can be considered as a performance similar to vanilla DSM training with $\mathcal{D}_{\text{ref}}$ and generated samples from SD. The performance of DSM(ref)+SD is poor due to a serious bias in Stable Diffusion samples.

\begin{table}[h]
    \centering
    \caption{The effects of data augmentation with Stable Diffusion for FFHQ (80\% / 12.5\%) experiments.}
    \begin{tabular}{c|ccc}
        \toprule
            Method & FID 50k & FID 1k \\
        \midrule
           DSM(ref) &6.22 & 21.87 \\
           SD &- & 95.63 \\
           DSM(ref) + SD& -& 43.57 \\
        \midrule
        TIW-DSM& \textbf{4.49}& \textbf{20.39} \\
        \bottomrule
    \end{tabular}
    \label{tab:SD_aug}
    \hfill
\end{table}

\end{document}